%% file: iclr2027_conference.tex
\documentclass{article} %
\usepackage{iclr2027_conference,times}

\input{math_commands.tex}

\usepackage{xcolor}
\definecolor{cmbpurple}{RGB}{123,50,148} %
\usepackage{xspace}
\newcommand{\CLIMB}{CLIMB\xspace}
\usepackage{booktabs}
\usepackage{colortbl}
\usepackage{multirow}
\usepackage{makecell}
\usepackage{graphicx}
\usepackage{float}
\usepackage{subcaption}

\usepackage{tikz}
\usetikzlibrary{arrows.meta,positioning,shapes.geometric,calc,fit,backgrounds}
\usepackage{algorithm}
\usepackage{algpseudocode}
\definecolor{okgreen}{RGB}{46,139,87}
\definecolor{badorange}{RGB}{230,145,56}
\definecolor{badred}{RGB}{200,60,70}
\usepackage{hyperref}
\usepackage{url}
\usepackage{amsthm}

\usepackage{amsmath,amssymb,amsthm}

\theoremstyle{plain}
\newtheorem{proposition}{Proposition}
\newtheorem{lemma}{Lemma}

\theoremstyle{definition}
\newtheorem{definition}{Definition}
\newtheorem{assumption}{Assumption}

\theoremstyle{remark}

\usepackage{listings}
\lstdefinestyle{prompt}{%
  basicstyle=\ttfamily\scriptsize,
  breaklines=true,
  columns=flexible,
  keepspaces=true,
  showstringspaces=false,
  frame=single,
  rulecolor=\color{black!30},
  backgroundcolor=\color{black!4},
  xleftmargin=6pt,
  xrightmargin=2pt,
  aboveskip=3pt,
  belowskip=8pt,
}

\newif\ifcommentson
\commentsonfalse %

\usepackage{verbatim}

\title{\raggedright\hyphenpenalty=10000 \textbf{CLIMB}: A \textbf{CLI}nical \textbf{M}ultimorbidity \textbf{B}enchmark for diagnosing co-occurring conditions through multiturn conversations}

\author{
\mbox{Yusuf Kesmen$^{1}$\thanks{Corresponding author: \texttt{yusuf.kesmen@epfl.ch}} \quad
Aniruddha Mukherjee$^{1}$\thanks{Equal contribution.} \quad
Yena Chang$^{1\dagger}$} \\
\mbox{\textbf{David Sasu}$^{1}$ \quad
\textbf{Trevor Brokowski}$^{1}$ \quad
\textbf{Alexandra V.~Kulinkina}$^{1}$ \quad
\textbf{Kristina Keitel}$^{1}$} \\
\mbox{\textbf{Akhil Arora}$^{2}$\thanks{Equal supervision.} \quad
\textbf{Lars Henning Klein}$^{1\ddagger}$ \quad
\textbf{Mary-Anne Hartley}$^{1\ddagger}$} \\[5pt]
\mbox{$^{1}$EPFL \quad
$^{2}$Aarhus University}
}

\iclrfinalcopy %
\begin{document}

\maketitle

\begin{abstract}
\input{sections/abstract.tex}

\end{abstract}

\input{sections/introduction.tex}

\begin{figure}[t]
    \centering
    \includegraphics[width=0.82\linewidth, trim={0cm 2.6cm 1.5cm 1cm},
  clip]{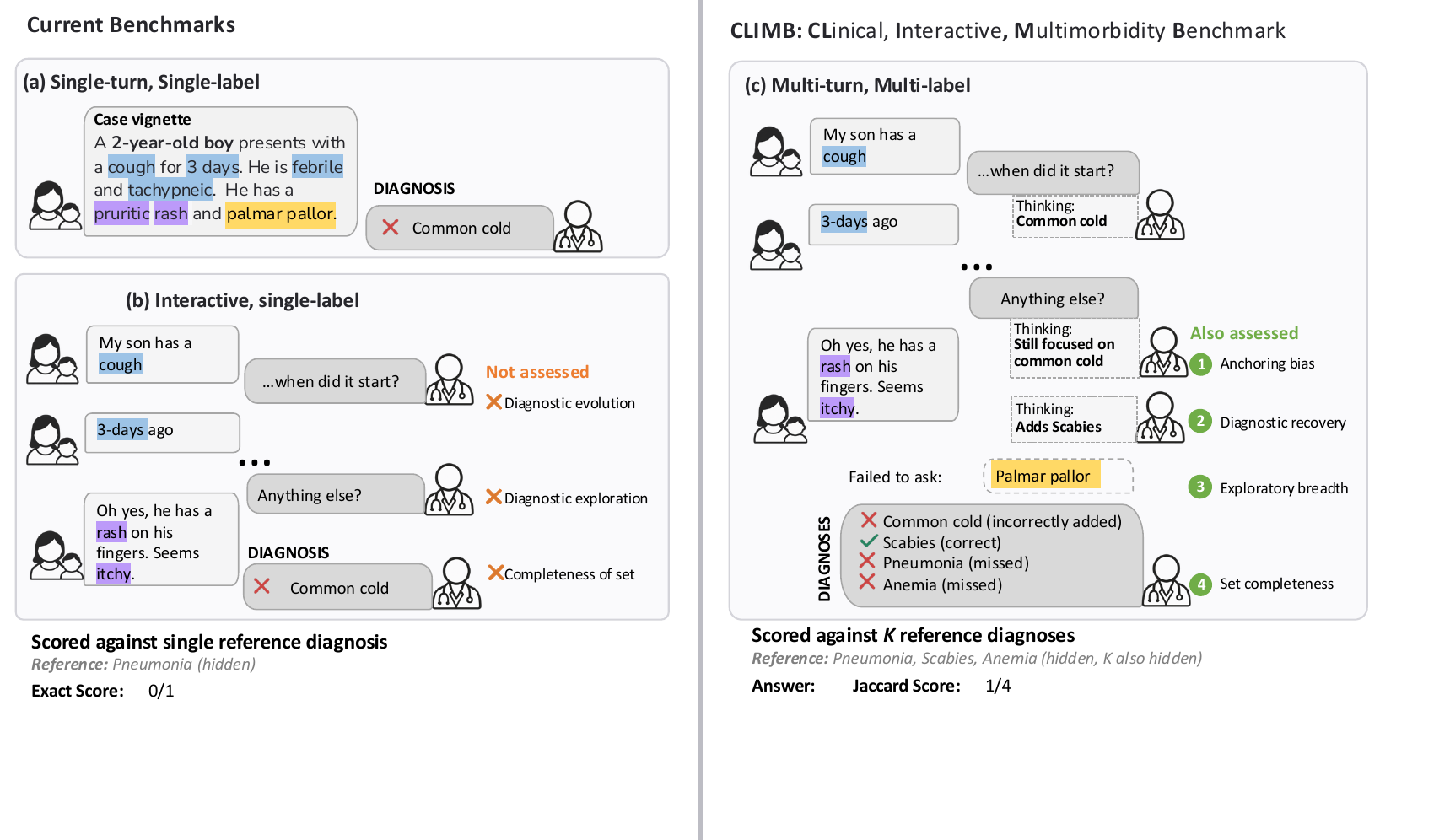}
    \caption{Static and single-label settings against ours. Panels (a) and (b)
    provide all evidence upfront or elicit it for a single correct diagnosis, while
    \CLIMB (c) elicits evidence over a multi-turn dialogue for a patient whose
    ground truth is a set of co-occurring diagnoses, so the model must hold several
    hypotheses at once.
    }
    \label{fig:overview}
\vspace{-2mm}
\end{figure}

\input{sections/related_work.tex}
\input{sections/method/method.tex}

\input{sections/experiments/experiments.tex}

\input{sections/results/results.tex}

\input{sections/failure/failure.tex}

\input{sections/discussion.tex}

\clearpage
\subsection*{AI use statement}

We used generative AI tools to write and refactor code for our pipeline, to generate parts
of the synthetic patient data statistics released with this work, and to polish writing. The generated data were inspected by the authors for clinical plausibility. Writing assistance was restricted to phrasing and concision; all claims, citations, and
numerical values were written and checked by the authors against their sources.

We did not use generative AI tools for research ideation. Language models additionally appear in this
paper as objects of study and as components of our evaluation setup. Those uses are
methodological and are described in the corresponding sections.

We have reviewed all AI-assisted work and take responsibility for the final content of this
paper, including text, claims, and artifacts produced with the aid of generative AI.

\subsection*{Ethics statement}

This work involves no human subjects and no patient-level data. Our synthetic patients are
derived from two sources. The first is DDXPlus, which is itself fully synthetic. The second
is ePOCT+, a clinical decision support algorithm whose decision tree was authored and
validated by clinicians; from it we use the tree structure together with aggregate disease
priors and conditional symptom probabilities. These quantities were estimated from cohorts
of real patients managed under ePOCT+, but no individual records enter our pipeline and no
identifiable or otherwise sensitive information is used or released.

The benchmark is intended for evaluating language models. This is not for clinical use. Also, performance on this
benchmark should therefore not be read as evidence of clinical safety or of readiness for
deployment.

\subsection*{Reproducibility statement}

Section~\ref{sec:method} and Appendix~\ref{app:generation} specify the source
resources, association models, exclusion masks, sampling procedures, and
merging and rejection rules used to construct cases. The accompanying
data-profile supplement provides source-file hashes and the script used to
compute the dataset summaries; Appendix~\ref{app:comor_population}
distinguishes the generated populations and evaluated subsets.
Section~\ref{sec:experiments} and Appendix~\ref{app:experiments} describe case
selection, model identifiers, decoding settings, question budgets, and
evaluation conditions. The implemented prompt templates and their adaptations
appear in Appendix~\ref{app:sim:prompts}, and scoring, diagnostic trajectories,
and uncertainty estimates are detailed in Appendix~\ref{app:comor_analysis}
and the subsequent analysis definitions. Evaluation logs retain model and task
settings, consultation transcripts, and predictions, allowing metrics to be
recomputed from stored outputs separately from rerunning model calls. The benchmark, generator, and evaluation
code are available at \url{https://anonymous.4open.science/r/CLIMB-8340}.

\subsubsection*{Acknowledgments}
This work was supported by the EPFL AI Center PhD Fellowship Program 2026.
The LiGHT lab is partly supported by the Gates Foundation (INV-076674), the Kristian
Gerhard Jebsen Foundation, Swisscom (Schweiz) AG through the Swiss National AI
Institute Industry Partnership Program, and the ETH Domain through the Swiss AI
Initiative. Support from the Swiss AI Initiative includes computational resources on
the Swiss National Supercomputing Centre (CSCS) Alps infrastructure (project IDs 27
and a0238) and a PhD fellowship for the first author.
Arora's lab is partly supported by grants from the Novo Nordisk Foundation
(NNF24OC0099109), the Pioneer Centre for AI, EU Horizon 2020 (101168951), and
Danish Advanced Research Academy (DARA) and Danish Data Science Academy (DDSA)
PhD Fellowships. We also gratefully acknowledge generous gifts from Microsoft and
IT-vest networking universities.

\bibliography{iclr2027_conference}
\bibliographystyle{iclr2027_conference}

\appendix
\clearpage
\input{appendix/extended_related_work.tex}
\clearpage
\input{appendix/patient_generation/patient_generation.tex}

\clearpage
\input{appendix/comorbidity.tex}
\clearpage
\input{appendix/experiments/experiments.tex}

\clearpage
\input{appendix/additional_results.tex}

\clearpage
\input{appendix/theory.tex}

\clearpage

\end{document}

%% file: math_commands.tex
\usepackage{amsmath,amsfonts,bm}

\def\eqref#1{equation~\ref{#1}}

\def\1{\bm{1}}

\DeclareMathAlphabet{\mathsfit}{\encodingdefault}{\sfdefault}{m}{sl}
\SetMathAlphabet{\mathsfit}{bold}{\encodingdefault}{\sfdefault}{bx}{n}

%% file: sections/abstract.tex
Patients often have several co-occurring clinical conditions, and the findings needed to identify and disambiguate them emerge over the course of a consultation. Evaluating clinical reasoning in this setting requires both multi-turn interaction and multi-label diagnosis. We introduce \CLIMB, a benchmark in which a doctor model interviews a simulated patient to recover a ground truth set
of co-occurring clinical conditions. Cases are synthesized from clinical decision algorithms and diagnostic datasets, grounding multimorbid presentations in structured clinical knowledge.  Across six frontier and open models, none
recovers the exact set of conditions in more than 10\% of interactive cases. Diagnostic performance declines when conditions co-occur,
even when models receive the full clinical record and the true number of conditions.
Interaction reduces performance further. In controlled experiments, models behave like single-hypothesis trackers: they anchor on the diagnosis suggested by the opening findings, keep questioning around it, and recover a second condition mainly when a finding in view points to it. Questioning them further does not complete the set but adds mostly wrong diagnoses. We formalise this pattern with a theoretical reference model of single-hypothesis tracking. The benchmark, generator, and evaluation
code are available at \url{https://anonymous.4open.science/r/CLIMB-8340}.

%% file: sections/introduction.tex
\section{Introduction}
\label{sec:introduction}

Clinical benchmarks for large language models typically assume a one--shot, single--diagnosis scenario in which the evidence is given upfront and the model is evaluated on its diagnostic prediction. In a realistic consultation, diagnosis is a sequential decision problem in which only partial information is initially available, and the clinician must decide what further information to acquire as evidence accumulates. Evaluating this process requires distinguishing diagnostic recognition from the ability to acquire and integrate evidence to identify multiple co-occurring conditions.

Language models are increasingly applied to clinical decision support~\citep{thirunavukarasu2023large, moor2023foundation}. A growing set of
medical models and diagnostic agents targets this
domain \citep{chen2023meditron,tu2025conversational}, demonstrating strong performance on
one--shot diagnostic predictions by
drawing on substantial encoded medical knowledge \citep{singhal2023large}.

Strong benchmark performance does not necessarily translate into reliable clinical judgment. An evaluation of ChatGPT Health, which reached millions of users within months of launch, found that it under-triaged 52\% of the cases that physician consensus judged to require emergency care ~\citep{ramaswamy2026chatgpt}. In deployment, these errors could translate into patient harm. This gap between benchmark performance and deployed behaviour highlights the need to assess models under the conditions of actual clinical consultation. This makes it important to not only correctly understand the current capabilities of frontier models, but also to evaluate them under the conditions that distinguish a real consultation from a simple classification. Two key dimensions distinguish real consultations from conventional static question answering benchmarks. 

On the first, unlike static benchmarks where all information is presented at once, clinicians must acquire information sequentially over the course of a consultation, which we refer to as the \textit{multi-turn} setting~\citep{li2024mediq}. On the second, unlike static Q\&A benchmarks where there is one correct diagnosis, a patient can have several conditions that hold at once, which we refer to as the \textit{multi-label} setting, or \textit{multimorbidity} in clinical
language~\citep{valderas2009defining,barnett2012epidemiology,dacremont2014beyond}. Interactive patient simulators have
been built to evaluate models in the multi-turn settings~\citep{li2024mediq,schmidgall2024agentclinic,chiu2025vivabench}, showing that language models demonstrate significant losses in accuracy as a conversation length increases~\citep{laban2025lost}.

The multi-label axis has drawn comparatively less attention. A few studies examine the
multi-label behavior in isolation \citep{ma2025multilabel}. Our
focus is the intersection of these two dimensions. We study diagnosis where (1) evidence
must be elicited through an interactive consultation, and (2) the ground truth consists of a set of co-occurring conditions. We investigate whether a language model can hold several simultaneous hypotheses, and acquire relevant evidence to revise them through an interactive consultation.

To our knowledge, no prior benchmark provides a controlled setting and analysis for interactive diagnosis of multimorbid patients. Prompting a frontier model to generate
patient cases offers little control over the resulting distribution of presentations, and
existing case notes rarely contain controlled variation in multimorbidity. We propose \CLIMB, a benchmark built on a principled procedure that generates multi-label synthetic
patients grounded in clinical source evidence, with an evaluation protocol tailored to multi-turn, multi-label diagnostic reasoning.

We make three contributions to the study of interactive multimorbidity diagnosis:
\begin{enumerate}
    \item \CLIMB, a controlled benchmark for multi-turn, multi-label diagnosis with synthetic multimorbid patients grounded in established clinical sources.
    \item A systematic comparison of diagnostic performance of 
    models for different numbers of co-occurring conditions, and information regime.
    \item  A failure analysis of this regime, with controlled experiments to understand the models' behaviour and failures, and a formal description of single-hypothesis tracking.
\end{enumerate}

%% file: sections/related_work.tex
\section{Related Work}
\label{sec:related_work}

We summarize the most relevant work here and give a broader treatment in
Appendix~\ref{app:extended_related_work}.

\paragraph{Static and interactive benchmarks.}
Multiple-choice medical exam benchmarks and their variants established that language
models carry a large amount of medical knowledge, and instruction-tuned systems now
reach strong scores on these formats \citep{jin2021medqa,pal2022medmcqa,jin2019pubmedqa,singhal2023large}. These benchmarks place all of the relevant information in the prompt and admit a single correct answer, and therefore mainly measure the retrieval of stored knowledge instead of how to gather the missed
evidence. On the other hand, interactive and multi-turn benchmarks evaluate a model's ability to gather information before it commits to an answer. These
works pair a doctor model with a simulated patient and score the final diagnosis
\citep{johri2025evaluation,schmidgall2024agentclinic, nori2025sequential}. Patient simulators support the setting by producing
person-driven patient behavior \citep{kyung2025patientsim}. This literature reports
recurring failure modes, where models under-ask, pose non-specific questions, and
close the consultation early \citep{laban2025lost}, which motivates interventions
built around uncertainty estimation and abstention \citep{kirichenko2025abstentionbench,hu2024uncertainty}.

\paragraph{Multimorbidity and diagnostic error.}
Multimorbidity, the presence of two or more conditions in one person, affects more
than half of people aged 65 and over \citep{barnett2012epidemiology}, is growing in
low and middle income countries \citep{asogwa2022multimorbidity}, and is associated
with higher mortality and polypharmacy~\citep{nunes2016multimorbidity}. We use
multimorbidity and comorbidity interchangeably (Appendix~\ref{app:comorbidity}).
Decision support for diagnosis dates back to early probabilistic systems
\citep{miller1982internist,shwe1991probabilistic}, and satisfaction of search,
calling off the search once one diagnosis is found, is a recognized source of
missed coexisting conditions \citep{croskerry2003importance}. Recent work trains a query policy for multi-disease consultations \citep{guo2026drassistant} or adds a
comorbid presentation to an interactive psychiatric benchmark
\citep{xu2026lingxidiagbench}, but does not study the co-occurrence itself. We
combine both dimensions. Evidence is gathered over a consultation, cases have a
controlled number of conditions, and the full predicted set is scored, which lets
us isolate how a second diagnosis is lost.

%% file: sections/method/method.tex
\section{Method}
\label{sec:method}
\paragraph{Objective.} 
Each case describes one patient. The patient has a set of $k$ co-occurring
conditions $\mathcal{D} \subset \mathcal{L}$, where $\mathcal{L}$ is the
set of candidate conditions and $k = |\mathcal{D}|$. The patient record
$x \subset \mathcal{F}$ is a set of findings; each finding $f \in \mathcal{F}$
pairs a clinical variable with its value, such as \emph{cough: yes}. In an interactive
consultation, the doctor model observes $\mathcal L$ and an opening presentation $H_0$ (age, sex, and partial observation $O \subset x$), while the target set $\mathcal D$, its
cardinality $k$, and the remaining findings are hidden. The objective is
to recover $\mathcal D$.

\CLIMB builds cases from ePOCT+
\citep{tan2023epoct} and DDXPlus \citep{fansitchango2022ddxplus} by repeatedly sampling a target set of size $k$ until its realized set of components (the findings that its source dataset links it to) passes consistency checks after merging. Rejected proposals are resampled, and accepted \emph{signals}
$S \subset \mathcal F$ are mixed with background complaints $B \subset \mathcal F$ (Figure~\ref{fig:pipeline}).

\begin{figure}[t]
    \centering
    \includegraphics[width=\linewidth, trim={0cm 4cm 0.5cm 3cm},
  clip]{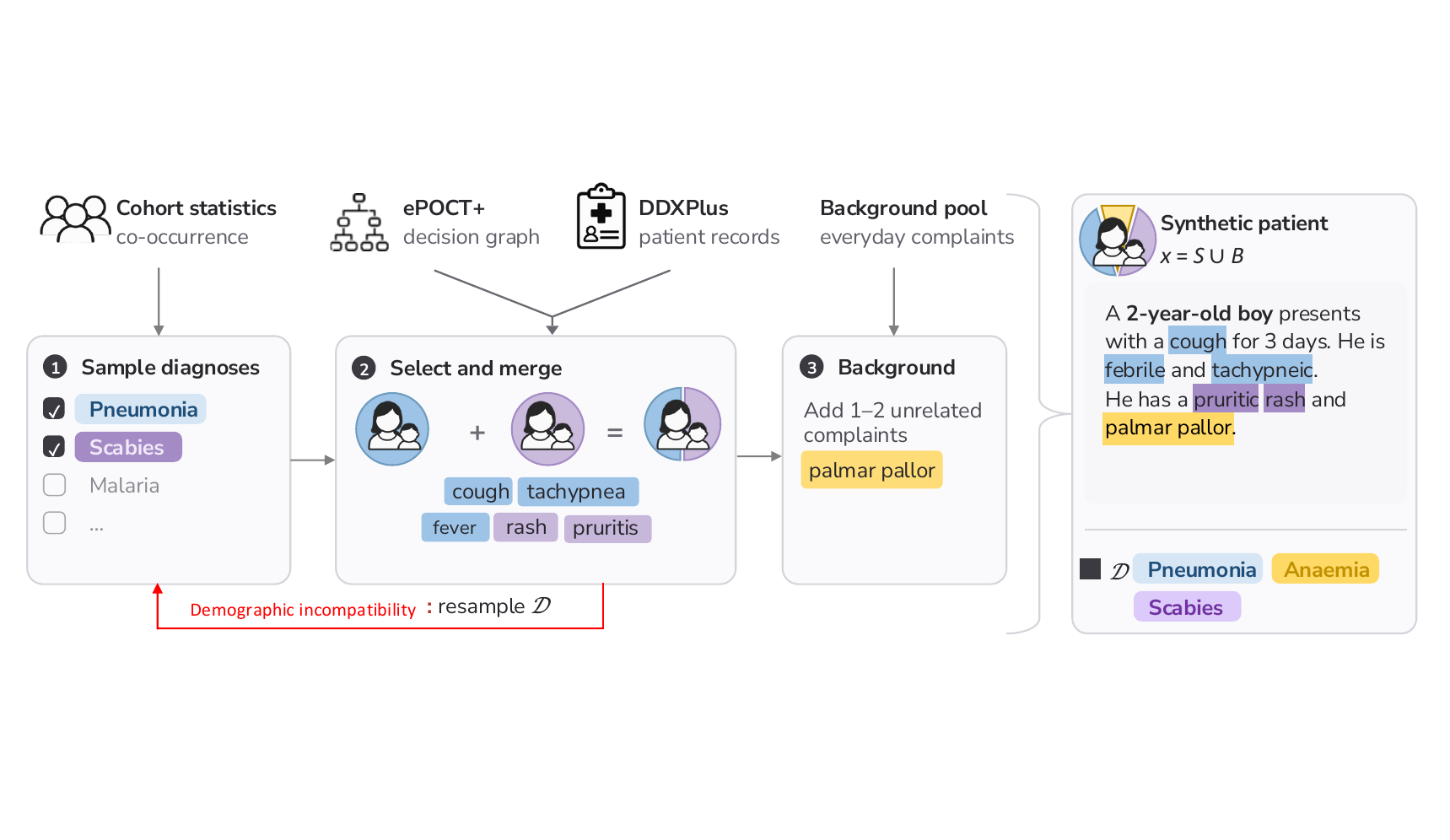}
    \caption{Case construction. Diagnoses are sampled from cohort
 co-occurrence, their findings are taken from ePOCT+ paths or DDXPlus records
 and merged, and one or two unrelated complaints are added. A conflict
 resamples the diagnosis set.}
    \label{fig:pipeline}
\vspace{-2mm}
\end{figure}

\paragraph{Comorbidity sampler.}
Both sources draw the $k$ targets sequentially without replacement with
\begin{equation}
 p(d\mid R)\propto\widehat P(d)
 \prod_{d'\in R}\ell(d,d'),
 \qquad d\notin R,
 \label{eq:sampler}
\end{equation}
where $R$ is the set of targets already drawn, $\widehat P$ the empirical prevalence, and $\ell(d,d')>1$ when $d$ and $d'$ co-occur more often than chance. The first draw uses
prevalence alone.

\input{sections/method/epoct.tex}

\input{sections/method/ddx.tex}

\input{sections/method/background.tex}

%% file: sections/method/epoct.tex
\subsection{Case Generation from ePOCT+}
\label{sec:epoct}
ePOCT+ is a pediatric clinical decision algorithm represented by a
graph $\mathcal T$ of clinical variables and diagnosis rules.
Appendix~\ref{app:source_epoct} introduces the source graph and cohort
statistics; Appendix~\ref{app:comor_population} profiles the generated cases.

\paragraph{Statistics and masks.}
Confirmed cohort diagnoses provide $\widehat P(d)=c_d/n$ and
$\widehat P(d,d')=c_{dd'}/n$, giving pairwise association factors
$\ell(d,d')=\widehat P(d,d')/[\widehat P(d)\widehat P(d')]$.
An offline LLM screen proposes infeasible pairs that are verified by our clinicians. We
apply this label mask and retain diagnoses represented in both the
cohort and graph.

\paragraph{Components, merging, and rejection.}
For each sampled $d$, we enumerate its root-to-diagnosis paths and discard
treatment and referral conditions. Several answers listed for one variable
are alternatives, and a path is retained if every variable keeps at least
one answer and the path has a visible finding. A uniformly selected
retained path $\sigma(d)$, with one allowed answer per variable, defines
$c(\sigma(d))$. The merged signal is
\begin{equation}
 S=\bigcup_{d\in\mathcal D}c(\sigma(d)).
 \label{eq:signal}
\end{equation}
We reject the proposal if a diagnosis has no retained path, shared
variables have conflicting answers, the mask excludes a diagnosis pair,
age windows do not overlap, a diagnosis has no visible finding of its own,
or fewer than two visible findings remain. We then resample $\mathcal D$ at the same $k$. For accepted
cases, demographics follow the compatible age window, and the case
receives an opening presentation. Cohort processing, path extraction,
and rejection are detailed in Appendix~\ref{app:comor_epoct}; the
label-masking prompt is in Appendix~\ref{app:comor_masks}.

%% file: sections/method/ddx.tex
\subsection{Case Generation from DDXPlus}
\label{sec:ddx}
DDXPlus provides synthetic patient records with a single target condition, symptoms, medical-history evidence, and a differential diagnosis.
Appendix~\ref{app:source_ddx} describes its scope, record space, and
evidence types; we profile our merged cases in Appendix~\ref{app:comor_population}.

\paragraph{Statistics and masks.}
We estimate prevalence from the training split. Since each record has a single condition, the co-occurrence
ratios $\ell(d,d')$ are elicited from an LLM. Separately, a second LLM screen proposes infeasible pairs, which are verified by our clinicians. During sampling, we apply a mask which excludes candidates that form an infeasible pair with a drawn target, or have no overlapping age window. 
The co-occurrence model, including its conversion to bounded pairwise joint probabilities, is specified in Appendix~\ref{app:comor_ddx}; the masking prompt and
verdict rules are in Appendix~\ref{app:comor_masks}.

\paragraph{Components, merging, and rejection.}
For every sampled diagnosis, we select a source record with a common
sex and an age near an anchor in the shared age window. Binary and
multi-select findings are merged by union, ordered values by maximum,
and conflicting nominal values cause rejection. We retry record
selection. If no compatible realization is found, we resample the
diagnosis set at the same $k$. Cases in which a diagnosis keeps no
finding of its own, or with fewer than two findings, are also rejected. Accepted findings are rendered as question-answer
statements, with the highest-severity component supplying the opening
complaint (Appendix~\ref{app:comor_ddx}).

%% file: sections/method/background.tex
\subsection{Background Complaints and Final Records}
\label{sec:background}

Real patients also report complaints unrelated to their
conditions~\citep{elnegaard2015symptom}, so both generators add background
complaints $B$ to the accepted signal,
\begin{equation}
 N\sim\mathcal{U}\{1,2\},\qquad
 B\sim p_{\mathrm{bg}}(\cdot\mid N,s),\qquad x=S\cup B,
 \label{eq:patient}
\end{equation}
drawn without replacement with the sex-specific symptom prevalences of
\citet{elnegaard2015symptom}, where $s$ is the recorded sex. ePOCT+ keeps only
complaints applicable to children. The count is small and independent of $k$,
so irrelevant evidence cannot explain differences across $k$ (Appendix~\ref{app:comor_background}).

%% file: sections/experiments/experiments.tex
\section{Experiments}
\label{sec:experiments}

\input{sections/experiments/data.tex}
\input{sections/experiments/protocol.tex}

%% file: sections/experiments/data.tex
\subsection{Data and Models}
\label{sec:experiments:data}
\label{sec:experiments:setup}

Our evaluation protocol uses all 800 stored cases per source, balanced with
200 cases at each $k\in\{1,2,3,4\}$, and holds cases fixed across
models and conditions. The cases are the released ePOCT+ and DDXPlus
population files (Appendix~\ref{app:comor_population}). The oracle-count and
single-diagnosis controls and the probe analyses of
Section~\ref{sec:stopping} use the first 50 of these cases at each $k$. Every condition of the main evaluation uses the
complete record $x=S\cup B$: the full-information prompt lists all of
its findings, and the patient simulator answers from the same record
(Section~\ref{sec:background}).

For the main experiments we use GPT-5.6, Gemini-3.8-Flash,
Qwen3.8-Flash, Gemma-4, GLM-5.3-Flash, and DeepSeek-V4.1-Flash as doctor models (Appendix~\ref{app:model_selection}).
Free-text consultations use DeepSeek-V4.1-Flash as the patient,
chosen by screening candidates for answering from the record without
inventing findings or naming the diagnosis (Appendix~\ref{app:patient_screen}).
The patient is fixed across all doctors and conditions, so it cannot explain
differences between them. The patient model receives the
opening presentation and the complete finding record $x$ without target labels. GPT-5.6 is used for the failure
analyses of Section~\ref{sec:failure}. Model identifiers, reasoning and
decoding settings, and the patient screen are detailed in
Appendix~\ref{app:comor_setup}.

%% file: sections/experiments/protocol.tex
\subsection{Evaluation Protocol}
\label{sec:protocol}

\paragraph{Information conditions.} \emph{Full information} supplies the opening and all retained findings
upfront. \emph{Interactive} evaluation uses an 
doctor/patient dialogue derived from AgentClinic \citep{schmidgall2024agentclinic}, with a
budget of $T=20$ question turns. The patient uses the standard
AgentClinic patient prompt with the case record as its information
(Appendix~\ref{app:sim:deviations}). Each turn, the doctor asks a question
or commits to a diagnosis set. Both conditions request all supported diagnoses
from the same candidate set and use the same scorer. We score the first committed set,
or request a final set when the consultation ends without a commitment.
Prompt adaptations and response rules are given in
Appendices~\ref{app:comor_protocol} and~\ref{app:sim:prompts}.

\paragraph{Controlled comparisons.}
We add controlled comparisons in order to better localise the failures. An oracle-count condition discloses
the true $k$, and a single-diagnosis control requests one diagnosis. A matched
respiratory cohort, in which each two-diagnosis patient has a single-diagnosis
counterpart with the same opening, separates information acquisition from evidence decoding
(Section~\ref{sec:tree}; Appendix~\ref{app:evidence_matched}). A designed $k=2$
DDXPlus cohort, consulted twice with an opening from either diagnosis, tests
anchoring and symptom overlap (Section~\ref{sec:mechanisms};
Appendix~\ref{app:designed_k2}). A second designed cohort keeps each patient's record
fixed and varies only the two symptoms after the first complaint, which belong
to the first diagnosis alone or to both diagnoses (Appendix~\ref{app:sharing}).

\label{sec:scoring}
\label{sec:experiments:metrics}

We report mean per-case Jaccard and exact-set recovery:
\begin{equation}
 J=\frac{|\widehat{\mathcal D}\cap\mathcal D|}
 {|\widehat{\mathcal D}\cup\mathcal D|},
 \qquad
 \mathrm{Ex}=\mathbf{1}\{\widehat{\mathcal D}=\mathcal D\}.
 \label{eq:set_metrics}
\end{equation}

%% file: sections/results/results.tex
\section{Results}
\label{sec:results}

Clinical benchmarks usually ask for the single most likely diagnosis. A
patient, however, potentially have several conditions whose findings show \emph{merging}. We evaluate the recovery of the complete diagnosis set at every $k$, including $k=1$, without
revealing $k$, and average equally over cohorts with $k=1$ to $4$. We report this score in Table~\ref{tab:main}. For every model, interaction lowers the Jaccard point estimate, and exact-set recovery is low
even with full information, with models rarely recovering the complete
set. We note that the record does determine the set as a logistic-regression reader recovers
79 to 99\% of DDXPlus sets from it at every $k$ (Appendix~\ref{app:reader}). Error bars show 95\% bootstrap intervals over independent
units (cases, patients or families; Appendix~\ref{app:comor_analysis}).

\begin{table}[H]
\centering
\small
\caption{Pooled exact-set recovery and Jaccard under full information (F)
and interaction (I). Rows are sorted by interactive exact-set recovery on DDXPlus; shading of the interactive exact-set columns is proportional to the
value. Intervals are in Appendix~\ref{app:ci}.}
\label{tab:main}
\begin{tabular}{lcccccccc}
\toprule
 & \multicolumn{4}{c}{DDXPlus} & \multicolumn{4}{c}{ePOCT+} \\
\cmidrule(lr){2-5}\cmidrule(lr){6-9}
 & \multicolumn{2}{c}{Exact $\uparrow$} & \multicolumn{2}{c}{Jaccard $\uparrow$} & \multicolumn{2}{c}{Exact $\uparrow$} & \multicolumn{2}{c}{Jaccard $\uparrow$} \\
\cmidrule(lr){2-3}\cmidrule(lr){4-5}\cmidrule(lr){6-7}\cmidrule(lr){8-9}
Model & F & I & F & I & F & I & F & I \\
\midrule
GPT-5.6             & 0.11 & \cellcolor{cmbpurple!32}0.08 & 0.46 & 0.29 & 0.21 & \cellcolor{cmbpurple!32}0.08 & 0.57 & 0.30 \\
Gemini-3.8-Flash    & 0.26 & \cellcolor{cmbpurple!32}0.08 & 0.47 & 0.32 & 0.21 & \cellcolor{cmbpurple!40}0.10 & 0.51 & 0.39 \\
Qwen3.8-Flash       & 0.17 & \cellcolor{cmbpurple!24}0.06 & 0.46 & 0.27 & 0.18 & \cellcolor{cmbpurple!24}0.06 & 0.49 & 0.24 \\
GLM-5.3-Flash       & 0.11 & \cellcolor{cmbpurple!8}0.02 & 0.46 & 0.24 & 0.15 & \cellcolor{cmbpurple!20}0.05 & 0.52 & 0.26 \\
DeepSeek-V4.1-Flash & 0.04 & \cellcolor{cmbpurple!8}0.02 & 0.37 & 0.25 & 0.15 & \cellcolor{cmbpurple!24}0.06 & 0.53 & 0.28 \\
Gemma-4               & 0.10 & \cellcolor{cmbpurple!4}0.01 & 0.43 & 0.24 & 0.20 & \cellcolor{cmbpurple!28}0.07 & 0.55 & 0.31 \\
\bottomrule
\end{tabular}
\vspace{-2mm}
\end{table}

\input{sections/results/multiplicity.tex}

\input{sections/results/gap.tex}

%% file: sections/results/multiplicity.tex
\subsection{Does the model register multiplicity?}
\label{sec:multiplicity}

If models registered \emph{multiplicity}, that a patient may have several conditions, they
would return more diagnoses, find more correct ones (Appendix~\ref{app:theory}), and ask more questions as
$k$ grows. In practice, only the first happens (Figure~\ref{fig:mult}). Indeed, while the diagnosis set grows,
correct diagnoses rise far less than $k$ (b), and consultation length
hardly follows $k$ (c). Multiplicity changes the size of the answer, but it does not change the search behind it.

\begin{figure}[H]
\centering
\includegraphics[width=\linewidth]{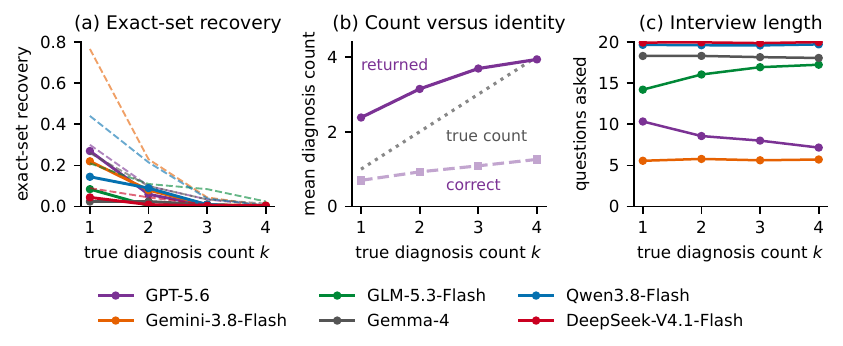}
\caption{Multiplicity on DDXPlus (six models, 200 cases per $k$). (a) Exact-set
recovery, interactive (solid) and full information (dashed).
(b) Mean returned and correct
diagnoses per interactive consultation, models pooled. (c) Questions asked.}
\label{fig:mult}
\vspace{-2mm}
\end{figure}

\begin{table}[h]
\centering
\small
\caption{Exact-set recovery (\%, $\uparrow$) by comorbidity count $k$. F is full
information, I is interactive; shading of the interactive columns is
proportional to the value. Intervals are in Appendix~\ref{app:ci}.}
\label{tab:exact_by_k}
\begin{tabular}{llcccccccc}
\toprule
 & & \multicolumn{2}{c}{$k{=}1$} & \multicolumn{2}{c}{$k{=}2$}
 & \multicolumn{2}{c}{$k{=}3$} & \multicolumn{2}{c}{$k{=}4$} \\
\cmidrule(lr){3-4}\cmidrule(lr){5-6}\cmidrule(lr){7-8}\cmidrule(lr){9-10}
Source & Model & F & I & F & I & F & I & F & I \\
\midrule
DDXPlus  & GPT-5.6             & 30 & \cellcolor{cmbpurple!40}27 & 10 & \cellcolor{cmbpurple!9}6 & 4 & 0 & 0 & 0 \\
         & Gemini-3.8-Flash    & 76 & \cellcolor{cmbpurple!33}22 & 23 & \cellcolor{cmbpurple!12}8 & 4 & \cellcolor{cmbpurple!1}1 & 0 & 0 \\
         & Qwen3.8-Flash       & 44 & \cellcolor{cmbpurple!21}14 & 22 & \cellcolor{cmbpurple!13}9 & 4 & \cellcolor{cmbpurple!1}1 & 0 & 0 \\
         & GLM-5.3-Flash       & 21 & \cellcolor{cmbpurple!12}8 & 11 & 0 & 8 & 0 & 2 & 0 \\
         & DeepSeek-V4.1-Flash & 9 & \cellcolor{cmbpurple!6}4 & 4 & \cellcolor{cmbpurple!1}1 & 1 & 0 & 0 & 0 \\
         & Gemma-4               & 26 & \cellcolor{cmbpurple!3}2 & 10 & \cellcolor{cmbpurple!3}2 & 4 & \cellcolor{cmbpurple!1}1 & 2 & 0 \\
\midrule
ePOCT+   & GPT-5.6             & 39 & \cellcolor{cmbpurple!27}18 & 24 & \cellcolor{cmbpurple!10}7 & 12 & \cellcolor{cmbpurple!6}4 & 10 & \cellcolor{cmbpurple!6}4 \\
         & Gemini-3.8-Flash    & 34 & \cellcolor{cmbpurple!33}22 & 25 & \cellcolor{cmbpurple!12}8 & 15 & \cellcolor{cmbpurple!10}7 & 10 & \cellcolor{cmbpurple!3}2 \\
         & Qwen3.8-Flash       & 34 & \cellcolor{cmbpurple!27}18 & 26 & \cellcolor{cmbpurple!7}5 & 10 & \cellcolor{cmbpurple!3}2 & 4 & 0 \\
         & GLM-5.3-Flash       & 24 & \cellcolor{cmbpurple!19}13 & 20 & \cellcolor{cmbpurple!9}6 & 10 & \cellcolor{cmbpurple!3}2 & 8 & 0 \\
         & DeepSeek-V4.1-Flash & 14 & \cellcolor{cmbpurple!19}13 & 21 & \cellcolor{cmbpurple!9}6 & 16 & \cellcolor{cmbpurple!9}6 & 11 & 0 \\
         & Gemma-4               & 32 & \cellcolor{cmbpurple!21}14 & 30 & \cellcolor{cmbpurple!12}8 & 15 & \cellcolor{cmbpurple!9}6 & 5 & 0 \\
\bottomrule
\end{tabular}
\vspace{-2mm}
\end{table}

%% file: sections/results/gap.tex
\subsection{The single-answer gap}
\label{sec:gap}
The single-answer gap is the distance between how benchmarks usually score a
model and what the task requires (Figure~\ref{fig:gap}). A single-answer
benchmark credits a consultation when the model names at least one true
condition, and this happens more often as $k$ grows. The complete set, which is the
correct target, is rarely named and falls to almost zero at $k=4$. The gap is
present already at $k=1$, where the model often finds the condition but adds
wrong ones (Appendix~\ref{app:interaction}). The same gap appears when models are asked
for a single diagnosis, and supplying the true $k$ does not close it
(Appendices~\ref{app:single} and~\ref{app:oracle}). Interaction lowers the
score at every $k$ rather than steepening its fall (Appendix~\ref{app:interaction}).

\begin{figure}[H]
\centering
\includegraphics[width=0.8\linewidth]{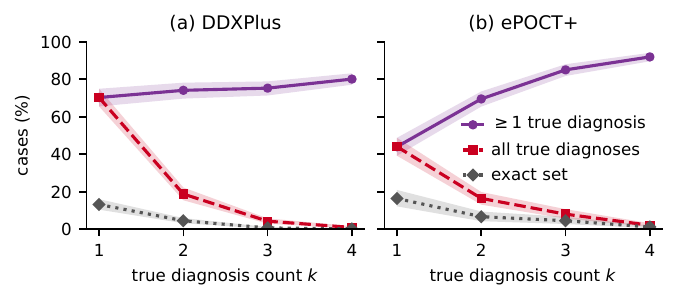}
\caption{Single-answer gap in interaction (six models, 200 cases per $k$).
Unlike the exact set, all true diagnoses may come with wrong ones. Bands are
95\% bootstrap intervals.}
\label{fig:gap}
\vspace{-2mm}
\end{figure}

%% file: sections/failure/failure.tex
\section{Failure analysis}
\label{sec:failure}

We follow the consultation step by step: seeking and integrating evidence
(Section~\ref{sec:tree}), why the second diagnosis is missed
(Section~\ref{sec:mechanisms}), and stopping (Section~\ref{sec:stopping}). The matched experiment uses GPT-5.6,
among the best models in interactive exact-set recovery on DDXPlus (Table~\ref{tab:main}).
Unless stated otherwise, we use GPT-5.6 on DDXPlus. For Sections~\ref{sec:tree} and~\ref{sec:mechanisms}, we designed controlled
experiments that isolate these effects. Their details are given in
Appendices~\ref{app:evidence_matched} and~\ref{app:designed_k2}.

\input{sections/failure/tree.tex}

\input{sections/failure/mechanisms.tex}

\input{sections/failure/stopping.tex}

%% file: sections/failure/tree.tex
\subsection{Can models acquire and use diagnostic evidence?}
\label{sec:tree}
\label{sec:integration}

A second diagnosis may be lost in two places. The model may not collect
the evidence that reveals it (a failure in evidence acquisition), or it may have sufficient evidence and still not report it (a failure in decoding). We separate the two cases on a matched cohort in which each
two-diagnosis patient has a single-diagnosis counterpart with the same
opening. As a fixed query policy we use a decision tree that asks from the
same menu, each time selecting the question that most reduces uncertainty about
the diagnosis set, and comes close to what the full record allows (cohort,
models and tree in Appendix~\ref{app:evidence_matched}).

Despite using more questions than the tree, GPT-5.6 gains less information from each (c), and under the same fixed decoder it returns lower exact-set recovery than the tree. The gap grows at $k=2$, and its selected questions also match less with those of the tree (Appendix~\ref{app:info_gain}).

The performance gap persists when GPT-5.6 is given the tree's evidence or even the full record (b), while the fixed classifier recovers most
$k=2$ sets exactly from the same full record (Appendix~\ref{app:info_gain}). We investigate this in Section~\ref{sec:mechanisms}.

\begin{figure}[t]
\centering
\includegraphics[width=\linewidth]{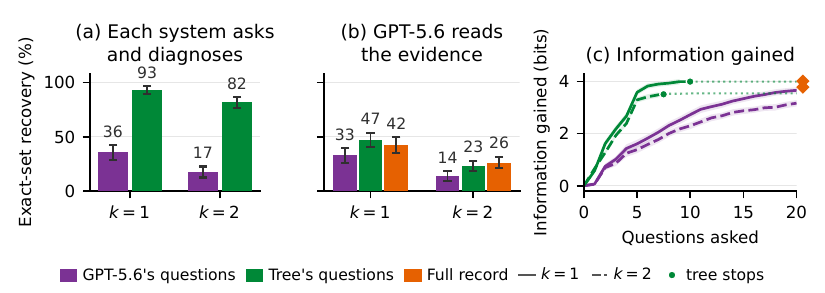}
\caption{GPT-5.6 and the reference tree on the matched respiratory cohort.
(a) Exact-set recovery when each system asks and diagnoses. (b) Exact-set
recovery when GPT-5.6 reads evidence of increasing quality. (c) Information
gained, the drop in uncertainty about the diagnosis set after each question,
scored by one fixed model for both systems (Appendix~\ref{app:info_gain}).
The dot marks where the tree stops on average, and the diamonds show the
full record.}
\label{fig:tree}
\vspace{-2mm}
\end{figure}

%% file: sections/failure/mechanisms.tex
\subsection{Why is the second diagnosis lost?}
\label{sec:mechanisms}

Clinical reasoning gives grounds on two biases that may induce the loss of a second diagnosis. With
\emph{anchoring}, the clinician holds on to the first
hypothesis~\citep{croskerry2003importance}. With \emph{satisfaction of
search}, the search stops once one diagnosis is found~\citep{berbaum1990,adamo2021}.
Both have been documented in language models on clinical
diagnosis~\citep{braitsch2026information}, and ours lose the second diagnosis
although the prompt tells them not to stop at one (Appendix~\ref{app:sim:prompts}). Both can
follow from treating diagnoses as mutually exclusive, so that a second
diagnosis must take support from the first (Proposition~\ref{prop:exclusive}). We study where the
second diagnosis is lost on designed cohorts in which the two diagnoses are equally severe
(Appendix~\ref{app:designed_k2}).

\subsubsection{Anchoring on the opening findings}
\label{sec:anchoring}
Let A denote the diagnosis whose findings open the consultation, and B the
other. GPT-5.6 anchors on A, recovering it about twice as often as B
(Figure~\ref{fig:anchoring}a), and the most common final set holds A without B
(b). Appendix~\ref{app:designed_k2} repeats this with Gemma-4.

\begin{figure}[t]
\centering
\includegraphics[width=0.85\linewidth]{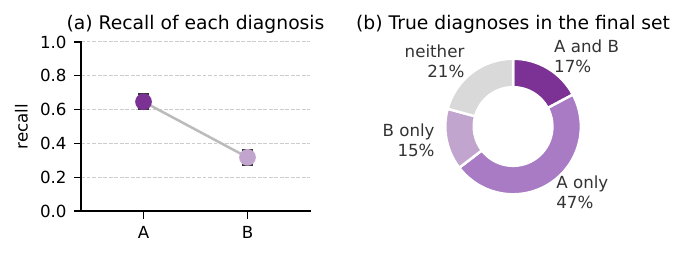}
\caption{Anchoring on the designed cohort, GPT-5.6. A is the diagnosis whose
findings open the consultation and B the other. (a) Recall of A and B.
(b) Which true diagnoses the final set contains.}
\label{fig:anchoring}
\vspace{-2mm}
\end{figure}

\subsubsection{Shared symptoms}
\label{sec:overlap}

To see how shared symptoms affect the second diagnosis, we consult the same
patients, with equally severe A and B, twice: the opening gives A's first
finding and either two symptoms that A and B share or two of A alone
(Appendix~\ref{app:sharing}). Shared symptoms raise B from 37\% to 51\% while A
barely changes (Figure~\ref{fig:app_sharing_bars}). The model finds B when a
finding in view points to it, but naming both rises only from 27\% to 32\%,
not reliably.

%% file: sections/failure/stopping.tex
\subsection{Do more questions help?}
\label{sec:stopping}
\label{sec:retention}

To investigate whether this is due to early stopping, we force the model to continue past its natural stopping point for up to
twenty questions. On 50 cases per $k$, we probe its working diagnosis set after every turn offline (Appendix~\ref{app:comor_protocol}). After twenty questions a probe holds 1.3, 1.4 and 1.9 correct diagnoses
at $k=2$, 3 and 4 (Figure~\ref{fig:retention}a). More questions mostly add wrong
diagnoses (b), so Jaccard does not rise after the natural stop (c). The other
open models behave the same (Appendix~\ref{app:compliance}). %

\begin{figure}[H]
\centering
\includegraphics[width=\linewidth]{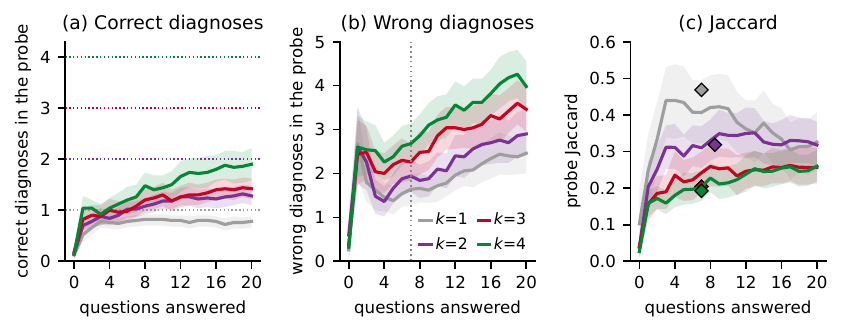}
\caption{Single probes by question turn (DDXPlus, GPT-5.6, 50 cases per
$k$). (a) Correct and (b) wrong diagnoses. (c) Jaccard; diamonds mark the
committed answer at the median natural stop. Dotted lines mark $k$ in (a) and
the median stop in (b).}
\label{fig:retention}
\vspace{-2mm}
\end{figure}

Together, these failures describe a single-hypothesis tracker, formalised in
Appendix~\ref{app:theory}. The
model follows the opening, misses a diagnosis unless its current set already point to it, and forcing further
questions do not complete the set.

%% file: sections/discussion.tex
\section{Discussion and conclusion}
\label{sec:discussion}
\label{sec:conclusion}

We introduced \CLIMB, a benchmark for interactive diagnosis of patients
with several co-occurring conditions. Models often name one of a patient's
conditions but rarely recover the complete set, and supplying the number of
conditions or even the full record does not close the gap. In the interactive setting, we observe single-hypothesis tracking behaviour. The opening presentation decides which diagnosis they
pursue, their questions stay on it, and
further questions add mostly wrong diagnoses (Appendix~\ref{app:theory}). This
is a concerning profile for clinical use, where multimorbidity is common, and
evaluations that score a single most likely diagnosis hide it. Future work
should improve evidence gathering and the tracking of several hypotheses at
once.

%% file: appendix/extended_related_work.tex
\section{Extended Related Work}
\label{app:extended_related_work}

\paragraph{From static questions to interactive consultations.}
Medical language models are often assessed on self-contained exam questions,
including MedQA, MedMCQA, and PubMedQA \citep{jin2021medqa,pal2022medmcqa,
jin2019pubmedqa}. These establish whether a model can use the full evidence set
supplied upfront. They do
not assess the model in deciding what information is missing, or in
maintaining a diagnostic state across a conversation. Converting a vignette into a dialogue can change the task even if its clinical
content is held fixed. CRAFT-MD evaluates such conversational reasoning across
medical specialties, and reports substantially different behavior from
vignette-based assessment \citep{johri2025evaluation}. AgentClinic broadens
this setting to simulated patients, measurements, multimodal tools, and
environmental biases \citep{schmidgall2024agentclinic}. These studies motivate
evaluating the whole consultation rather than treating static diagnostic
accuracy as a proxy for information gathering.

\paragraph{Question asking and patient simulation.}
MediQ lets a model abstain from answering and
ask a follow-up question when it lacks sufficient evidence
\citep{li2024mediq}. Directly prompting a model to ask
questions may reduce diagnostic performance, and interactive accuracy generally stays below that
of a model given complete information upfront. Other factors may influence LLM-mediated clinical interactions: CRAFT-MD uses language-model agents both to play the patient and to score the
diagnosis, and AgentClinic reports that doctor accuracy changes with the model
that plays the patient and falls when the patient is biased. Accordingly, \CLIMB{} fixes the generated case,
candidate vocabulary, and scorer across its full-information and interactive
conditions. The comparison asks how much recovery changes when a doctor must
elicit the same retained findings rather than receive them upfront. Our
free-text patient interface follows this line of work; the menu condition
instead returns deterministic record-based answers for analyses that require
an unambiguous evidence trace.

\paragraph{Concurrent diagnoses in interactive benchmarks.}
The premise that a consultation can involve more than one disease is not new.
Dr.~Assistant evaluates multi-turn inquiry on cases stratified from one to six
diseases and treats the multi-disease strata as a multi-label task
\citep{guo2026drassistant}. Its primary evaluation uses ICD-code recall, with
precision reported alongside it. LingxiDiagBench evaluates both static and
dynamic Chinese psychiatric consultations; its four-way setting includes pure
depression, pure anxiety, mixed depression--anxiety, and other psychiatric
conditions \citep{xu2026lingxidiagbench}. The mixed category tests recognition
of a specific comorbidity pattern, rather than recovery of an arbitrary set of
simultaneously present diagnoses.

Our benchmark asks a different question. A patient may have several
conditions whose findings are merged into one presentation, so the task is
neither to name the most likely diagnosis nor to rank a differential for a
single cause, but to return the set of conditions that are actually present.
We therefore ask for the complete set in every case, including $k=1$, without
revealing how many diagnoses there are, and score it as a whole: a reliable
answer must include every present condition, above all the dangerous ones,
and exclude those that are absent. Averaging this score over cohorts with
$k=1$ to $4$ gives the measure we argue a benchmark for multimorbid patients
should report, rather than recall of an arbitrary list or recognition of one
predefined comorbidity pattern. We systematically observe the effect of $k$ and beyond measuring the decline with $k$, we use this control to
locate where the second diagnosis is lost, separating evidence acquisition
from evidence use and testing anchoring and symptom overlap in designed
cohorts.

\paragraph{What is evaluated as a ``set.''}
The important distinction is between a differential and a concurrent target
set. A differential diagnosis contains candidate explanations, often ordered
by plausibility; it need not assert that every candidate is true. In contrast,
our target $\mathcal D$ contains every diagnosis used to construct the case,
and all members are simultaneously scored as present. DDXPlus illustrates the
difference: each synthetic record includes one ground-truth pathology together
with a differential diagnosis, symptoms, and antecedents
\citep{fansitchango2022ddxplus}. We use its individual records as source
components, not its differential list as a multimorbidity label. This permits
us to form an unordered concurrent set by sampling diagnoses and merging
compatible records under age, sex, and finding-consistency constraints.

This design also determines the metrics. Multi-label learning distinguishes
per-label objectives from subset-exact evaluation, in which an output is
correct only when its complete label set is correct
\citep{dembczynski2012label}. Exact match is deliberately stringent, so we
report it together with Jaccard, precision, recall, and predicted cardinality.
These measures separate partial recovery from returning unsupported labels and
from correctly registering the number of diagnoses. Recent work on language
models in multi-label prediction likewise finds that the output formulation can
affect how many labels are emitted \citep{ma2025multilabel}. Our contribution
is not a new multi-label objective; it is to evaluate these set-level outcomes
when the evidence must first be acquired during a consultation.

\paragraph{Controlling case composition.}
Clinical coding, phenotyping, and comorbidity extraction also predict multiple
labels, but begin from a fixed record whose evidence is already available
\citep{dengcomorbidity2026}. They are therefore useful comparators for set
prediction but do not isolate question selection. Real multimorbid encounter
records would offer strong clinical realism, yet their condition labels,
evidence provenance, and privacy constraints make matched interventions
difficult. \CLIMB{} instead uses two source-grounded constructions. The
ePOCT+ branch composes compatible paths from a pediatric clinical decision
algorithm \citep{tan2023epoct}; the DDXPlus branch merges compatible source
records. In both, the accepted case retains a target set and component-level
support. This permits controlled changes to diagnosis count and information
condition while keeping the evaluation target explicit. The construction is a
synthetic benchmark, not a claim that its generated distribution is a clinical
cohort; Appendix~\ref{app:generation} documents the source resources,
constraints, and generated-population profiles.

\paragraph{Diagnostic error and classical decision support.}
Clinical diagnostic-error research provides useful concepts without identifying
the mechanism of an LLM failure. Premature closure is the failure to continue
considering reasonable alternatives after an initial diagnosis, and was the
most common cognitive factor in a retrospective study of internal-medicine
diagnostic errors \citep{graber2005diagnostic}. Anchoring and premature closure
therefore motivate analyses of whether an early presentation receives more
attention, whether diagnoses appear and disappear from successive outputs, and
whether additional questioning changes the final set. They do not license an
inference from any one of these observables to a model's internal state.

Earlier clinical decision-support systems already represented multiple diseases
and their shared findings. INTERNIST-1 was designed as a general internal
medicine diagnostic consultant \citep{miller1982internist}, while QMR-DT
reformulated its knowledge base as a probabilistic model over diseases and
findings \citep{shwe1991probabilistic}. Joint disease representation and
interactive diagnosis are therefore longstanding ideas. There is also an ongoing research direction to merge these classical system with LLMs \citep{kesmen2026mobayesmodularbayesianframework}.

\CLIMB{} applies
them to contemporary language-model consultations with a reproducible,
component-preserving synthetic generator; full-set scoring; and matched
comparisons of evidence availability, diagnosis trajectories, and stopping.

%% file: appendix/patient_generation/patient_generation.tex
\section{Patient Generation}
\label{app:generation}
\label{app:comor_details}

\input{appendix/patient_generation/dataset_overview.tex}

\input{appendix/patient_generation/comor_epoct.tex}

\input{appendix/patient_generation/comor_ddx.tex}

\input{appendix/patient_generation/comor_masks.tex}

\input{appendix/patient_generation/comor_background.tex}

%% file: appendix/patient_generation/dataset_overview.tex
\subsection{Source Data and Record Representation}
\label{app:comor_sources}

The two sources provide different building blocks: ePOCT+ supplies
clinical decision rules and cohort-derived diagnosis frequencies;
DDXPlus supplies synthetic patient records with one designated target
pathology. The populations constructed from them contain sets of
concurrent target diagnoses. Source-resource counts below describe
the local files used for construction; generated-population statistics
are reported separately in Appendix~\ref{app:comor_population}.

\subsubsection{ePOCT+: Pediatric Decision Rules and Cohort Statistics}
\label{app:source_epoct}

ePOCT+ is a clinical decision-support algorithm for assessing sick
children under 15 years in primary care. It combines history, symptoms,
examination findings, and selected point-of-care tests to guide diagnosis
and management; the associated clinical trial evaluated its use in
Tanzania \citep{tan2023epoct}. In our construction, the algorithm supplies
the findings associated with a diagnosis, while recorded cohort diagnoses
supply its sampling frequency and pairwise associations.

The local algorithm export contains 617 nodes, 132 diagnosis branches,
and 235 final-diagnosis IDs representing 216 distinct normalized label
strings. These counts refer to different objects: several IDs can share
a label, and branches can reuse clinical variables. Nodes include
symptoms, signs, examinations, tests, routing conditions, and management
questions. The generator removes treatment/referral conditions and
withholds routing/calculation nodes from the emitted findings.

The stored cohort aggregate has a denominator of 36,779 records,
166 diagnosis-label strings, and 1,959 observed unordered label pairs.
These are the contents of the local aggregate, rather than the clinical
trial's enrollment or a count of unique children. Matching labels to
the graph and applying the label mask leaves 142 proposal labels and
1,363 observed pairs between them. The finite accepted population
contains fewer labels, as reported below. Figure~\ref{fig:comor_component_example}
shows how two selected components produce one record.

\subsubsection{DDXPlus: Synthetic Records for Differential Diagnosis}
\label{app:source_ddx}

DDXPlus is a public synthetic dataset developed for automatic symptom
collection and diagnosis \citep{fansitchango2022ddxplus}. Its original
construction combines a medical knowledge base, demographic simulation,
and a rule-based diagnostic system. The released condition set covers
49 pathologies associated with presentations involving cough, sore
throat, or breathing difficulties. It includes both children and adults.
Each record has age, recorded sex, an initial symptom, symptom and
antecedent evidence, one target pathology, and a differential diagnosis.
The differential lists alternative explanations; it is not a set of
confirmed concurrent diseases. Our target sets are built by combining
records indexed by their target pathology.

The local copy contains 1,292,579 records: 1,025,602 training, 132,448
validation, and 134,529 test records. All three splits contain all 49
target pathologies and span recorded ages 0--109 years. Construction
uses the training split for prevalence estimates and source records;
the cached sampling pools contain 71,479 records, capped at 1,500 per
pathology. These source splits are distinct from our subsequently
generated benchmark population and its evaluation subset.

The evidence schema contains 223 variables: 110 symptoms and 113
antecedents, with the answer types in Table~\ref{tab:comor_ddx_schema}.
Antecedents describe medical history and risk factors. Binary variables
encode presence; categorical variables encode one answer, including
ordered scales; multi-select variables can carry several answers.
For example, the fever question is a binary variable. Evidence IDs and
value tokens are rendered into question--answer statements when records
are merged. A multi-select variable contributes one rendered statement,
even if it contains several selected values.

\input{appendix/patient_generation/data_profile/ddx_schema_table.tex}

\subsection{Generated Populations and Evaluation Subsets}
\label{app:comor_population}

\paragraph{Populations and evaluated subsets.}
Each stored population contains 800 cases, with 200 at each
$k\in\{1,2,3,4\}$ (Table~\ref{tab:comor_population_strata}). The ePOCT+
file contains 96 observed diagnosis labels and 418 distinct target sets;
DDXPlus contains 47 of its 49 source labels and 595 target sets. The main
evaluation uses every stored record, 200 at each $k$ and 800 per source
(Table~\ref{tab:comor_loader_profile}), for every condition and every model.
The oracle-count and single-diagnosis controls use the first 50 records at
each $k$.

\input{appendix/patient_generation/data_profile/population_table.tex}
\input{appendix/patient_generation/data_profile/loader_table.tex}

\paragraph{Findings and demographics.}
A signal finding is one retained rendered statement after excluding
the separately tagged background additions; it can be a symptom,
antecedent, test, or other retained rule variable. Figure~\ref{fig:comor_finding_distributions}
shows its distribution by $k$. The ePOCT+ medians are
3, 6, 10, and 14 findings; the DDXPlus medians are 15, 24, 32, and 38.
These counts describe record length, not the number of positive symptoms
or the number of questions needed to identify the targets.

Across all 800 cases, ePOCT+ ages have median 7.8 years
(interquartile range 3.8--11.5), with 49.5\% recorded female; DDXPlus
has median 44 years (28--61), with 51.2\% recorded female. These are
generated demographics: ePOCT+ ages are sampled from compatible
windows, while DDXPlus uses a compatible anchor age. They are not age
distributions measured in the ePOCT+ clinical cohort.

\begin{figure}[htbp]
\centering
\includegraphics[width=\linewidth]{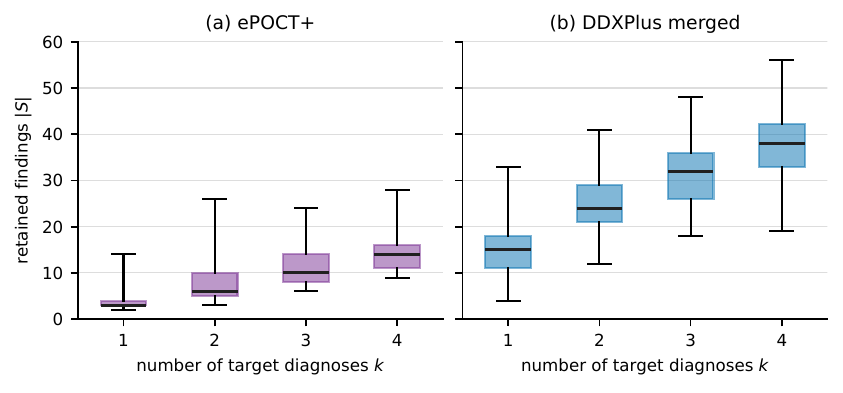}
\caption{Finding counts in the ePOCT+ and merged DDXPlus
populations, with 200 cases per box. Boxes show the interquartile range,
central lines the median, and whiskers the full observed range.
Separately injected background complaints are excluded.}
\label{fig:comor_finding_distributions}
\end{figure}

\paragraph{Evidence support.}
Every sampled ePOCT+ label has a retained path under the rule of
Appendix~\ref{app:comor_epoct}, and a proposal is accepted only if each of
its diagnoses keeps a visible finding that no other component contributes.
Each case stores these components. In all 800 cases their visible findings
reproduce the stored signal, so every target is supported by evidence in the
record. Figure~\ref{fig:comor_component_example} shows one case. The DDXPlus
generator applies the same rule: a case is rejected if any diagnosis keeps no
finding of its own after merging (Algorithm~\ref{alg:comor_ddx}).

\paragraph{The record determines the set.}
\label{app:reader}
To check that a target set can be recovered from its record, we train a
reference reader on the same input the models receive under full information,
the opening and the retained findings. It is a one-vs-rest logistic regression
over binary indicators of the findings and the opening (chief complaint, sex,
age band), trained on cases drawn from the same generators with a different
seed after removing exact duplicates of evaluated records. It returns every
label with probability at least $0.5$, or the most probable label if none
passes; this rule was chosen on a further 7{,}201 generated cases, and the
evaluated records are only scored. Given the true $k$, it returns the $k$ most
probable labels. On DDXPlus, trained on 79{,}877 cases, it recovers most sets
at every $k$, far above every model (Table~\ref{tab:reader}). Accuracy has
saturated with training size: at $k=4$ it is 69\%, 78\% and 79\% with 5{,}000,
20{,}000 and 80{,}000 training cases. ePOCT+ records almost identify their
sets, since 43\% of freshly generated cases duplicate an evaluated record, and
the reader recovers 99 to 100\% at every $k$. The reader is a learned
reference on generator output rather than an optimal questioning policy; it
shows that the full record determines most target sets, so low exact-set
recovery is not a property of the task.

\begin{table}[h]
\centering
\small
\caption{Exact-set recovery (\%) from the full DDXPlus record, 200 cases per
$k$: reference reader without and with the true $k$, the best of the six
models, and GPT-5.6. Brackets are 95\% case-bootstrap intervals.}
\label{tab:reader}
\begin{tabular}{lcccc}
\toprule
$k$ & Reader & Reader, $k$ given & Best model & GPT-5.6 \\
\midrule
1 & 99.5 [98.5, 100] & 100 & 76.5 & 30.0 \\
2 & 91.0 [87.0, 94.5] & 96.5 & 23.0 & 10.0 \\
3 & 79.5 [74.5, 85.5] & 91.0 & 8.5 & 3.5 \\
4 & 79.0 [73.0, 84.5] & 89.5 & 2.5 & 0.5 \\
\bottomrule
\end{tabular}
\end{table}

\input{appendix/patient_generation/data_profile/component_example.tex}

\paragraph{Reproducibility.}
The accompanying script \texttt{tools/describe\_benchmark\_data.py}
computes these summaries from the local files and exports per-diagnosis
frequencies, population summaries, the figure, and source-file hashes.
Source split sizes are checked against the actual Arrow records. Quartiles
use linear interpolation. \texttt{tools/epoct\_v2\_support.py} checks the
stored components of every ePOCT+ case against its signal and draws the
worked example.

%% file: appendix/patient_generation/data_profile/ddx_schema_table.tex
\begin{table}[htbp]
\centering\small
\caption{Evidence-variable schema in the local DDXPlus release. Antecedents encode medical history and risk factors. Counts refer to variables, not answer tokens.}
\label{tab:comor_ddx_schema}
\begin{tabular}{@{}lrrr@{}}
\toprule
Answer type & Symptoms & Antecedents & Total \\
\midrule
Binary & 96 & 112 & 208 \\
Categorical & 9 & 1 & 10 \\
Multi-select & 5 & 0 & 5 \\
\midrule
Total & 110 & 113 & 223 \\
\bottomrule
\end{tabular}
\end{table}

%% file: appendix/patient_generation/data_profile/population_table.tex
\begin{table}[htbp]
\centering\small
\caption{Stored populations. Finding counts exclude added background and show median [first, third quartile]. Diagnosis-set counts ignore ordering. Each stratum has 200 cases.}
\label{tab:comor_population_strata}
\begin{tabular}{@{}llrrrr@{}}
\toprule
Population & $k$ & Cases & Unique labels & Distinct sets & Signal findings \\
\midrule
ePOCT+ & 1 & 200 & 33 & 33 & 3 [3, 4] \\
 & 2 & 200 & 47 & 92 & 6 [5, 10] \\
 & 3 & 200 & 68 & 134 & 10 [8, 14] \\
 & 4 & 200 & 72 & 159 & 14 [11, 16] \\
\midrule
DDXPlus merged & 1 & 200 & 44 & 44 & 15 [11, 18] \\
 & 2 & 200 & 46 & 163 & 24 [21, 29] \\
 & 3 & 200 & 46 & 191 & 32 [26, 36] \\
 & 4 & 200 & 47 & 197 & 38 [33, 42.2] \\
\bottomrule
\end{tabular}
\end{table}

%% file: appendix/patient_generation/data_profile/loader_table.tex
\begin{table}[htbp]
\centering\small
\caption{Evaluated cases: all 200 stored cases at each $k$ of each population. Label counts describe this subset, not the candidate vocabulary. Ages are generated ages in years; F is the recorded female category.}
\label{tab:comor_loader_profile}
\begin{tabular}{@{}lrrrr@{}}
\toprule
Population & Cases & Unique labels & Age, median [Q1, Q3] & F (\%) \\
\midrule
ePOCT+ & 800 & 96 & 7.8 [3.8, 11.5] & 49.5 \\
DDXPlus merged & 800 & 47 & 44 [28, 61] & 51.2 \\
\bottomrule
\end{tabular}
\end{table}

%% file: appendix/patient_generation/data_profile/component_example.tex
\begin{figure}[htbp]
\centering
\resizebox{\linewidth}{!}{%
\begin{tikzpicture}[
 component/.style={draw=black!55,fill=black!3,rounded corners=2pt,
 align=left,text width=6.1cm,inner sep=7pt,font=\small},
 merged/.style={draw=black!55,align=center,text width=11cm,inner sep=6pt,font=\small}]
\node[component,anchor=north] (c0) at (-3.5,0) {\textbf{Bacterial acute pharyngitis}\\ Cape Town CDR: Yes\\ \textbf{Runny or blocked nose: Yes}\\ Sore throat: Yes\\ Tonsillar swelling: Present};
\node[component,anchor=north] (c1) at (3.5,0) {\textbf{Common Cold}\\ Cough: Yes\\ Difficulty breathing: Yes\\ \textbf{Runny or blocked nose: Yes}};
\node[merged,anchor=north] (signal) at (0,-2.5) {\textbf{Merged signal: 6 distinct findings}\\
Shared findings (bold) appear once in the record.};
\draw[->] (c0.south) -- (signal.north west);
\draw[->] (c1.south) -- (signal.north east);
\node[merged,anchor=north] (record) at (0,-3.9) {\textbf{Emitted case: 2 target diagnoses, 6 signal findings, 2 background complaints}\\
Background: night-time urination; abdominal bloating.};
\draw[->] (signal.south) -- (record.north);
\end{tikzpicture}}
\caption{A stored ePOCT+ case (\texttt{epoct\_k2\_81}). Each diagnosis keeps at least one
finding of its own; shared findings are retained once. Hidden routing and age
conditions are omitted.}
\label{fig:comor_component_example}
\end{figure}

%% file: appendix/patient_generation/comor_epoct.tex
\subsection{ePOCT+ Statistics, Paths, and Rejection}
\label{app:comor_epoct}
\label{app:data}

The cohort table records accepted final diagnoses for each consultation.
We count consultations $n$, diagnoses $c_d$, and unordered pairs
$c_{dd'}$ to estimate $\widehat P(d)=c_d/n$ and
$\widehat P(d,d')=c_{dd'}/n$. Label matching maps cohort entries to
final-diagnosis IDs in the source graph; the label mask in
Appendix~\ref{app:comor_masks} removes screened entries. For observed
pairs, the sampling score is
$\log\widehat P(d)+\sum_{d'\in R}\log\ell(d,d')$.
An unobserved pair contributes $-10^9$. This is a score penalty,
not an explicit pair-exclusion test: if every remaining candidate
receives a penalty, the routine still normalizes the available scores.

Algorithm~\ref{alg:comor_epoct} summarizes the sampling loop that
produced the evaluated population. For a proposed set of $k$ diagnoses,
construction proceeds as follows.
\begin{enumerate}
 \item Enumerate root-to-diagnosis paths in each diagnosis branch and
 collect their conditions in path order, including conditions on the
 final diagnosis. Drop treatment and referral conditions.
 \item Treat several answers listed for one variable as alternatives and
 intersect them along the path. Retain a path if every variable keeps at
 least one answer and the path has a visible finding.
 \item Choose a retained path uniformly and one allowed answer per
 variable. A diagnosis with no retained path rejects the proposal.
 \item Merge assignments. Reject conflicting answers for a shared
 variable, a pair listed in either diagnosis's exclusion rules, or an
 empty intersection of branch age windows. Require every diagnosis to
 keep a visible finding that no other component contributes, and at
 least two visible findings after merging; complaint-category and
 background-calculation nodes do not count as visible findings.
 \item On rejection, draw a new target set of the same cardinality;
 a diagnosis is never dropped to reduce $k$.
\end{enumerate}

For an accepted signal, age is sampled uniformly from the intersected
window, bounded by birth and 15 years, and sex is sampled uniformly
from the two recorded categories, and the case receives an opening
presentation. The
rejection checks establish consistency of the extracted components;
they do not establish clinical label completeness.

\paragraph{Evidence support.}
Each stored case keeps the component of every diagnosis. In all 800
cases, the visible findings of the components reproduce the stored
signal, and every diagnosis has at least one finding of its own
(Appendix~\ref{app:comor_population}).

%% file: appendix/patient_generation/comor_ddx.tex
\subsection{DDXPlus Associations, Masks, and Record Merging}
\label{app:comor_ddx}
\label{app:ddx_generation}

\paragraph{Prevalence and co-occurrence ratios.}
The current generator estimates pathology prevalence from the DDXPlus
training split and caches up to 1,500 source records per pathology
using reservoir sampling. An offline elicitation asks an LLM for each
pair's co-occurrence multiplier relative to independence, separately
from how common the diseases are. The stored ratios aggregate
two elicitation rollouts by their geometric mean, equivalently the
arithmetic mean of log ratios. They are not direct observations
of concurrent diagnoses in DDXPlus.
Appendix~\ref{app:comor_elicitation} gives the association and complaint-phrase
templates used to prepare these stored resources.

The associated probability-construction script converts a multiplier
into a pairwise joint probability and clips it to its Fr\'echet bounds:
\begin{equation}
 \begin{aligned}
 \widetilde P(d,d')&=\operatorname{clip}\!\left(
 \ell(d,d')\widehat P(d)\widehat P(d'),L_{dd'},U_{dd'}\right),\\
 L_{dd'}&=\max\{0,\widehat P(d)+\widehat P(d')-1\},\qquad
 U_{dd'}=\min\{\widehat P(d),\widehat P(d')\}.
 \end{aligned}
 \label{eq:comor_clipped_joint}
\end{equation}
Clipping bounds each pairwise probability; it does not validate a
higher-order joint distribution. The current sequential generator
uses the stored multipliers directly in Equation~\ref{eq:sampler},
rather than sampling from this clipped joint matrix. The matrix belongs
to the earlier copula-based construction, which used marginals from
all source splits; it is not an additional stage of the present sampler.

\paragraph{Compatibility mask.}
A separate elicitation assigns each pair an infeasible, unlikely, or
plausible verdict (Appendix~\ref{app:comor_masks}). Only infeasible pairs
are hard exclusions. At each sampling step, a candidate is available
only if it forms no excluded pair with the selected set and preserves
a nonempty common age window. If no candidate is available, sampling
restarts. This check is distinct from association weighting: a low
multiplier reduces probability, whereas an exclusion forbids the pair.

\paragraph{Realization and rejection.}
Algorithm~\ref{alg:comor_ddx} gives the diagnosis and record sampling loops.
Each pathology's age window is the empirical 5th--95th percentile
interval in its cached record pool. For a sampled set, choose a sex
represented in every pool and an integer anchor age in the common
window. Draw one record per diagnosis of that sex within ten years
of the anchor. Merge binary evidence by union; merge multi-select
values by union, integer-valued ordered categorical answers by maximum,
and reject a nominal categorical variable with multiple distinct values.
The implementation identifies ordered variables from integer-valued
answer schemas.

Record selection is retried up to 60 times. Failure to find a compatible
realization causes the caller to resample the diagnosis set. Accepted
signals must contain at least two rendered findings. The final opening
uses the anchor age and the initial evidence of the target pathology
with the highest recorded severity. Stored phrase mappings turn that
evidence into a complaint. Neither association elicitation nor masking
is called during patient generation.

%% file: appendix/patient_generation/comor_masks.tex
\subsection{Offline Masking Prompts}
\label{app:comor_masks}

Both masking scripts send batches of up to 40 numbered entries with
the system message: ``You are a careful clinician. Output ONLY JSON.''
The default model in the scripts is \texttt{openai/gpt-5.6}, overridable
through the elicitation-model setting. The following are the user
templates, with batch-specific lists and final indices represented
by placeholders.

\paragraph{ePOCT+ label mask.}
Every returned verdict other than \texttt{ok} is written to the label
mask and excluded from diagnosis sampling.
\begin{quote}\small\raggedright
These labels come from the final-diagnosis list of a pediatric clinical
decision tree. For each numbered label, judge whether it is a real
disease / clinical diagnosis, or a nonsense entry for a disease list.

Answer `ok' for a real disease, syndrome or clinical classification
(e.g. `Severe pneumonia' is ok). Otherwise answer with one word for
why it is NOT a disease:

status = patient state (HIV status, exposure, weight-for-age, mother's status)\\
test\_result = outcome of a test (negative HIV test, negative pregnancy test)\\
follow\_up = follow-up visit outcome (improved / not improved / resolved X)\\
symptom = a symptom or sign, not a diagnosis (tooth pain, loss of appetite)\\
admin = visit type or care gap (follow-up consultation, screening, vaccination)\\
bucket = vague severity/danger bucket (critical illness, danger signs)

[Numbered labels]

Return ONLY a JSON object mapping each number to its verdict, e.g.
\texttt{\{"0":"ok","1":"status"\}}. Include every number 0..[last].
\end{quote}

\paragraph{DDXPlus pairwise mask.}
Verdicts 0 and 1 are recorded in the mask table; only verdict 0 is
enforced as a hard exclusion by the generator.
\begin{quote}\small\raggedright
For each numbered pair of diseases, judge whether one patient can have
BOTH at the same time.

0 = infeasible: cannot coexist (mutually exclusive by definition,
contradictory states of the same disease, or one rules out the other).\\
1 = possible but VERY unlikely to co-occur in practice (e.g. typical-age
mismatch, near-disjoint patient populations).\\
2 = plausible co-occurrence (even if uncommon).

[Numbered disease pairs]

Return ONLY a JSON object mapping each number to 0, 1 or 2, e.g.
\texttt{\{"0":2,"1":0\}}. Include every number 0..[last].
\end{quote}

These screens propose the masks, and our clinicians verified both, the
ePOCT+ label mask and the DDXPlus exclusions. The masks restrict which labels and pairs are sampled; they do
not replace a clinical review of every generated case.

%% file: appendix/patient_generation/comor_background.tex
\subsection{Background Sampling}
\label{app:comor_background}
\label{app:background}

Let $\mathcal B_s$ be the retained symptom vocabulary for recorded sex
$s$, with survey weights $w_s(b)$. For ePOCT+, retain only entries
marked as applicable to children. Write the count as the shifted
binomial variable
\begin{equation}
 N=1+Z,\qquad Z\sim\operatorname{Binomial}(1,\tfrac12).
 \label{eq:comor_background_count}
\end{equation}
Thus $\Pr(N=1)=\Pr(N=2)=1/2$, $\mathbb E[N]=3/2$, and
$\operatorname{Var}(N)=1/4$. This is distributionally identical to
the implemented integer draw from one through two; no generator
change is needed for this parameterization. Conditional on $N$, sample sequentially without
replacement. If $B_{j-1}$ contains
previously selected complaints,
\begin{equation}
 \Pr(b_j=b\mid s,B_{j-1})=
 \frac{w_s(b)}{\sum_{u\in\mathcal B_s\setminus B_{j-1}}w_s(u)},
 \qquad b\in\mathcal B_s\setminus B_{j-1}.
\end{equation}
If the vocabulary is smaller than the requested count, the routine
uses its available entries. The current vocabulary has sufficient
entries for the requested counts. Each sampled complaint is rendered
as an atomic reported finding and recorded in a separate background
list. The survey concerns adults; pediatric filtering is a construction
choice rather than an estimate of pediatric prevalence. Added complaints
are not subjected to a further diagnosis-preservation check in the
current code.

The evaluation loader presents the complete record $x=S\cup B$ in every
condition; the same record feeds the full-information prompt and the
patient simulator.

\input{appendix/patient_generation/algorithms.tex}
\input{appendix/patient_generation/elicitation_prompts.tex}

%% file: appendix/patient_generation/algorithms.tex
\subsection{Construction Algorithms}
\label{app:signal_algo}

The masks, source statistics, and record pools are prepared once. The
algorithms below describe case generation after this preparation;
they make no LLM calls. $\bot$ denotes a rejected proposal.
\textsc{DrawBackground} uses Equation~\ref{eq:comor_background_count}
and the weighted sampling rule in Appendix~\ref{app:comor_background}.

\begin{algorithm}[H]
\caption{ePOCT+ generation of $n_k$ cases at cardinality $k$}
\label{alg:comor_epoct}
\begin{algorithmic}[1]
\Require Graph $\mathcal T$, screened labels, prevalence and association tables
\State $C\gets[\,]$
\While{$|C|<n_k$}
 \State Draw $\mathcal D$ of size $k$ with Equation~\ref{eq:sampler}
 \State $U\gets\emptyset$; $W\gets[0,15\text{ years})$; $\mathrm{valid}\gets\mathrm{true}$
 \For{$d\in\mathcal D$}
  \State Enumerate root-to-diagnosis paths; remove treatment/referral conditions
  \State Retain paths on which every variable keeps an allowed answer and a finding is visible
  \State If none exists, $\mathrm{valid}\gets\mathrm{false}$; \textbf{break}
  \State Draw a retained path uniformly and one allowed answer per variable
  \State Merge its assignments into $U$; intersect $W$ with the diagnosis age window
  \If{answers conflict or $W$ is empty}
   \State $\mathrm{valid}\gets\mathrm{false}$; \textbf{break}
  \EndIf
 \EndFor
 \State $S\gets$ visible findings in $U$
 \If{not valid, an exclusion holds, a diagnosis has no own visible finding, or $|S|<2$}
  \State \textbf{continue} \Comment{Resample all $k$ diagnoses}
 \EndIf
 \State Sample age in $W$ and recorded sex $s$; form the opening presentation
 \State $B\gets\Call{DrawBackground}{s,\mathrm{pediatric}=\mathrm{true}}$
 \State Append $(\mathcal D,x=S\cup B,B,\text{opening})$ to $C$
\EndWhile
\State \Return $C$
\end{algorithmic}
\end{algorithm}

\begin{algorithm}[H]
\caption{DDXPlus generation of $n_k$ cases at cardinality $k$}
\label{alg:comor_ddx}
\begin{algorithmic}[1]
\Require Pathology pools, prevalence and multiplier tables, infeasible-pair mask, age windows
\State $C\gets[\,]$
\While{$|C|<n_k$}
 \State $R\gets\emptyset$; $W\gets[0,200]$
 \For{$j=1,\ldots,k$}
  \State $A\gets$ unselected diagnoses with no masked pair against $R$ and nonempty age intersection
  \If{$A=\emptyset$} \State $R\gets\bot$; \textbf{break} \EndIf
  \State Draw $d$ with Equation~\ref{eq:sampler} normalized over $A$
  \State $R\gets R\cup\{d\}$; intersect $W$ with the age window for $d$
 \EndFor
 \If{$R=\bot$} \State \textbf{continue} \EndIf
 \If{no sex is represented in every selected pool} \State \textbf{continue} \EndIf
 \State $S\gets\bot$
 \For{up to 60 realization attempts}
  \State Choose a sex $s$ present in every selected pool and an anchor age in $W$
  \State Choose one record per diagnosis of that sex within ten years of the anchor
  \If{a required record is unavailable} \State \textbf{continue} \EndIf
  \State Merge binary/multi-select values by union and ordered values by maximum
  \If{nominal values conflict} \State \textbf{continue} \EndIf
  \If{some diagnosis keeps no finding of its own} \State \textbf{continue} \EndIf
  \State $S\gets$ rendered merged findings; \textbf{break}
 \EndFor
 \If{$S=\bot$ or $|S|<2$} \State \textbf{continue} \EndIf
 \State Form the opening using the anchor and highest-severity component's initial complaint
 \State $B\gets\Call{DrawBackground}{s,\mathrm{pediatric}=\mathrm{false}}$
 \State Append $(\mathcal D=R,x=S\cup B,B,\text{opening})$ to $C$
\EndWhile
\State \Return $C$
\end{algorithmic}
\end{algorithm}

Neither outer loop has a global attempt limit. Neither
algorithm adds a background label-preservation check after acceptance.
The ePOCT+ path checks and DDXPlus schema merge should therefore be
read as the implemented consistency rules, not a proof of clinical
validity or unique identifiability of every target set.

%% file: appendix/patient_generation/elicitation_prompts.tex
\subsection{Association and Complaint-Phrase Elicitation}
\label{app:comor_elicitation}

The stored co-occurrence ratios and opening-phrase dictionary were
prepared by the resource-construction script \texttt{build\_joint.py}.
The templates below document this offline preparation. Its default
model is \texttt{gpt-5.6}; coefficient elicitation uses two rollouts and
batches of 20 pairs by default. Positive numerical responses are
aggregated in log space, with invalid responses omitted and the number
of valid responses recorded. The phrase dictionary is elicited in
batches of 50 evidence questions. The current generator reads the
saved tables; it does not repeat these calls for each patient.

\lstdefinestyle{comoroffline}{
 basicstyle=\footnotesize\ttfamily,
 columns=fullflexible,breaklines=true,breakatwhitespace=true,
 keepspaces=true,showstringspaces=false,
 frame=single,framerule=0.3pt,framesep=4pt,
 aboveskip=6pt,belowskip=8pt,
 literate={—}{{---}}1
}

\par\smallskip\noindent
\begin{minipage}{\linewidth}
\paragraph{Association multiplier.}
The first paragraph is the system message and the remainder is the
user template. The numbered pair list expands to the current batch.
\begin{lstlisting}[style=comoroffline]
You are a careful clinical epidemiologist with deep knowledge of disease comorbidity and shared etiology. Output ONLY the requested JSON — no prose.

For each numbered pair of conditions, estimate the CO-OCCURRENCE FACTOR: how many times more (or less) likely the two occur TOGETHER in the same patient vs if they were statistically independent.
  1.0 = independent; >1 comorbid (e.g. 20 HIV & tuberculosis, 6 COPD exacerbation & pulmonary embolism); <1 rarely together.
Judge ONLY association strength from real comorbidity/shared etiology; it is INDEPENDENT of how common each disease is.

0. "<disease A>"  +  "<disease B>"

Return ONLY a JSON object mapping each pair number to its factor, e.g. {"0":1.0,"1":8.5}. Include every number 0..<last index>.
\end{lstlisting}
\end{minipage}\par

\par\smallskip\noindent
\begin{minipage}{\linewidth}
\paragraph{Chief-complaint phrase dictionary.}
The first line is the system message and the remainder is the user
template. This operation converts questionnaire text into a stored
opening phrase; merged clinical findings retain their source questions
and answers.
\begin{lstlisting}[style=comoroffline]
Clinical terminology normalizer. Output ONLY JSON.

Convert each clinical questionnaire question into a SHORT chief-complaint phrase: a 1-4 word lowercase patient-symptom noun phrase, no question mark. Examples: 'Do you have a cough?' -> cough ; 'Do you feel your heart is beating fast?' -> palpitations ; 'Is your skin much paler than usual?' -> pallor.
Return ONLY a JSON object mapping each evidence code to its phrase.

<evidence code>	<question text>
\end{lstlisting}
\end{minipage}\par

%% file: appendix/comorbidity.tex
\section{Comorbidity versus Multimorbidity}
\label{app:comorbidity}
\label{app:comorbidity:terms}

\emph{Comorbidity} and \emph{multimorbidity} describe coexisting health
conditions from different perspectives. Comorbidity takes a specified
\emph{index disease} as its reference and considers additional conditions in
relation to it. Multimorbidity describes the presence of two or more conditions
without assigning one this privileged role \citep{valderas2009defining}.
For a patient with diabetes and asthma, a study focused on diabetes would
treat asthma as a comorbidity, whereas considering both conditions without
an index disease would describe multimorbidity. The same patient can thus be
described from either perspective.

Neither term specifies the relationship between the conditions. Coexisting
diseases may be causally related, share risk factors, or occur together by
chance; comorbidity does not require a causal link, and multimorbidity does not
imply statistical independence \citep{valderas2009defining}. Similarly, symptom
overlap is a separate property: sharing a finding does not turn multimorbidity
into comorbidity, and having distinct findings does not establish independence.
These properties should be described directly as disease association or
finding overlap, rather than used to define the two terms.

In our benchmark, the target $\mathcal D$ is an unordered set of concurrent
diagnoses, and the task is to recover all its members. Cases with
$k=|\mathcal D|\geq 2$ therefore concern multiple conditions without a
designated index disease. Because the sources include acute conditions, we
use \emph{concurrent diagnoses} to describe this task without restricting it to
chronic disease. The \emph{comorbidity sampler} models associations between
diagnoses during case construction; its sequential ordering does not make the
first sampled diagnosis clinically primary. The presenting complaint likewise
opens the consultation without giving one diagnosis priority in the target
set or scoring.

%% file: appendix/experiments/experiments.tex
\clearpage
\section{Experimental Protocols, Prompts, and Analysis}
\label{app:experiments}

\input{appendix/experiments/comor_setup.tex}

\input{appendix/experiments/comor_protocol.tex}

%% file: appendix/experiments/comor_setup.tex
\subsection{Data, Models, and Decoding}
\label{app:comor_setup}
\label{app:models}

The main evaluation takes all 200 generated records in each cardinality
stratum, and the oracle-count and single-diagnosis controls take the first
50, keeping case identities fixed for comparisons. The intended
doctor roster uses the following provider identifiers:
\begin{quote}\small\ttfamily
openai/gpt-5.6-sol\\
google/gemini-3.8-flash\\
qwen/qwen3.8-flash\\
google/gemma-4-26b-a4b-it\\
z-ai/glm-5.3-flash\\
deepseek/deepseek-v4.1-flash
\end{quote}
\paragraph{Model selection.}
\label{app:model_selection}
GPT-5.6 and Gemini-3.8-Flash represent frontier closed models. On the Arena
text leaderboard (open-model ranking, 13 September 2026;
\url{https://arena.ai}), GLM and DeepSeek are among the strongest open models,
and the Gemma-4 and Qwen3.8 families are among the strongest below 100B
parameters, so the roster spans the best open models and strong smaller ones.

We run these models with reasoning off or at minimal effort, since extra
test-time reasoning does not prevent the loss of accuracy in multi-turn
conversations. Reasoning models deteriorate as much as non-reasoning ones, and
their longer responses introduce more unwarranted
assumptions~\citep{laban2025lost}. Running without reasoning also keeps the
six-model evaluation affordable, since reasoning multiplies the generated tokens.

\paragraph{Reasoning and decoding.}
Reasoning is switched off where the provider allows it and otherwise set to
the lowest available effort (Table~\ref{tab:reasoning}). Every completion is
limited to 16,000 output tokens and uses the provider's default temperature.
The patient always runs with reasoning disabled. Evaluation logs retain model
identifiers, task arguments and generation events. The main scorer matches
candidate names and does not use an LLM judge.

\begin{table}[h]
\centering
\small
\caption{Reasoning settings of the doctor models in the main evaluation.}
\label{tab:reasoning}
\begin{tabular}{lcc}
\toprule
Model & DDXPlus & ePOCT+ \\
\midrule
GPT-5.6             & none             & none \\
Gemini-3.8-Flash    & minimal          & minimal \\
Qwen3.8-Flash       & disabled         & disabled \\
Gemma-4             & disabled         & disabled \\
GLM-5.3-Flash       & minimal   & minimal \\
DeepSeek-V4.1-Flash & disabled         & disabled \\
\bottomrule
\end{tabular}

\end{table}

\paragraph{Patient selection.}
\label{app:patient_screen}
The patient model is chosen before any doctor is evaluated. Each candidate
plays the patient for ten records stratified over $k=1$ to $4$ and answers six
probes per record: three about findings present in the record, which should be
answered \emph{yes}, and three about findings absent by construction, which
should be answered \emph{no}. The score is the mean of the two accuracies; we
also count answers that name the patient's own diagnosis without support in
the record. The rule selects the cheapest candidate within $0.05$ of the best
score. DeepSeek-V4.1-Flash is the cheapest candidate in that band
(Table~\ref{tab:patient_screen}). It is also one of the doctor models, and in interaction it is in the lower half of exact-set recovery on both sources (Table~\ref{tab:main}), so sharing the patient gives it no visible advantage. The
screen checks factual answering on a stratified sample rather than every
consultation; because the same patient serves every doctor, any residual
error is shared across models.

\begin{table}[h]
\centering
\small
\caption{Patient-selection screen: accuracy on present and absent findings,
their mean, and cost per probe. No candidate named an unsupported diagnosis.}
\label{tab:patient_screen}
\begin{tabular}{lcccc}
\toprule
Candidate & Present & Absent & Score & USD / probe \\
\midrule
Gemini-3.8-Flash              & 1.00 & 0.93 & 0.97 & $1.4\times10^{-3}$ \\
Qwen3.8-Flash                 & 1.00 & 0.87 & 0.93 & $2.2\times10^{-4}$ \\
\textbf{DeepSeek-V4.1-Flash}  & 1.00 & 0.83 & 0.92 & $1.5\times10^{-4}$ \\
Qwen3.8-27B                   & 1.00 & 0.83 & 0.92 & $1.6\times10^{-3}$ \\
Ling-3.0-Flash                & 1.00 & 0.73 & 0.87 & $1.8\times10^{-5}$ \\
GPT-5.6-Luna                  & 0.97 & 0.77 & 0.87 & $1.3\times10^{-4}$ \\
Qwen3.7-Flash                 & 1.00 & 0.67 & 0.83 & $1.4\times10^{-4}$ \\
DeepSeek-V4-Flash             & 0.97 & 0.67 & 0.82 & $2.6\times10^{-5}$ \\
\bottomrule
\end{tabular}
\end{table}

%% file: appendix/experiments/comor_protocol.tex
\subsection{Consultation Protocol and Task Mapping}
\label{app:comor_protocol}

The static prompt adapts MediQ's non-interactive setting
\citep{li2024mediq}; the free-text doctor and patient prompts adapt
AgentClinic \citep{schmidgall2024agentclinic}. Both request a diagnosis
set from the supplied vocabulary. The interactive doctor is explicitly
told that several diseases can coexist and to continue investigating
after identifying one likely condition. The oracle-count prompt replaces
this instruction with the true cardinality and an exact-count requirement;
the single-diagnosis control requests the single most likely diagnosis.
All use the same final-answer marker; set predictions are separated by
semicolons. Appendix~\ref{app:sim:prompts} reproduces the implemented
templates and Appendix~\ref{app:sim:deviations} identifies the adaptations.

\paragraph{Stopping and intermediate outputs.}
For trajectory analyses, we record the natural stopping time $\tau$
and final prediction, remove the commitment from the dialogue, and
instruct the doctor to continue toward the cap. Refusals are logged.
A separate call to the same model at each observed prefix elicits a
supported diagnosis set $\widetilde{\mathcal D}_t$; these probe outputs
are never fed back. We distinguish them from the committed prediction
$\widehat{\mathcal D}$ when comparing recovery before and after
continuation (Appendix~\ref{app:comor_analysis}).

\begin{table}[htbp]
\centering\small
\caption{Task entry points in \texttt{experiments/tasks.py}. All diagnosis
outputs use the shared candidate-name scorer.}
\label{tab:comor_tasks}
\begin{tabular}{@{}p{0.26\linewidth}p{0.30\linewidth}p{0.38\linewidth}@{}}
\toprule
Task & Doctor prompt & Evidence supplied \\
\midrule
\texttt{full} & \texttt{DX\_SYSTEM}, \texttt{dx\_user} & \raggedright Opening and all retained findings. \tabularnewline
\texttt{tree} & Same as \texttt{full} & \raggedright Opening and answers along the stored tree path. \tabularnewline
\texttt{replay} & Same as \texttt{full} & \raggedright Opening and recorded question--answer pairs. \tabularnewline
\texttt{interactive} & \texttt{free\_doctor\_sys} & \raggedright Dialogue with \texttt{patient\_sys}. \tabularnewline
\texttt{interactive\_single} & \texttt{single\_doctor\_sys} & \raggedright Same patient; single-diagnosis instruction. \tabularnewline
\texttt{interactive\_count} & \texttt{count\_doctor\_sys} & \raggedright Same patient; true count disclosed. \tabularnewline
\texttt{interactive\_menu} & \texttt{DOCTOR\_SYS} & \raggedright Deterministic answers to evidence-ID queries. \tabularnewline
\bottomrule
\end{tabular}
\end{table}

Population tasks take the number of cases per cardinality as an argument
(\texttt{per\_k}); the main evaluation uses 200 and the controls use 50. The single-diagnosis
task defaults to $k=1$ with probes disabled; comparisons at other cardinalities
must explicitly override this filter. Other consultation tasks default to
one probe per prefix, with the run argument controlling whether probes are
collected. Replay omits forced turns by default, deduplicates identical
question--answer strings, and skips source samples without recorded turns.
The same doctor model must be selected for the paired replay run.

At each free-text turn, the patient model receives its fixed record
and the current doctor questionp. It is instructed to answer in dialogue using one to three
sentences and not name diseases. The doctor retains the accumulating
conversation. The question budget counts doctor messages, which may
contain more than one interrogative sentence. In the menu condition,
one evidence-ID query is one question, and the deterministic answer
function returns a negative answer when that evidence is absent.

At the first parsed commitment, the system records the natural set
and its question count $\tau$. Trajectory runs remove that commitment
from the doctor's context and request further questions. Three
consecutive refusals or invalid actions terminate the loop. A consultation
with no commitment receives an explicit final-answer request. Natural
predictions remain the scored outputs regardless of subsequent continuation.
Algorithm~\ref{alg:comor_consultation} specifies this loop, including
termination before the nominal question cap.

Independent probes receive the opening and the question--answer
prefix and request currently supported diagnoses, permitting an empty
set. The usual trajectory configuration has one probe per observed
prefix, including the opening-only prefix; controls can disable probes.
No probe is fed back. These are outputs under a separate readout
prompt, not observations of a latent clinical posterior.

Demographic tree splits use opening information, and repeated evidence
IDs count as one question. Natural-stage comparisons exclude forced
turns; plotted trajectories require the reported support threshold.
The reference-tree configuration and information-gain estimator are
specified in Appendix~\ref{app:comor_eig}.

\input{appendix/experiments/consultation_algorithm.tex}
\input{appendix/experiments/analysis_definitions.tex}

\input{appendix/experiments/evidence_matched.tex}
\input{appendix/experiments/info_gain.tex}
\input{appendix/experiments/prompts.tex}

%% file: appendix/experiments/consultation_algorithm.tex
\begin{algorithm}[H]
\caption{Consultation, natural commitment, and independent probes}
\label{alg:comor_consultation}
\begin{algorithmic}[1]
\Require Opening $o$, retained record $S$, doctor, answer channel, cap $T=20$, probe count $m$
\State Initialize doctor context with its system prompt and $o$
\State $Q\gets[\,]$; $\mathrm{natural}\gets\bot$; $f\gets0$
\While{$|Q|<T$ and $f<3$}
 \State Update the free-text question counter, if applicable; generate a doctor reply
 \State Parse a nonempty diagnosis commitment, a question, or an invalid action
 \If{the reply is a question}
  \State Answer using the record and current question (LLM patient) or evidence ID (menu)
  \State Append the question--answer pair to $Q$, tagged forced iff $\mathrm{natural}\ne\bot$
  \State Add the answer to the doctor's context; $f\gets0$
 \ElsIf{the reply is a commitment and $\mathrm{natural}=\bot$}
  \State Store $\mathrm{natural}\gets(\tau=|Q|,\widehat{\mathcal D}=\text{parsed labels})$
  \State Remove this assistant reply; append the channel's continuation instruction
 \Else
  \State Log a refusal or invalid action; $f\gets f+1$
  \State Remove this assistant reply; append the channel's correction instruction
 \EndIf
\EndWhile
\If{$\mathrm{natural}=\bot$}
 \State Request a final answer, parse it, and store $(\tau=|Q|,\widehat{\mathcal D})$
 \State Mark the final answer as forced; flag a parsing failure if applicable
\EndIf
\For{$t=0,\ldots,|Q|$}
 \For{$j=1,\ldots,m$}
  \State Call the same doctor model afresh with the probe prompt, $o$, and $Q_{1:t}$
  \State Store $\widetilde{\mathcal D}_{t,j}$ without updating the consultation
 \EndFor
\EndFor
\State \Return $\widehat{\mathcal D}$ for scoring, $\tau$, $Q$, probes, and protocol events
\end{algorithmic}
\end{algorithm}

\paragraph{Parsing and termination details.}
The voluntary-commit branch requires a nonempty parsed label list.
Consequently, \texttt{Diagnosis Ready: none} is a valid empty probe
response but does not trigger voluntary stopping: free-text parsing
treats it as dialogue, while menu parsing treats it as invalid.
The fallback final-answer request is issued both at the cap and after
three consecutive invalid actions without a commitment. The log field
\texttt{budget\_exhausted} marks this fallback in both cases; the turn
count and events distinguish them. Static and menu fallback outputs
also accept a bare JSON list; free-text fallback outputs require the
diagnosis marker. Unparseable final outputs are scored as empty sets
and recorded as parsing failures.

%% file: appendix/experiments/analysis_definitions.tex
\subsection{Scoring and Diagnostic Trajectories}
\label{app:comor_analysis}

\paragraph{Set scores.}
Let $u=|\widehat{\mathcal D}\cap\mathcal D|$ and
$\widehat k=|\widehat{\mathcal D}|$. Besides the Jaccard and exact-set
metrics in Equation~\ref{eq:set_metrics}, we compute
\begin{equation}
 P=\frac{u}{\max(1,\widehat k)},\qquad
 R=\frac{u}{k},\qquad F_1=\frac{2u}{\widehat k+k}.
\end{equation}
All are computed per case before averaging. The four outcome categories
are exhaustive and disjoint: exact if $\widehat{\mathcal D}=\mathcal D$;
partially correct if $\emptyset\ne\widehat{\mathcal D}\subsetneq\mathcal D$;
partially incorrect if $u>0$ and $\widehat k>u$; and wrong if $u=0$.
Candidate names are matched case-insensitively and duplicates removed;
unmatched strings remain predicted labels and count as false positives.
Jaccard is computed from these stored predictions during analysis.

\paragraph{Recovery and retention.}
For the one-probe-per-prefix setting, $\widetilde{\mathcal D}_t$ denotes
the probe output after $t$ question--answer pairs. Let $H$ be the number
of observed pairs, including continuation, and let $\tau\le H$ be the
recorded stopping point. The true labels appearing in any probe through
the stopping point are
\begin{equation}
 E_\tau=\mathcal D\cap\bigcup_{t=0}^{\tau}\widetilde{\mathcal D}_t.
 \label{eq:comor_ever}
\end{equation}
Probe recall at $t$ is $|\mathcal D\cap\widetilde{\mathcal D}_t|/k$;
ever-recovered recall is $|E_\tau|/k$. We distinguish omissions at the
stopping-point probe from omissions in the committed prediction:
\begin{equation}
 L_{\mathrm{probe}}=\frac{|E_\tau\setminus\widetilde{\mathcal D}_\tau|}{k},
 \qquad
 L_{\mathrm{commit}}=\frac{|E_\tau\setminus\widehat{\mathcal D}|}{k}.
 \label{eq:comor_omission}
\end{equation}
The conditional omission fraction divides the first numerator by
$|E_\tau|$ and is undefined when $E_\tau=\emptyset$. The first comparison
uses the same readout prompt at both times; the second also changes the
readout prompt. Continuation analyses use the observed $H$, which can
be less than 20. Taking a union over several stochastic readouts gives
more opportunities to name a label; these measures describe output
retention rather than a model's unobserved internal state.

\subsection{Question Selection and Empirical Information Gain}
\label{app:comor_eig}

For a menu prefix $(q_1,a_1),\ldots,(q_t,a_t)$, let $I_t$ contain the
training-population records whose deterministic answers match every
observed answer. The opening does not enter this filter. For any
nonempty record set $I$, define
\begin{equation}
 p_d(I)=\frac{1}{|I|}\sum_{i\in I}\mathbf1\{d\in\mathcal D_i\},
 \qquad H(I)=\sum_{d\in\mathcal L}h_2(p_d(I)),
\end{equation}
where $h_2(p)=-p\log_2p-(1-p)\log_2(1-p)$ with $0\log_2 0=0$.
This is a sum of marginal binary entropies. With
$I_{t,q,a}=\{i\in I_t:\operatorname{answer}(i,q)=a\}$, the empirical
gain is
\begin{equation}
 G_t(q)=H(I_t)-\sum_{a:|I_{t,q,a}|>0}
 \frac{|I_{t,q,a}|}{|I_t|}H(I_{t,q,a}),\qquad
 G_t^*=\max_{q\notin\{q_1,\ldots,q_t\}}G_t(q).
 \label{eq:comor_eig}
\end{equation}
The selected question is compared with $G_t^*$ at the same prefix.
Prefixes with $|I_t|<30$ have unavailable gains, not zero gains.
Respiratory summaries exclude forced turns and compare mean selected
gain with mean best-available gain over supported prefixes; their ratio
is a ratio of means. The reference tree uses multilabel entropy, a
maximum depth of 14, and a minimum leaf size of 20 on 8,000 generated
training cases. Its direct leaf prediction and its question path are
distinct from the \texttt{tree} task's subsequent LLM readout.

\subsection{Realized Finding Overlap and Presenting-Diagnosis Recovery}
\label{app:comor_overlap}

\paragraph{Finding overlap.}
For a DDXPlus case, let $F_d$ contain the variable--answer pairs from
the selected source record for diagnosis $d$ that survive in the merged
signal, excluding antecedents. Define contributor sets and distinct
clinical variables as
\begin{equation}
 A(v,a)=\{d:(v,a)\in F_d\},\qquad
 V=\{v:\exists a,\ A(v,a)\ne\emptyset\}.
\end{equation}
The realized overlap is
\begin{equation}
 O=\frac{|\{v\in V:\exists a,\ |A(v,a)|\ge2\}|}{|V|}.
 \label{eq:comor_overlap}
\end{equation}
A variable counts once even when it has several answers; sharing
requires the same retained answer in at least two components. Cases
without clinical variables have undefined overlap. Components are
recovered by replaying generation with the recorded seed and checking
the regenerated cases against the stored population. Associations with
recovery or omission use Spearman correlation separately within each
$k$. Generic disease-profile membership does not replace this
case-specific attribution.

\paragraph{Presenting-diagnosis advantage.}
For a case with $k>1$, let $d_c\in\mathcal D$ be the component selected
to supply the opening complaint. For information condition $m$, define
\begin{equation}
 g_m=\mathbf1\{d_c\in\widehat{\mathcal D}_m\}
 -\frac{|(\mathcal D\setminus\{d_c\})\cap\widehat{\mathcal D}_m|}{k-1},
 \qquad \Delta g=g_{\mathrm{interactive}}-g_{\mathrm{full}}.
 \label{eq:comor_presenting_gap}
\end{equation}
These case-level contrasts give each case equal weight. The paired
contrast keeps the same case in both conditions. Since the generator
selects the presenting diagnosis using recorded severity, this is an
observational comparison; it does not identify a causal effect of the
opening. Integration analyses use 4,000 case-bootstrap draws with
seed 2026, preserving condition pairs and stratifying pooled contrasts
by $k$. Respiratory analyses use 5,000 draws with seed 0. Reported
95\% intervals are pointwise. We report intervals rather than single scores,
since a single run can misrank LLM systems~\citep{potamitis2025reasonbench}.

%% file: appendix/experiments/evidence_matched.tex
\subsection{Matched Respiratory Cohort and the Evidence Cross-Over}
\label{app:evidence_matched}

This appendix documents the experiment behind Section~\ref{sec:tree}
and Figure~\ref{fig:tree}. All numbers come from one run of GPT-5.6 on
DDXPlus with a budget of 20 questions.

\paragraph{Why this design.}
In the main consultations, a missed second diagnosis can come from the
questions, from the reasoning, or from the case itself, because cases with
more diagnoses also differ in other ways. The design removes these
confounders one at a time. Each two-diagnosis patient is paired with a
single-diagnosis patient built from the same source component and given the
same opening, so the only change from $k=1$ to $k=2$ is the added disease.
Which component opens is balanced within each pair, so the effect is not
tied to one disease always being the second. Questions come from a fixed
menu and are answered deterministically from the record, so each question
maps to a known finding and no patient simulator adds noise. The tree,
fitted on separate patients, shows what the same menu and budget can
achieve. The fixed reader scores any set of answers without the model's
reasoning, which isolates the value of the questions. Letting the model
read its own answers, the tree's, or the full record holds the evidence
fixed and isolates its reasoning.

\paragraph{Cohort.}
The cohort uses seven DDXPlus respiratory diagnoses (acute laryngitis,
acute otitis media, acute rhinosinusitis, bronchitis, chronic
rhinosinusitis, influenza, and viral pharyngitis). URTI is excluded as an
umbrella label; allergic sinusitis and bronchospasm are excluded for
having too few source profiles. All 21 pairs are feasible under the
generator. For each pair we build eight \emph{families} from the same
source components. A family holds the combined patient ($k=2$), opened
with the chief complaint of one component, and that component alone
($k=1$), with the same opening. Four families of each pair open on each
component, so every diagnosis opens 24 families. This gives
$21\times8\times2=336$ consultations, 168 at each $k$. Combining the
components can change some shared answers, so the paired patient is
matched on source and opening, not on every finding. The menu, the tree
and the reader use the DDXPlus evidence vocabulary only, so these cases
carry no background complaints. Source clinical
profiles are split into training, validation, and test partitions; test
profiles do not appear in the training data of the tree or the
classifier.

\paragraph{Consultation protocol.}
The doctor prompt is the AgentClinic-derived menu prompt of
Appendix~\ref{app:sim:prompts} with four adaptations. It states that more
than one diagnosis may be present, gives the closed candidate list,
selects questions by \texttt{ASK <id>} from a 49-item menu, and states
the budget of 20 questions. The full menu is listed in the prompt, so the
model and the tree choose from the same questions under the same budget.
There is no patient simulator, and answers are
deterministic functions of the record, with absent evidence answered
negatively. A commitment before the budget ends the consultation; at the
budget a final set is requested. GPT-5.6 commits on its own in 2 of 336
consultations, both after 19 questions, and asks all 20 questions in the
other 334. It returned no invalid action and no unparseable set.

\paragraph{Reference tree.}
The tree is a multilabel decision tree fitted on 10,752 training
records (entropy criterion, at least two records per leaf). It was
selected on the validation partition among eleven fits, and the selected
tree has maximum depth 23. It runs under the same 20-question budget and
stops at a leaf, which it reaches before the budget on every test path.
It asks 10.0 questions at $k=1$ and 7.5 at $k=2$ on average. One tree
serves both $k$; the true count is never supplied. It is a learned
reference, not an optimal policy. At each node it asks the menu question
that most reduces the uncertainty about the diagnosis set among the training
patients. The candidate sets are fixed and test patients come from the same
generator, so it is a reasonable reference for what the menu can reveal
within the budget. On validation and test patients the tree comes within 4
to 7 points of a random forest that reads the full record.

\paragraph{Tree depth.}
A deep tree could in principle ask for the whole record. We refit the
same recipe with a maximum depth between 3 and 18
(Table~\ref{tab:tree_depth}). Validation accuracy rises until depth 8 to 10
and then stays flat. The unrestricted tree asks about ten questions and is
no more accurate than a depth-10 tree. The reference is therefore not
strong because it reads most of the record, and the comparison in
Section~\ref{sec:tree} does not depend on the depth chosen.

\begin{table}[H]
\centering
\small
\begin{tabular}{rrrrrrrr}
\toprule
& & \multicolumn{2}{c}{Tree leaf} & \multicolumn{2}{c}{Fixed reader} & \multicolumn{2}{c}{Questions} \\
\cmidrule(lr){3-4}\cmidrule(lr){5-6}\cmidrule(lr){7-8}
Max depth & Leaves & $k=1$ & $k=2$ & $k=1$ & $k=2$ & $k=1$ & $k=2$ \\
\midrule
3 & 8 & 58.0 & 11.3 & 49.7 & 11.6 & 3.0 & 3.0 \\
5 & 28 & 83.3 & 40.5 & 87.5 & 45.5 & 4.9 & 4.9 \\
6 & 47 & 87.8 & 54.8 & 88.4 & 55.4 & 5.4 & 5.5 \\
8 & 109 & 92.9 & 68.5 & 90.2 & 64.9 & 6.6 & 6.3 \\
10 & 203 & 95.5 & 77.4 & 90.5 & 66.7 & 8.0 & 7.0 \\
12 & 305 & 94.6 & 74.4 & 89.6 & 64.6 & 8.7 & 7.5 \\
15 & 436 & 93.8 & 77.4 & 90.2 & 64.9 & 9.8 & 7.7 \\
18 & 542 & 93.5 & 78.0 & 90.5 & 64.6 & 10.0 & 7.8 \\
23 (reference) & 593 & 92.3 & 76.8 & 90.8 & 66.1 & 10.0 & 7.8 \\
\bottomrule
\end{tabular}
\caption{Tree depth on the validation partition with a 20-question budget.
Exact-set recovery (\%) of the tree's leaf and of the fixed reader on the
tree's answers, and mean questions asked. The reference tree is the
unrestricted fit.}
\label{tab:tree_depth}
\end{table}

\paragraph{Fixed reader.}
The fixed reader is a set of seven one-versus-rest logistic regressions
fitted on the same training records. Each training record is presented
under three masks, the opening alone, the full record, and the opening
plus one to nine random menu answers, so that an unasked question and a
negative answer are distinct inputs. Given a set of collected answers,
the reader returns every diagnosis with probability of at least one half.
On validation records with random answers its accuracy keeps rising beyond
nine answers (at $k=2$, 27\% with nine, 50\% with twenty, and 71\% with the
full record), so we use it unchanged for 20-question evidence. It is trained on
the source population and its absolute accuracy is a source-relative
ceiling, not a clinical one; the analysis uses only the contrast between
two evidence sets read by the same reader.

\paragraph{Readouts.}
The self, tree, and full readouts are separate GPT-5.6 calls with the
MediQ-derived static prompt of Appendix~\ref{app:sim:prompts}. Each
receives the opening and, as question--answer pairs in one format, the
answers GPT-5.6 collected in its own consultation, the answers along the
tree's path for the same patient, or every menu answer. The self-readout
therefore tests whether converting the dialogue into a clean list changes
the prediction. It lowers exact-set recovery by three to four points
(Table~\ref{tab:evidence_arms}), which is why the GPT-5.6 bars in
Figure~\ref{fig:tree}a and the first bars in Figure~\ref{fig:tree}c differ
slightly.

\paragraph{Design.}
Table~\ref{tab:evidence_design} lists the cells of the cross-over.
Feeding GPT-5.6's evidence to the tree itself is not possible, because the
tree requires the answers along its own path and GPT-5.6 asks only
43\% ($k=1$) and 38\% ($k=2$) of the patient's path questions; the fixed
reader stands in for the tree in that cell.

\begin{table}[H]
\centering
\small
\begin{tabular}{lll}
\toprule
Evidence from & Diagnosed by & Reported as \\
\midrule
GPT-5.6's questions & GPT-5.6 in the consultation & Figure~\ref{fig:tree}a \\
Tree's questions & Tree leaf & Figure~\ref{fig:tree}a \\
GPT-5.6's questions & Fixed reader & Figure~\ref{fig:tree}b \\
Tree's questions & Fixed reader & Figure~\ref{fig:tree}b \\
GPT-5.6's questions, as a list & Fresh GPT-5.6 readout & Figure~\ref{fig:tree}c \\
Tree's questions, as a list & Fresh GPT-5.6 readout & Figure~\ref{fig:tree}c \\
Full record & Fresh GPT-5.6 readout & Figure~\ref{fig:tree}c \\
\bottomrule
\end{tabular}
\caption{Cells of the evidence cross-over. Rows one and two are the
direct comparison; rows three and four vary the evidence with the reader
fixed; rows five to seven vary the evidence with GPT-5.6 as the reader.}
\label{tab:evidence_design}
\end{table}

\paragraph{Per-label scores.}
Table~\ref{tab:evidence_arms} reports exact-set recovery, label recall,
and false positives per consultation for every cell. Two facts do not
show in the exact-set figure. First, GPT-5.6 over-calls. At $k=1$ it
returns two or more diagnoses in 42\% of consultations and 45\% of
full-record readouts. Second, at $k=2$ its consultations return one
diagnosis in 24\% of cases, two in 60\%, and three or more in 16\%; among
the two-diagnosis answers the pair is correct in 29 of 100. With the full
record the single-diagnosis share falls to 10\%, but 38\% of readouts then
return three or more diagnoses, and the pair is correct in 44 of 87
two-diagnosis readouts.

\begin{table}[H]
\centering
\small
\setlength{\tabcolsep}{4pt}
\begin{tabular}{llrrrrrr}
\toprule
& & \multicolumn{3}{c}{$k=1$} & \multicolumn{3}{c}{$k=2$} \\
\cmidrule(lr){3-5}\cmidrule(lr){6-8}
Evidence from & Reader & Exact & Recall & FP & Exact & Recall & FP \\
\midrule
GPT-5.6's questions & GPT-5.6, in consultation & 35.7 & 69.6 & 0.75 & 17.3 & 58.3 & 0.77 \\
Tree's questions & Tree leaf & 92.9 & -- & -- & 81.5 & -- & -- \\
GPT-5.6's questions & Fixed reader & 73.8 & 91.7 & 0.21 & 45.8 & 77.1 & 0.15 \\
Tree's questions & Fixed reader & 89.3 & 92.9 & 0.04 & 73.2 & 90.5 & 0.14 \\
Full record & Fixed reader & 85.7 & 96.4 & 0.11 & 78.6 & 94.0 & 0.11 \\
GPT-5.6's questions, as a list & GPT-5.6 readout & 32.7 & 70.2 & 0.71 & 13.7 & 61.0 & 0.79 \\
Tree's questions, as a list & GPT-5.6 readout & 47.0 & 81.0 & 0.54 & 23.2 & 65.2 & 0.60 \\
Full record & GPT-5.6 readout & 42.3 & 85.7 & 0.61 & 26.2 & 78.0 & 0.77 \\
\bottomrule
\end{tabular}
\caption{Exact-set recovery (\%), label recall (\%), and false positives
per consultation for every cell of the cross-over; 168 consultations per
$k$. The tree's leaf prediction is scored on exact-set recovery only.}
\label{tab:evidence_arms}
\end{table}

\paragraph{Question counts and coverage.}
GPT-5.6 asks two to three times as many questions as the tree, so the comparison in
Figure~\ref{fig:tree}b favours GPT-5.6 in the amount of evidence. Its
coverage of the tree's path for the same patient falls from 42.6\% at
$k=1$ to 38.3\% at $k=2$ (change $-4.3$ points, 95\% interval $-6.4$ to
$-2.1$). Figure~\ref{fig:evidence_budget} removes the difference in
length. The fixed reader reads GPT-5.6's first $n$ answers and the tree's
path capped at $n$ questions. The tree reaches its plateau after five
questions at $k=1$ and eight at $k=2$. GPT-5.6's evidence improves
slowly over the consultation and stays below the tree's from the third
question on, and the gap is larger at $k=2$. After eight questions, the
reader's exact-set recovery from GPT-5.6's evidence is 50\% ($k=1$) and
17\% ($k=2$), against 92\% and 72\% from the tree's. All intervals are 95\% bootstraps
over whole families within diagnosis pair, conditional on the 21 pairs
and the fitted reader.

\begin{figure}[H]
\centering
\includegraphics[width=\linewidth]{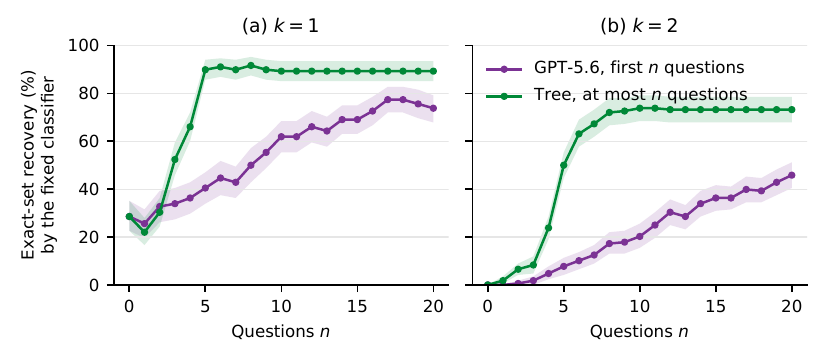}
\caption{Exact-set recovery by the fixed reader from GPT-5.6's first $n$ answers
and from the tree capped at $n$ questions. Bands are 95\% intervals.}
\label{fig:evidence_budget}
\end{figure}

%% file: appendix/experiments/info_gain.tex
\subsection{Information Gained over the Consultation}
\label{app:info_gain}

This analysis follows the consultations of Appendix~\ref{app:evidence_matched}
question by question and asks how much each system has learned about the
diagnosis set at every point. It uses the same 336 GPT-5.6 consultations
and the tree's paths on the same patients, and it makes no model calls.

\paragraph{Belief model.}
We need an observer that turns a history of answers into a distribution
over diagnosis sets. The hypothesis space $\mathcal S$ holds the 28
possible sets (7 single diagnoses and 21 pairs); the true $k$ is not
given. We fit a naive Bayes model on the training split (4,032 records
after removing the duplicate opening of each pair), with one class per set.
Its features are sex, age decade, and the 49 menu questions. Binary
questions are yes or no, categorical and ordinal questions take their value
or \emph{absent}, and each multi-select question becomes one yes or no
feature per value. Conditional tables use add-one smoothing, and the prior
gives $k=1$ and $k=2$ equal mass. After a history $h_t$ of $t$ answered
questions, the belief is $p(S\mid h_t)\propto p(S)\prod_{f\in h_t}p(a_f\mid S)$,
where $h_t$ always contains sex, age, and the opening complaint. From the
full record the observer recovers the exact set in 97.6\% of validation
patients at $k=1$ and 79.8\% at $k=2$. Its average belief on the true set
stays close to how often its most likely set is correct, for example 0.57
against 0.61 for GPT-5.6's evidence at $k=2$.

\paragraph{Measures.}
The information gained after $t$ questions is $H(S\mid h_0)-H(S\mid h_t)$ in
bits, where $h_0$ is the opening, and the belief on the true set is
$p(S^\ast\mid h_t)$. A repeated question adds nothing. After the tree
reaches a leaf it asks nothing more, so its last value is carried forward to
twenty questions. GPT-5.6 could end the consultation at any
turn but asked all twenty questions in 334 of 336 consultations
(Appendix~\ref{app:evidence_matched}), so its curves run to twenty. We
therefore compare the two systems both after twenty questions and over the
tree's own number of questions. Bands are 95\% bootstrap intervals over
families.

\paragraph{Results.}
Over the number of questions the tree asks, a tree question gains 0.42
($k=1$) and 0.53 ($k=2$) bits and a GPT-5.6 question 0.27 and 0.26
(Figure~\ref{fig:tree}c). After twenty questions GPT-5.6 puts 0.86 and 0.57
on the true set, against 0.97 and 0.76 for the tree and 0.97 and 0.81 for
the full record (Figure~\ref{fig:info_gain}b). A fixed classifier reading
the same evidence gives the same order (a). The observer and the tree are
fitted on the same training patients, but GPT-5.6's own reading of the
evidence (Figure~\ref{fig:tree}b) and the record-matching gain of
Appendix~\ref{app:comor_eig} agree with it.

\paragraph{Shared questions.}
Within the tree's own number of questions, GPT-5.6 asks 21\% of the tree's
questions at $k=1$ and 12\% at $k=2$, a difference of $-9$ $[-11, -7]$
points within family. The tree changes its path when the second disease
changes the answers, and GPT-5.6 mostly does not. Over the first five
questions, the questions asked for the two patients of a family overlap by
0.73 (Jaccard) for GPT-5.6 and 0.66 for the tree, against 0.06 by chance,
and the gap widens at seven questions.

\begin{figure}[h]
\centering
\includegraphics[width=\linewidth]{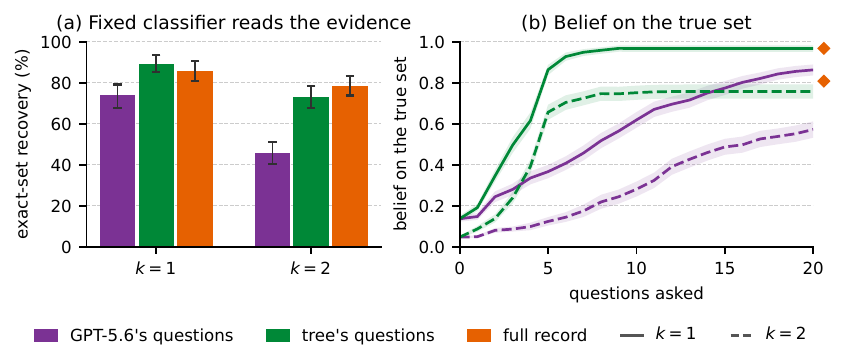}
\caption{GPT-5.6 and the tree on the matched respiratory cohort. (a)
Exact-set recovery when a fixed classifier reads each system's evidence or
the full record. (b) The observer's belief on the true set after each
question. After the tree stops, its last value is carried forward, and the
diamonds show the full record.}
\label{fig:info_gain}
\end{figure}

\paragraph{Limitations.}
The values are properties of the observer, not of GPT-5.6's internal
belief. Naive Bayes treats the values of a multi-select question as
independent given the set, which overstates how much such an answer is
expected to reveal before it is given. We therefore report the information
actually gained after each answer rather than the expected gain of each
question.

%% file: appendix/experiments/prompts.tex
\subsection{Prompt Provenance and Adaptations}
\label{app:sim:deviations}

The templates below are extracted from our repository's
\texttt{experiments/lib/prompts.py}; their task wiring is in
\texttt{experiments/tasks.py}. The source comments identify the
free-text doctor and patient as adaptations of AgentClinic
\citep{schmidgall2024agentclinic}. They retain the doctor/patient roles,
short dialogue responses, a question budget, and the instruction not
to reveal the patient's disease name. Our doctor adds an explicit
comorbidity instruction, the candidate diagnosis list, and a
semicolon-separated set commitment. Our patient uses the standard
AgentClinic patient prompt, without the mention of physical exams, and
receives the rendered case findings as its information. Like the original, this wording does not specify a deterministic
policy for every absent or unknown finding; the menu channel supplies that
control separately.

The local adaptation omits the \texttt{Request Test:} route and the
research, chain-of-thought, and target-language slots. The menu action
format, forced continuation, oracle-count instruction, and independent
probes are local protocol components. The single-diagnosis control
removes the comorbidity paragraph but retains our candidate list and
response format. AgentClinic defines only dialogue prompts, so the static
template used for full information adapts the non-interactive setting of
MediQ \citep{li2024mediq} to a diagnosis set and the shared commitment
marker. The static and dialogue prompts share the candidate list, the
statement that the patient may have several diseases, the answer format
and the scorer. The dialogue prompt adds the question budget and an
instruction to keep questioning after a first likely diagnosis. Reading
a consultation's own findings with the static prompt changes exact
recovery by at most four points (Table~\ref{tab:evidence_arms}). A moderator/judge template is present in the
source file but is not called by the main evaluation tasks.

\subsection{Implemented Prompt Templates}
\label{app:sim:prompts}

Angle-bracketed fields below stand for runtime values. Candidate lists,
finding lists, and menu entries expand to their complete case- or
source-specific contents. All other wording is taken from the local
prompt definitions. The single-diagnosis and oracle-count controls are
shown as exact substitutions to avoid repeating their shared text.

\lstdefinestyle{comorprompt}{
 basicstyle=\footnotesize\ttfamily,
 columns=fullflexible,breaklines=true,breakatwhitespace=true,
 keepspaces=true,showstringspaces=false,
 frame=single,framerule=0.3pt,framesep=4pt,
 aboveskip=6pt,belowskip=8pt,
 literate={—}{{---}}1
}

\par\smallskip\noindent
\begin{minipage}{\linewidth}
\paragraph{Static diagnosis.}
The first line is the system message; the remainder is the user message. The full-information, tree-guided, and replay tasks share this template, with different finding lists.
\begin{lstlisting}[style=comorprompt]
You are an experienced doctor trying to make a medical decision about a patient.

A patient comes into the clinic presenting with a symptom as described in the statements below:
Patient: <opening presentation>
- <retained finding>
- <additional retained findings>
Given the information from above, your task is to identify every diagnosis the patient has from the candidate list below. The patient may have more than one disease at the same time.
Candidate diagnoses:
<candidate diagnosis list>
To the best of your ability, answer with ONLY "Diagnosis Ready: [diagnosis 1]; [diagnosis 2]; ..." and nothing else.
\end{lstlisting}
\end{minipage}\par

\par\smallskip\noindent
\begin{minipage}{\linewidth}
\paragraph{Free-text doctor.}
System message. In the main evaluation and in Section~\ref{sec:failure}, the
question counter follows each patient answer as a new line,
``(You have asked $n$ of 20 questions so far.)'', so the system message stays
fixed. The earlier runs behind the oracle-count and single-diagnosis controls
(Appendices~\ref{app:oracle} and~\ref{app:single}) and the open-model probe
analyses (Appendices~\ref{app:probe_anchoring} and~\ref{app:compliance}) placed
it in the system message instead, as ``You have asked $n$ questions so far.''
after the first sentence below. Each such comparison uses runs of one version.
\begin{lstlisting}[style=comorprompt]
You are a doctor named Dr. Agent who only responds in the form of dialogue. You are inspecting a patient who you will ask questions in order to understand their disease. You are only allowed to ask 20 questions total before you must make a decision. Your questions must be 1-3 sentences in length.
The patient may have more than one disease at the same time. Do not stop investigating after finding one likely diagnosis; actively probe whether the findings are fully explained by a single condition or point to co-occurring conditions.
Once you have decided to make a diagnosis please say "Diagnosis Ready: [diagnosis 1]; [diagnosis 2]; ..." listing every disease the patient has, using ONLY names from this list:
<candidate diagnosis list>
Remember, you must discover their diseases by asking them questions.
\end{lstlisting}
\end{minipage}\par

\par\smallskip\noindent
\begin{minipage}{\linewidth}
\paragraph{Patient simulator.}
System message. The current doctor question is the user message. Each patient call receives these two messages.
\begin{lstlisting}[style=comorprompt]
You are a patient in a clinic who only responds in the form of dialogue. You are being inspected by a doctor who will ask you questions in order to understand your disease. Your answer will only be 1-3 sentences in length.
Below is all of your information.
Who you are: <opening presentation>
- <retained finding>
- <additional retained findings>
Remember, you must not reveal your disease explicitly but may only convey the symptoms you have in the form of dialogue if you are asked.
\end{lstlisting}
\end{minipage}\par

\par\smallskip\noindent
\begin{minipage}{\linewidth}
\paragraph{Menu doctor.}
System message. The question-list slot expands to all available evidence IDs and their questions.
\begin{lstlisting}[style=comorprompt]
You are a doctor named Dr. Agent. You are inspecting a patient who you will ask questions in order to understand their disease. You are only allowed to ask 20 questions total before you must make a decision.
The patient may have more than one disease at the same time. Do not stop investigating after finding one likely diagnosis; actively probe whether the findings are fully explained by a single condition or point to co-occurring conditions.
Questions you may ask (id: question):
<evidence id>: <question text>
Each turn reply with EXACTLY ONE line: ASK <id>
Once you have decided to make a diagnosis please say "Diagnosis Ready: [diagnosis 1]; [diagnosis 2]; ..." listing every disease the patient has, using ONLY names from this list:
<candidate diagnosis list>
\end{lstlisting}
\end{minipage}\par

\par\smallskip\noindent
\begin{minipage}{\linewidth}
\paragraph{Single-diagnosis control: changed suffix.}
The opening paragraph is identical to the free-text doctor template. Delete its comorbidity paragraph and replace the remaining suffix with the following text. The closed candidate list and local response format are retained.
\begin{lstlisting}[style=comorprompt]
Once you have decided to make a diagnosis please say "Diagnosis Ready: [diagnosis here]" giving the single most likely diagnosis, using ONLY a name from this list:
<candidate diagnosis list>
Remember, you must discover their disease by asking them questions.
\end{lstlisting}
\end{minipage}\par

\par\smallskip\noindent
\begin{minipage}{\linewidth}
\paragraph{Oracle-count control: replacement paragraph.}
Replace only the free-text doctor's comorbidity paragraph with the following. For $k=1$, the code uses the singular ``condition''; the other wording is unchanged.
\begin{lstlisting}[style=comorprompt]
The patient has exactly <k> distinct conditions at the same time. You must identify all of them — no fewer and no more than <k>.
\end{lstlisting}
\end{minipage}\par

\par\smallskip\noindent
\begin{minipage}{\linewidth}
\paragraph{Independent diagnosis probe.}
The first paragraph and candidate list form the system message; the remainder is the user message. At $t=0$, the question--answer lines are replaced by \texttt{(no questions asked yet)}.
\begin{lstlisting}[style=comorprompt]
You are a physician reviewing a partial consultation. Several diagnoses may be present at once.

Candidate diagnoses:
<candidate diagnosis list>

Patient: <opening presentation>

Consultation so far:
Q: <question 1>
A: <answer 1>
Q: <remaining questions>
A: <remaining answers>

Based ONLY on the information above, reply with "Diagnosis Ready: [diagnosis 1]; [diagnosis 2]; ..." listing the diagnoses currently supported by the evidence, chosen from the candidate list. If none are supported yet, reply "Diagnosis Ready: none".
\end{lstlisting}
\end{minipage}\par

\paragraph{Opening, answers, and protocol messages.}
These are user messages added by the task. The labels above each block are explanatory and are not sent to the model.

\par\smallskip\noindent
\begin{minipage}{\linewidth}
\noindent\textit{Opening user message}
\begin{lstlisting}[style=comorprompt]
Patient: <opening presentation>
Begin the consultation.
\end{lstlisting}
\end{minipage}\par

\par\smallskip\noindent
\begin{minipage}{\linewidth}
\noindent\textit{Menu answer user message}
\begin{lstlisting}[style=comorprompt]
<question text>: <answer>
\end{lstlisting}
\end{minipage}\par

\par\smallskip\noindent
\begin{minipage}{\linewidth}
\noindent\textit{Free-text continuation}
\begin{lstlisting}[style=comorprompt]
Do NOT give a final diagnosis yet. Continue the consultation and keep asking the patient questions.
\end{lstlisting}
\end{minipage}\par

\par\smallskip\noindent
\begin{minipage}{\linewidth}
\noindent\textit{Menu continuation}
\begin{lstlisting}[style=comorprompt]
Do NOT give a final diagnosis. Continue the consultation: keep asking questions from the list, one per turn, as ASK <id>.
\end{lstlisting}
\end{minipage}\par

\par\smallskip\noindent
\begin{minipage}{\linewidth}
\noindent\textit{Fallback final-answer request}
\begin{lstlisting}[style=comorprompt]
Question budget exhausted. Reply now with "Diagnosis Ready: [diagnosis 1]; [diagnosis 2]; ...".
\end{lstlisting}
\end{minipage}\par

\par\smallskip\noindent
\begin{minipage}{\linewidth}
\noindent\textit{Invalid free-text reply, before commitment}
\begin{lstlisting}[style=comorprompt]
Invalid reply. Ask the patient one question in plain dialogue or commit with "Diagnosis Ready: [diagnosis 1]; ...".
\end{lstlisting}
\end{minipage}\par

\par\smallskip\noindent
\begin{minipage}{\linewidth}
\noindent\textit{Invalid free-text reply, after commitment}
\begin{lstlisting}[style=comorprompt]
Invalid reply. Ask the patient one question in plain dialogue.
\end{lstlisting}
\end{minipage}\par

\par\smallskip\noindent
\begin{minipage}{\linewidth}
\noindent\textit{Invalid menu action, before commitment}
\begin{lstlisting}[style=comorprompt]
Invalid action. Reply with exactly one line: ASK <id> or "Diagnosis Ready: [diagnosis 1]; ...".
\end{lstlisting}
\end{minipage}\par

\par\smallskip\noindent
\begin{minipage}{\linewidth}
\noindent\textit{Invalid menu action, after commitment}
\begin{lstlisting}[style=comorprompt]
Invalid action. Reply with exactly one line: ASK <id>.
\end{lstlisting}
\end{minipage}\par

%% file: appendix/additional_results.tex
\section{Additional Results}
\label{app:results}
\label{app:additional_results}

\input{appendix/additional_results/ci.tex}

\input{appendix/additional_results/perk.tex}

\input{appendix/additional_results/oracle.tex}
\input{appendix/additional_results/interaction.tex}

\input{appendix/additional_results/single.tex}

\input{appendix/additional_results/evidence_support.tex}

\input{appendix/additional_results/designed_k2.tex}

\input{appendix/additional_results/sharing.tex}

\input{appendix/additional_results/probe_anchoring.tex}

\input{appendix/additional_results/compliance.tex}

%% file: appendix/additional_results/ci.tex
\subsection{Confidence intervals for the main tables}
\label{app:ci}

Tables~\ref{tab:main_ci} and~\ref{tab:exact_by_k_ci} give 95\% intervals for
every value of Tables~\ref{tab:main} and~\ref{tab:exact_by_k}. They are case
bootstraps with 4,000 draws and seed 2026, stratified by $k$, and the same
resampled cases are used for every model and condition of a source. Pooled
exact-set recovery is resolved to within $\pm0.01$ to $\pm0.03$, so models
that are close in Table~\ref{tab:main} are not separated. In interaction at
$k\ge3$, every model stays at or below 7\%, with upper bounds of at most 10\%.

\begin{table}[h]
\centering
\footnotesize
\setlength{\tabcolsep}{3pt}
\caption{Table~\ref{tab:main} with 95\% bootstrap intervals, 800 cases per source.}
\label{tab:main_ci}
\begin{tabular}{llcccc}
\toprule
 & & \multicolumn{2}{c}{Exact $\uparrow$} & \multicolumn{2}{c}{Jaccard $\uparrow$} \\
\cmidrule(lr){3-4}\cmidrule(lr){5-6}
Source & Model & F & I & F & I \\
\midrule
DDXPlus  & GPT-5.6             & 0.11 [0.09, 0.13] & 0.08 [0.07, 0.10] & 0.46 [0.44, 0.48] & 0.29 [0.27, 0.31] \\
         & Gemini-3.8-Flash    & 0.26 [0.24, 0.28] & 0.08 [0.06, 0.10] & 0.47 [0.45, 0.50] & 0.32 [0.30, 0.34] \\
         & Qwen3.8-Flash       & 0.17 [0.15, 0.20] & 0.06 [0.05, 0.08] & 0.46 [0.44, 0.48] & 0.27 [0.25, 0.29] \\
         & GLM-5.3-Flash       & 0.11 [0.09, 0.13] & 0.02 [0.01, 0.03] & 0.46 [0.44, 0.48] & 0.24 [0.23, 0.26] \\
         & DeepSeek-V4.1-Flash & 0.04 [0.03, 0.05] & 0.02 [0.01, 0.03] & 0.37 [0.35, 0.38] & 0.25 [0.23, 0.26] \\
         & Gemma-4               & 0.10 [0.08, 0.12] & 0.01 [0.01, 0.02] & 0.43 [0.41, 0.45] & 0.24 [0.22, 0.25] \\
\midrule
ePOCT+   & GPT-5.6             & 0.21 [0.19, 0.24] & 0.08 [0.06, 0.10] & 0.57 [0.55, 0.59] & 0.30 [0.28, 0.32] \\
         & Gemini-3.8-Flash    & 0.21 [0.18, 0.24] & 0.10 [0.08, 0.12] & 0.51 [0.49, 0.53] & 0.39 [0.37, 0.41] \\
         & Qwen3.8-Flash       & 0.18 [0.16, 0.21] & 0.06 [0.05, 0.08] & 0.49 [0.47, 0.51] & 0.24 [0.22, 0.26] \\
         & GLM-5.3-Flash       & 0.15 [0.13, 0.18] & 0.05 [0.04, 0.07] & 0.52 [0.50, 0.54] & 0.26 [0.25, 0.28] \\
         & DeepSeek-V4.1-Flash & 0.15 [0.13, 0.18] & 0.06 [0.05, 0.08] & 0.53 [0.52, 0.55] & 0.28 [0.26, 0.30] \\
         & Gemma-4               & 0.20 [0.18, 0.23] & 0.07 [0.05, 0.09] & 0.55 [0.53, 0.57] & 0.31 [0.29, 0.33] \\
\bottomrule
\end{tabular}
\end{table}

\begin{table}[h]
\centering
\footnotesize
\setlength{\tabcolsep}{3pt}
\caption{Table~\ref{tab:exact_by_k} with 95\% bootstrap intervals: exact-set
recovery (\%) under full information (F) and interaction (I), 200 cases per $k$.}
\label{tab:exact_by_k_ci}
\begin{tabular}{lllcccc}
\toprule
Source & Model & & $k{=}1$ & $k{=}2$ & $k{=}3$ & $k{=}4$ \\
\midrule
DDXPlus  & GPT-5.6             & F & 30 [24, 36] & 10 [6, 14] & 4 [2, 6] & 0 [0, 2] \\
         &                     & I & 27 [21, 33] & 6 [3, 10] & 0 [0, 2] & 0 [0, 0] \\
         & Gemini-3.8-Flash    & F & 76 [70, 82] & 23 [18, 29] & 4 [2, 8] & 0 [0, 2] \\
         &                     & I & 22 [16, 28] & 8 [4, 12] & 1 [0, 2] & 0 [0, 2] \\
         & Qwen3.8-Flash       & F & 44 [37, 51] & 22 [16, 28] & 4 [2, 7] & 0 [0, 2] \\
         &                     & I & 14 [10, 20] & 9 [6, 13] & 1 [0, 2] & 0 [0, 0] \\
         & GLM-5.3-Flash       & F & 21 [16, 27] & 11 [7, 16] & 8 [5, 12] & 2 [0, 5] \\
         &                     & I & 8 [5, 12] & 0 [0, 2] & 0 [0, 0] & 0 [0, 0] \\
         & DeepSeek-V4.1-Flash & F & 9 [5, 13] & 4 [2, 8] & 1 [0, 2] & 0 [0, 2] \\
         &                     & I & 4 [2, 8] & 1 [0, 2] & 0 [0, 2] & 0 [0, 2] \\
         & Gemma-4               & F & 26 [20, 32] & 10 [6, 14] & 4 [2, 6] & 2 [0, 4] \\
         &                     & I & 2 [0, 5] & 2 [0, 5] & 1 [0, 2] & 0 [0, 0] \\
\midrule
ePOCT+   & GPT-5.6             & F & 39 [32, 46] & 24 [18, 30] & 12 [8, 17] & 10 [6, 14] \\
         &                     & I & 18 [13, 24] & 7 [4, 10] & 4 [2, 6] & 4 [1, 6] \\
         & Gemini-3.8-Flash    & F & 34 [28, 40] & 25 [19, 32] & 15 [10, 20] & 10 [6, 14] \\
         &                     & I & 22 [16, 27] & 8 [5, 12] & 7 [4, 10] & 2 [0, 5] \\
         & Qwen3.8-Flash       & F & 34 [27, 40] & 26 [20, 32] & 10 [6, 14] & 4 [2, 7] \\
         &                     & I & 18 [13, 24] & 5 [2, 8] & 2 [0, 4] & 0 [0, 0] \\
         & GLM-5.3-Flash       & F & 24 [18, 30] & 20 [14, 26] & 10 [6, 14] & 8 [4, 12] \\
         &                     & I & 13 [8, 18] & 6 [2, 9] & 2 [0, 5] & 0 [0, 2] \\
         & DeepSeek-V4.1-Flash & F & 14 [10, 19] & 21 [16, 27] & 16 [11, 21] & 11 [7, 16] \\
         &                     & I & 13 [8, 18] & 6 [2, 9] & 6 [3, 10] & 0 [0, 2] \\
         & Gemma-4               & F & 32 [26, 38] & 30 [23, 36] & 15 [10, 20] & 5 [2, 8] \\
         &                     & I & 14 [10, 20] & 8 [5, 12] & 6 [3, 10] & 0 [0, 0] \\
\bottomrule
\end{tabular}
\end{table}

%% file: appendix/additional_results/perk.tex
\subsection{Per-count scores and outcomes}
\label{app:perk}

Table~\ref{tab:perk} breaks Table~\ref{tab:main} down by $k$, and
Figure~\ref{fig:app_outcomes} splits interactive cases by outcome. Most failures
are partially incorrect at every count. Models add wrong diagnoses rather than
return a correct subset. The predicted set grows with $k$ and exceeds the true
count up to $k=3$, and a correct count rarely comes with the correct set
(Figure~\ref{fig:app_count}).

\begin{table}[h]
\centering
\small
\caption{Per-count scores on DDXPlus for the six models. Each cell reads F1 / Jaccard / exact over
200 cases.}
\label{tab:perk}
\setlength{\tabcolsep}{4pt}
\begin{tabular}{llcccc}
\toprule
Condition & Model & $k{=}1$ & $k{=}2$ & $k{=}3$ & $k{=}4$ \\
\midrule
full & GPT-5.6 & 0.65/0.55/0.30 & 0.58/0.45/0.10 & 0.57/0.43/0.04 & 0.56/0.41/0.01 \\
full & Gemini-3.8-Flash & 0.79/0.79/0.77 & 0.58/0.48/0.23 & 0.47/0.35/0.04 & 0.40/0.28/0.01 \\
full & Qwen3.8-Flash & 0.70/0.64/0.44 & 0.57/0.47/0.21 & 0.53/0.41/0.04 & 0.48/0.34/0.01 \\
full & GLM-5.3-Flash & 0.61/0.50/0.21 & 0.57/0.45/0.11 & 0.59/0.46/0.09 & 0.57/0.43/0.03 \\
full & DeepSeek-V4.1-Flash & 0.50/0.37/0.09 & 0.51/0.37/0.04 & 0.50/0.36/0.01 & 0.50/0.36/0.01 \\
full & Gemma-4       & 0.63/0.53/0.26 & 0.56/0.43/0.10 & 0.52/0.38/0.04 & 0.53/0.39/0.01 \\
\midrule
interactive & GPT-5.6 & 0.54/0.46/0.27 & 0.38/0.29/0.06 & 0.33/0.23/0.01 & 0.30/0.20/0.00 \\
interactive & Gemini-3.8-Flash & 0.53/0.45/0.22 & 0.44/0.33/0.08 & 0.38/0.27/0.01 & 0.34/0.23/0.01 \\
interactive & Qwen3.8-Flash & 0.47/0.38/0.14 & 0.36/0.27/0.09 & 0.32/0.22/0.01 & 0.29/0.19/0.00 \\
interactive & GLM-5.3-Flash & 0.41/0.31/0.09 & 0.34/0.23/0.01 & 0.34/0.22/0.00 & 0.33/0.21/0.00 \\
interactive & DeepSeek-V4.1-Flash & 0.41/0.30/0.04 & 0.36/0.25/0.01 & 0.32/0.22/0.01 & 0.34/0.23/0.01 \\
interactive & Gemma-4       & 0.41/0.30/0.03 & 0.36/0.25/0.03 & 0.30/0.21/0.01 & 0.29/0.19/0.00 \\
\bottomrule
\end{tabular}
\end{table}

\begin{figure}[h]
\centering
\includegraphics[width=0.9\linewidth]{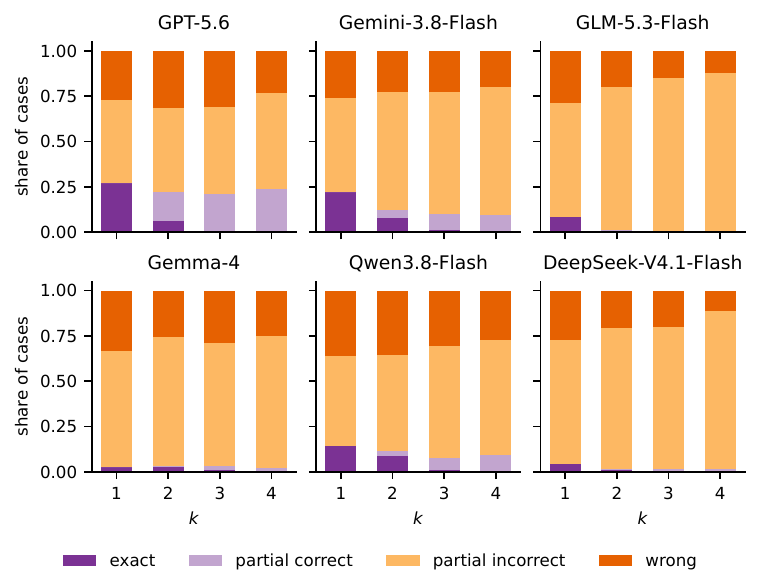}
\caption{Interactive outcomes by $k$ (DDXPlus, 200 cases per $k$). A partially
correct set is a subset of the truth; a partially incorrect one adds wrong
labels.}
\label{fig:app_outcomes}
\end{figure}

\begin{figure}[h]
\centering
\includegraphics[width=0.75\linewidth]{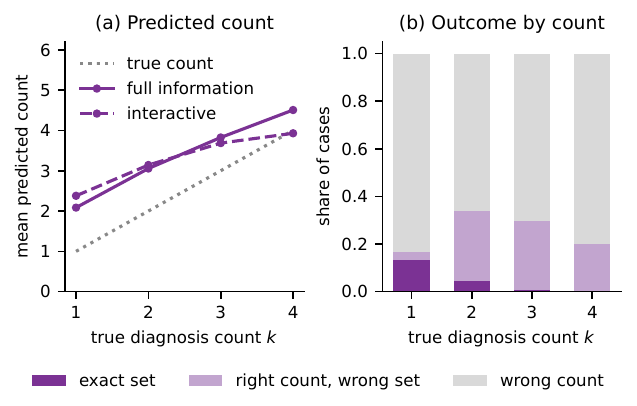}
\caption{Set size (DDXPlus, six models, 200 cases per $k$). (a) Predicted count
against $k$. (b) Exact set, right count with a wrong set, or wrong count.}
\label{fig:app_count}
\end{figure}

%% file: appendix/additional_results/oracle.tex
\subsection{Oracle count}
\label{app:oracle}

With the true $k$ supplied (Section~\ref{sec:gap}), exact-set recovery still
collapses with $k$ (Figure~\ref{fig:app_oracle}, left). The oracle-count runs
use the first 50 cases per $k$ of the main evaluation. Relative to standard
interaction, the count helps at $k=1$ but has mixed effects at $k\ge2$ (right).

\begin{figure}[h]
\centering
\includegraphics[width=0.47\linewidth]{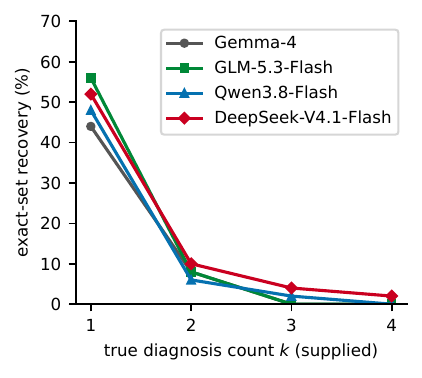}
\hfill
\includegraphics[width=0.49\linewidth]{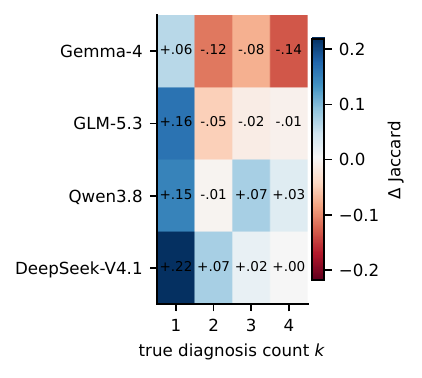}
\caption{Oracle count (50 cases per $k$). Left, exact-set recovery with the true
$k$ supplied. Right, change in Jaccard against standard interaction.}
\label{fig:app_oracle}
\end{figure}

%% file: appendix/additional_results/interaction.tex
\subsection{Over-calling and the interaction between count and condition}
\label{app:interaction}

\paragraph{Precision and recall.} Averaged over the six models, precision is
flat across $k$ in both conditions, near 55\% with full information and 35\%
in interaction, while recall falls from 89\% to 55\% and from 70\% to 32\%
(Figure~\ref{fig:app_pr}). The loss at $k=1$ therefore comes from extra labels,
since models return about two diagnoses per single-diagnosis case. As diagnoses
co-occur, the loss moves to missed diagnoses.

\begin{figure}[h]
\centering
\includegraphics[width=0.45\linewidth]{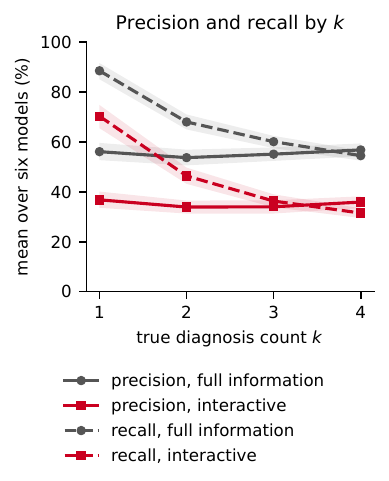}
\caption{Precision and recall by $k$ on DDXPlus, mean of six models (200 cases per
$k$).}
\label{fig:app_pr}
\end{figure}

\paragraph{Interaction test.}
For each model we compute the fall in mean Jaccard from $k=1$ to the mean
over $k\ge2$, in interaction minus with full information. A positive value
means interaction steepens the fall with $k$. Intervals are 95\% case
bootstraps stratified by $k$ (4,000 draws), resampling the same cases for
every model and condition. Exact-set recovery is not used, because it sits
near zero at $k\ge2$ in both conditions and would force the contrast towards
zero. Averaged over the six models, Jaccard falls with $k$ by a similar
amount in both conditions (Figure~\ref{fig:app_jaccard}). The per-model
estimates have mixed signs and the pooled value is close to zero
(Table~\ref{tab:interaction}).

\begin{figure}[h]
\centering
\includegraphics[width=0.49\linewidth]{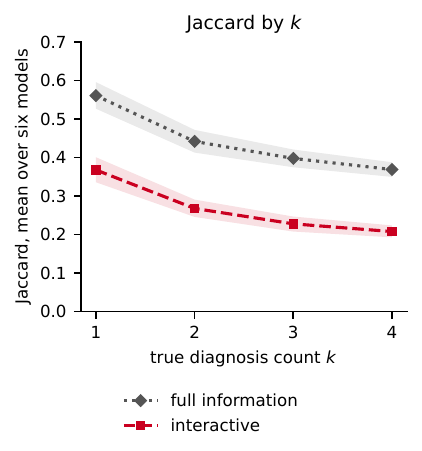}
\caption{Jaccard by $k$ on DDXPlus, mean of six models (200 cases per $k$).}
\label{fig:app_jaccard}
\end{figure}

\begin{table}[h]
\centering
\small
\caption{Count $\times$ condition interaction on Jaccard, DDXPlus, 200 cases per $k$.}
\label{tab:interaction}
\begin{tabular}{lc}
\toprule
Model & Interaction [95\% CI] \\
\midrule
GPT-5.6             & $+0.108$ $[+0.049, +0.166]$ \\
Gemini-3.8-Flash    & $-0.242$ $[-0.303, -0.185]$ \\
GLM-5.3-Flash       & $+0.033$ $[-0.015, +0.081]$ \\
Gemma-4               & $-0.038$ $[-0.087, +0.010]$ \\
Qwen3.8-Flash       & $-0.076$ $[-0.134, -0.021]$ \\
DeepSeek-V4.1-Flash & $+0.065$ $[+0.027, +0.104]$ \\
\midrule
Pooled              & $-0.025$ $[-0.053, +0.002]$ \\
\bottomrule
\end{tabular}
\end{table}

%% file: appendix/additional_results/single.tex
\subsection{Single-diagnosis control}
\label{app:single}

Asked for one diagnosis, Gemma names a true one in roughly half of the cases at
every $k$ (Table~\ref{tab:single}), while it recovers the exact set in few cases
with or without the true $k$ (Figure~\ref{fig:app_gap_gemma}). Among these hits, the guess is the
presenting-complaint disease in 68\% of cases at $k=2$, 82\% at $k=3$ and
43\% at $k=4$, against 50\%, 33\% and 25\% if every true diagnosis were
equally likely to be named.

\begin{table}[h]
\centering
\small
\caption{Single-diagnosis instruction for Gemma across $k$, 50 cases per $k$. hit@1 is the fraction
of cases where the one guess names any true diagnosis. chief is the fraction
where the guess is the presenting-complaint disease.}
\label{tab:single}
\begin{tabular}{lcccc}
\toprule
 & $k{=}1$ & $k{=}2$ & $k{=}3$ & $k{=}4$ \\
\midrule
hit@1          & 0.54 & 0.56 & 0.34 & 0.42 \\
mean count     & 1.00 & 1.00 & 1.00 & 1.00 \\
chief selected & --   & 0.38 & 0.28 & 0.18 \\
$\tau$         & 11.4 & 11.5 & 10.9 & 11.3 \\
\bottomrule
\end{tabular}
\end{table}

\begin{figure}[h]
\centering
\includegraphics[width=0.49\linewidth]{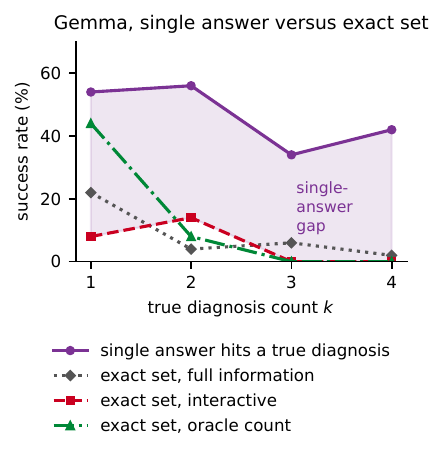}
\caption{Gemma on DDXPlus (50 cases per $k$). Single-answer hits versus exact
sets with full information, in interaction and with the oracle count.}
\label{fig:app_gap_gemma}
\end{figure}

%% file: appendix/additional_results/evidence_support.tex
\subsection{Evidence acquisition and use: supporting analyses}
\label{app:evidence_support}

These analyses extend Section~\ref{sec:tree} on the matched respiratory cohort
(Appendix~\ref{app:evidence_matched}). Table~\ref{tab:focal} scores one fixed
label: a diagnosis alone versus the same diagnosis with a second disease added.
With the full record or the tree's findings, the fixed reader loses it in 3.6
and 2.4 points more cases, GPT-5.6 in 8.3 and 11.3. GPT-5.6's excess loss over
the reader is resolved with the tree's findings ($-8.9$ points,
$[-16.1,-1.8]$) but not with the full record ($-4.8$, $[-11.3,1.2]$).
Table~\ref{tab:evidence_robust} shows that the growing cost of
GPT-5.6's questions holds for other metrics and without the two rhinosinusitis
labels, whose symptom schemas nearly coincide. The direct tree-versus-GPT-5.6 gap
does not clearly widen with $k$, because it mixes questioning and diagnosis.

\begin{table}[h]
\centering
\small
\caption{Recovery (\%) of the same focal diagnosis alone ($k=1$) and with a
second disease ($k=2$, opening on the focal diagnosis), with the paired change
in points.}
\label{tab:focal}
\begin{tabular}{llccc}
\toprule
Evidence & Reader & Alone & With second & Change \\
\midrule
Full record        & GPT-5.6      & 85.7 & 77.4 & $-8.3\;[-13.7,-3.0]$ \\
Full record        & Fixed reader & 96.4 & 92.9 & $-3.6\;[-6.5,-0.6]$ \\
Tree's findings    & GPT-5.6      & 81.0 & 69.6 & $-11.3\;[-17.9,-4.8]$ \\
Tree's findings    & Fixed reader & 92.9 & 90.5 & $-2.4\;[-6.0,1.2]$ \\
GPT-5.6's findings & GPT-5.6      & 70.2 & 61.3 & $-8.9\;[-15.5,-2.4]$ \\
GPT-5.6's findings & Fixed reader & 91.7 & 75.6 & $-16.1\;[-21.4,-10.7]$ \\
\bottomrule
\end{tabular}
\end{table}

\begin{table}[h]
\centering
\small
\caption{Change from $k=1$ to $k=2$ in the tree's advantage over GPT-5.6.}
\label{tab:evidence_robust}
\begin{tabular}{lc}
\toprule
Contrast & $k=2$ minus $k=1$ \\
\midrule
Fixed reader, exact set (Figure~\ref{fig:tree}b) & $+11.9$ pp $[3.6,20.2]$ \\
Fixed reader, label recall & $+12.2$ pp $[7.1,17.6]$ \\
Fixed reader, label log loss & $0.62$ bits $[0.31,0.94]$ \\
Fixed reader, exact set, without rhinosinusitis & $+13.8$ pp $[2.5,23.8]$ \\
Direct tree versus GPT-5.6, exact set & $+7.1$ pp $[-3.0,16.7]$ \\
\bottomrule
\end{tabular}
\end{table}

%% file: appendix/additional_results/designed_k2.tex
\subsection{Designed $k=2$ cohort}
\label{app:designed_k2}

\paragraph{Design.} The generator opens every evaluated $k=2$ case with the
more severe diagnosis, so in the main cohort opening and severity are
confounded. The designed cohort holds severity equal. We sample pairs of
equally severe DDXPlus diagnoses whose symptom profiles, without antecedents,
overlap little (Jaccard at most 0.10) or much (at least 0.30), leaving out
pairs screened as infeasible or unlikely. We realise one patient per pair with
the generator's record-merging rules, 195 patients covering 47 of the 49
diagnoses. Each patient is consulted twice. The opening lists the initial
evidence of one diagnosis and two findings exclusive to it within the pair,
and nothing else changes. Consultations use the 20-question budget and end at
the natural stop, without probes or forced continuation, and background
complaints are withheld. GPT-5.6 completed all 390 consultations and
Gemma-4 389. When a model follows a diagnosis with advice or corrects
itself, we keep the last diagnosis it names. Intervals are 95\% bootstrap
intervals over patients, which are also diagnosis pairs.

\begin{table}[h]
\centering
\small
\caption{Designed $k=2$ cohort, both openings pooled, with 95\% intervals.}
\label{tab:designed_k2}
\begin{tabular}{lcc}
\toprule
& GPT-5.6 & Gemma-4 \\
\midrule
Recall of A & 0.65 $[0.60, 0.69]$ & 0.58 $[0.52, 0.62]$ \\
Recall of B & 0.32 $[0.27, 0.36]$ & 0.41 $[0.37, 0.46]$ \\
A minus B, paired, points & $+33$ $[27, 38]$ & $+16$ $[12, 21]$ \\
\midrule
Final set holds A and B & 0.17 $[0.13, 0.22]$ & 0.24 $[0.19, 0.29]$ \\
Final set holds A only & 0.47 $[0.43, 0.52]$ & 0.33 $[0.29, 0.37]$ \\
Final set holds B only & 0.15 $[0.12, 0.18]$ & 0.17 $[0.14, 0.20]$ \\
Final set holds neither & 0.21 $[0.16, 0.25]$ & 0.26 $[0.21, 0.31]$ \\
\bottomrule
\end{tabular}
\end{table}

\paragraph{Reading.} Both models recover the opening diagnosis more often,
and the effect is twice as large for GPT-5.6. In both, the final set holds A
alone more often than both diagnoses.

%% file: appendix/additional_results/sharing.tex
\subsection{Symptom-sharing cohort}
\label{app:sharing}

\paragraph{Why.} The designed cohort of Appendix~\ref{app:designed_k2} cannot show what a
shared symptom does when it is in view. The two findings after the first
complaint are exclusive to A by construction, and a specific symptom of both
diagnoses opened only 6 of its 200 high-overlap consultations. Its high
overlap also comes mostly from the multi-part pain question, with 2.3 shared
questions per pair of which 0.4 are specific, as defined below.

\paragraph{Design.} We take pairs of equally severe DDXPlus diagnoses whose
symptom lists share at least one specific symptom, one listed by at most 10
of the 49 profiles when a question and its sub-questions count once. We leave
out pairs screened as infeasible or unlikely, pairs whose age windows do not
overlap, and nested pairs such as URTI with influenza, where naming one of
the two can be right. For each pair we realise six patients with the
generator's record-merging rules, drawing A at random, and keep a record only
if it holds, besides A's first finding, a specific and a general yes-or-no
symptom that both diagnoses list, the same for A alone, and a specific
symptom of B alone. This gives 121
patients from 21 pairs. Each patient is consulted twice with the same record,
the opening giving A's first finding followed by the two shared symptoms or
by the two symptoms of A alone. As a control, the 88 patients whose record
allows it are also consulted with A's first finding followed by a specific
and a second symptom of B alone (general in 28), which shows how much a
symptom of B raises B. Nothing is removed from the record, so the patient simulator answers
alike under every opening. Prompts, budget and scorer are those of the designed
cohort, and consultations end at the natural stop. We analyse the 97 patients
whose diagnoses have severity 3 to 5, of whom 65 have all three openings.

\paragraph{Development and analysis.} Two earlier versions failed in small
runs. Removing a shared finding from the record leaked, since the patient
simulator still reported it, and opening with the more severe diagnosis
ended most consultations within a few questions, with B named in none of 16. We
fixed each analysis before its full run, with B named as the primary outcome,
first for shared minus A's symptoms and then for shared minus B's symptoms.
Intervals are 95\% bootstrap intervals over disease pairs.

\begin{table}[h]
\centering\small
\caption{Symptom-sharing cohort, GPT-5.6, diagnoses of severity 3 to 5.
Differences between openings in points, with 95\% intervals.}
\label{tab:sharing}
\begin{tabular}{lcccc}
\toprule
Opening contrast & Patients & B named & A named & Both named \\
\midrule
Shared minus A's symptoms & 97 & $+13$ $[4, 24]$ & $-4$ $[-12, 4]$ & $+5$ $[-4, 14]$ \\
Shared minus B's symptoms & 65 & $-5$ $[-17, 8]$ & $+9$ $[2, 17]$ & $+2$ $[-7, 11]$ \\
B's minus A's symptoms & 65 & $+17$ $[8, 27]$ & $-12$ $[-22, -3]$ & $+2$ $[-3, 7]$ \\
\bottomrule
\end{tabular}
\end{table}

\begin{figure}[h]
\centering
\includegraphics[width=0.55\linewidth]{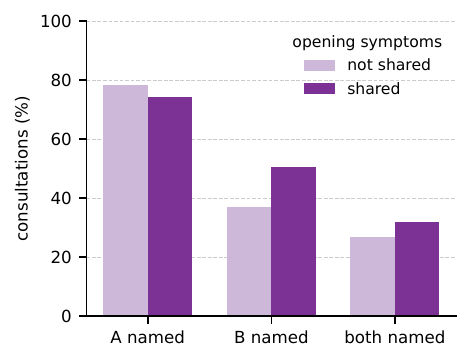}
\caption{Symptom-sharing cohort, GPT-5.6, diagnoses of severity 3 to 5. The
same patients open with A's first finding and two symptoms of A alone (not
shared) or two that both diagnoses share. Share of consultations naming A,
B and both.}
\label{fig:app_sharing_bars}
\end{figure}

\begin{figure}[h]
\centering
\includegraphics[width=0.55\linewidth]{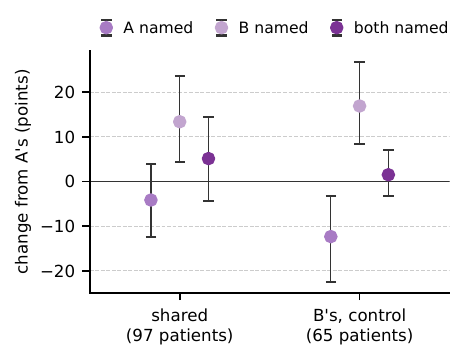}
\caption{Symptom-sharing cohort, GPT-5.6. Change from the opening with A's
symptoms within the same patients, for shared symptoms (the patients of
Figure~\ref{fig:app_sharing_bars}) and for B's symptoms as a control (the patients who
also had that opening), with 95\% intervals over disease pairs.}
\label{fig:app_sharing_paired}
\end{figure}

\paragraph{Reading.} On B, shared symptoms behave much closer to symptoms of
B than to symptoms of A. In the 24 patients whose B opening matches the shared one in
kind, the difference is $+8$ points $[-14, 25]$. Of the net gain in B, about 5 points
name both diagnoses and 8 name B without A. The share naming both rises only
from 27\% to 32\%, not reliably.

Exact-set recovery changes little, 5\% and 8\%.

\paragraph{Main consultations.} In the main DDXPlus consultations with
$k \ge 2$, the chief complaint is in B's profile for 31\% of the second
diagnoses. Within the same diagnosis B, B is then named more often, by 5 to 13
points, for all six models under both conditions, but without that adjustment
the difference is reliable in only 2 of the 12. The pattern also appears with
the full record, so it may reflect related diagnoses rather than the opening.

\paragraph{Exclusive diagnoses.} If the model treated the two diagnoses as
mutually exclusive, symptoms of A alone would lower the odds of B against A and
shared symptoms would leave them unchanged, so this belief also predicts more B
with shared symptoms. The cohort therefore does not separate an exclusive from
a separate reading. Symptoms of B alone lower A ($-12$ points $[-22, -3]$), but
they also leave the opening with one finding of A instead of three.

\paragraph{Limits.} The cohort covers 21 disease pairs and one doctor model,
and its openings are built for the test. The rise in the share naming both
diagnoses is not reliable at this size. With 21 pairs, one doctor model and openings
built for the test, this is a controlled probe and not a population estimate.

%% file: appendix/additional_results/probe_anchoring.tex
\subsection{Anchoring in the main consultations}
\label{app:probe_anchoring}

The designed cohort amplifies anchoring with a three-finding opening. We also
look for it in the main DDXPlus consultations, whose opening is the single
presenting complaint. There the generator opens with the most severe
diagnosis, so we keep only consultations in which another target has the
same severity as the opening diagnosis: 54 cases at $k\ge2$, 216
consultations over the four open models. The probe at each turn is a fresh
call that reads the transcript so far and never sees the doctor's
hypotheses, so it measures what the collected evidence supports.

The opening diagnosis pulls ahead within the first questions and stays ahead
(Figure~\ref{fig:probe_anchoring}). The gap between the opening diagnosis and
an equally severe second diagnosis is $8$ points at the opening
($[3, 13]$) and $16$ points when the doctor stops ($[2, 29]$). The opening
diagnosis first appears at a median of four questions, the other diagnosis at
eleven, and the other diagnosis never appears before the stop in 46\%
($[37, 55]$) of consultations, against 31\% ($[23, 38]$) for the opening
diagnosis. In these runs the doctor stops at a median of twelve of its twenty
questions, so the budget does not bind. The gap still remains at the cap, after forced
continuation. Since the probe is neutral, the widening comes from the questions
the doctor chose: the evidence it collects follows the opening diagnosis.

\begin{figure}[h]
\centering
\includegraphics[width=0.55\linewidth]{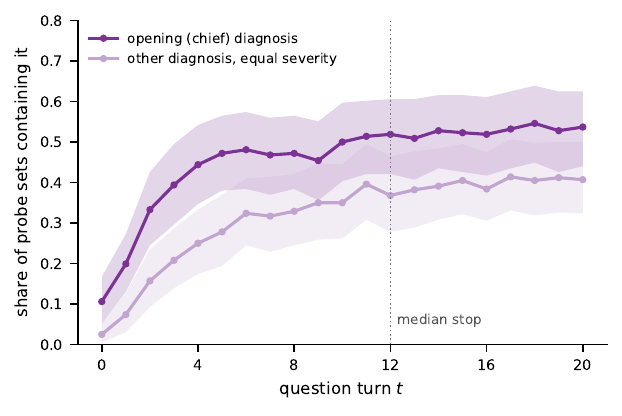}
\caption{Share of probe sets holding the opening diagnosis and an equally severe
second one (DDXPlus, four open models, 50 cases per $k$). Bands are 95\%
bootstrap intervals.}
\label{fig:probe_anchoring}
\end{figure}

%% file: appendix/additional_results/compliance.tex
\subsection{Post-stop behaviour across models}
\label{app:compliance}

The four open models show the pattern GPT-5.6 shows in
Section~\ref{sec:stopping} (Table~\ref{tab:compliance}). Their probe runs, like
those of Appendix~\ref{app:probe_anchoring}, predate the main evaluation and
place the question counter in the system message
(Appendix~\ref{app:sim:prompts}). There the open models stop at a median of 8
to 14 questions; in the main evaluation, with the counter after each answer,
they stop at 17 to 20 and often use the whole budget. GPT-5.6 stops at a median
of seven in both, and its probe run follows the main protocol. GPT-5.6 was probed
after its run on the stored transcripts with the same probe prompt; a probe
sees only the dialogue prefix, so this is the same measurement. Under the same probe prompt, F1 at the cap is no
higher than at the natural stop, and models differ mainly in how often they
refuse to keep asking. The probe set grows after the stop, but mostly with
wrong diagnoses (Figure~\ref{fig:app_composition}). At $k=1$ further questions
lower exact-set recovery, from 25\% after three questions to 15\% at the cap;
at $k\ge3$ a probe almost never holds the complete set
(Figure~\ref{fig:app_exact}). Aligned at the natural stop, neither probe
Jaccard nor exact-set recovery changes over six forced questions
(Figure~\ref{fig:app_aligned}); Figure~\ref{fig:app_stop_models} shows each
model.

\paragraph{Sampling nulls.} A probe is a single stochastic readout, so statistics
taken over many probes grow with the number of draws. Two such statistics do
not survive a null. The share of true diagnoses that appear in any probe rises
after the stop by 0.066 per label, and by 0.061 for the $k$ wrong labels whose
DDXPlus profiles best match the case (difference $0.005$, $[-0.013, 0.024]$).
The best prefix F1 (0.62 pooled) is close to the maximum over the same number
of draws from a readout that learns nothing, fitted to the probes after the
stop (0.54); the excess is 0.08 ($[0.07, 0.09]$). Of the true diagnoses found
before the stop, 18\% are absent from the probe at the stop under the same
prompt, and 7\% from every later probe. We therefore report single probes in
the main text.

\begin{table}[h]
\centering
\small
\caption{Post-stop readouts. natural is the F1 of the voluntary committed
answer; stop and cap are the F1 of the single probe at the natural stop and at
the last forced question. Refusals count post-stop turns where the model
committed instead of asking. Four open models, 50 cases per $k$.}
\label{tab:compliance}
\begin{tabular}{lccccc}
\toprule
Model & natural & probe at stop & probe at cap & refusals & forced turns \\
\midrule
Gemma-4    & 0.426 & 0.457 & 0.424 & 404  & 7.6 \\
GLM      & 0.401 & 0.387 & 0.363 & 1126 & 7.1 \\
Qwen     & 0.309 & 0.357 & 0.373 & 64   & 5.5 \\
DeepSeek & 0.354 & 0.381 & 0.359 & 682  & 8.9 \\
\bottomrule
\end{tabular}
\end{table}

\begin{figure}[h]
\centering
\includegraphics[width=\linewidth]{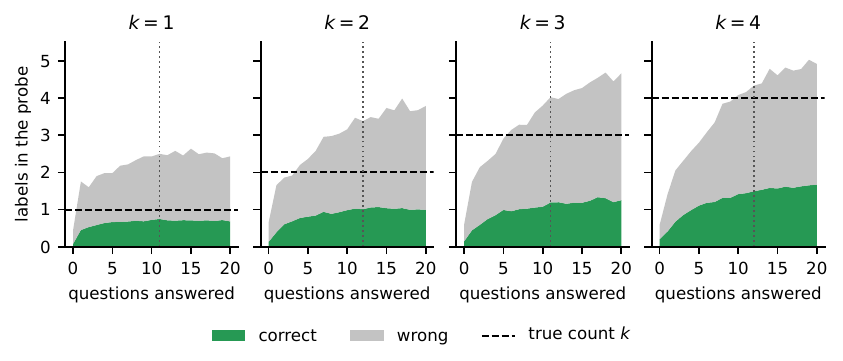}
\caption{Composition of a single probe by question turn (four open models, 50
cases per $k$). Dashed lines mark $k$, dotted lines the median stop.}
\label{fig:app_composition}
\end{figure}

\begin{figure}[h]
\centering
\includegraphics[width=0.48\linewidth]{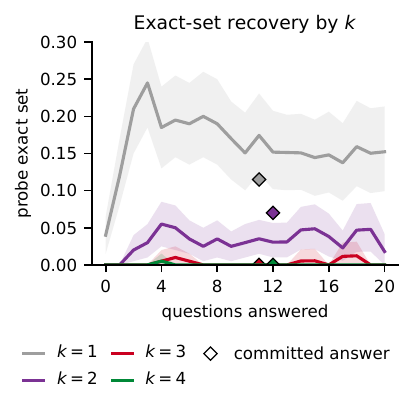}\hfill
\includegraphics[width=0.48\linewidth]{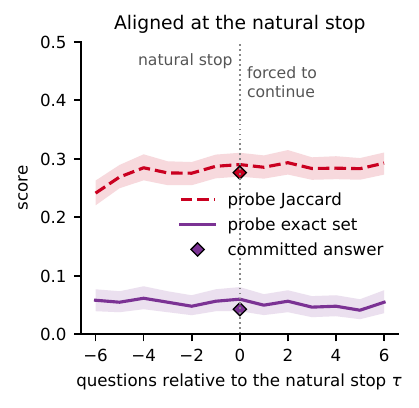}
\caption{Left, single-probe exact-set recovery by $k$. Right, probes aligned at the
natural stop $\tau$ (586 consultations). Four open models, 50 cases per $k$.}
\label{fig:app_exact}
\label{fig:app_aligned}
\end{figure}

\begin{figure}[h]
\centering
\includegraphics[width=\linewidth]{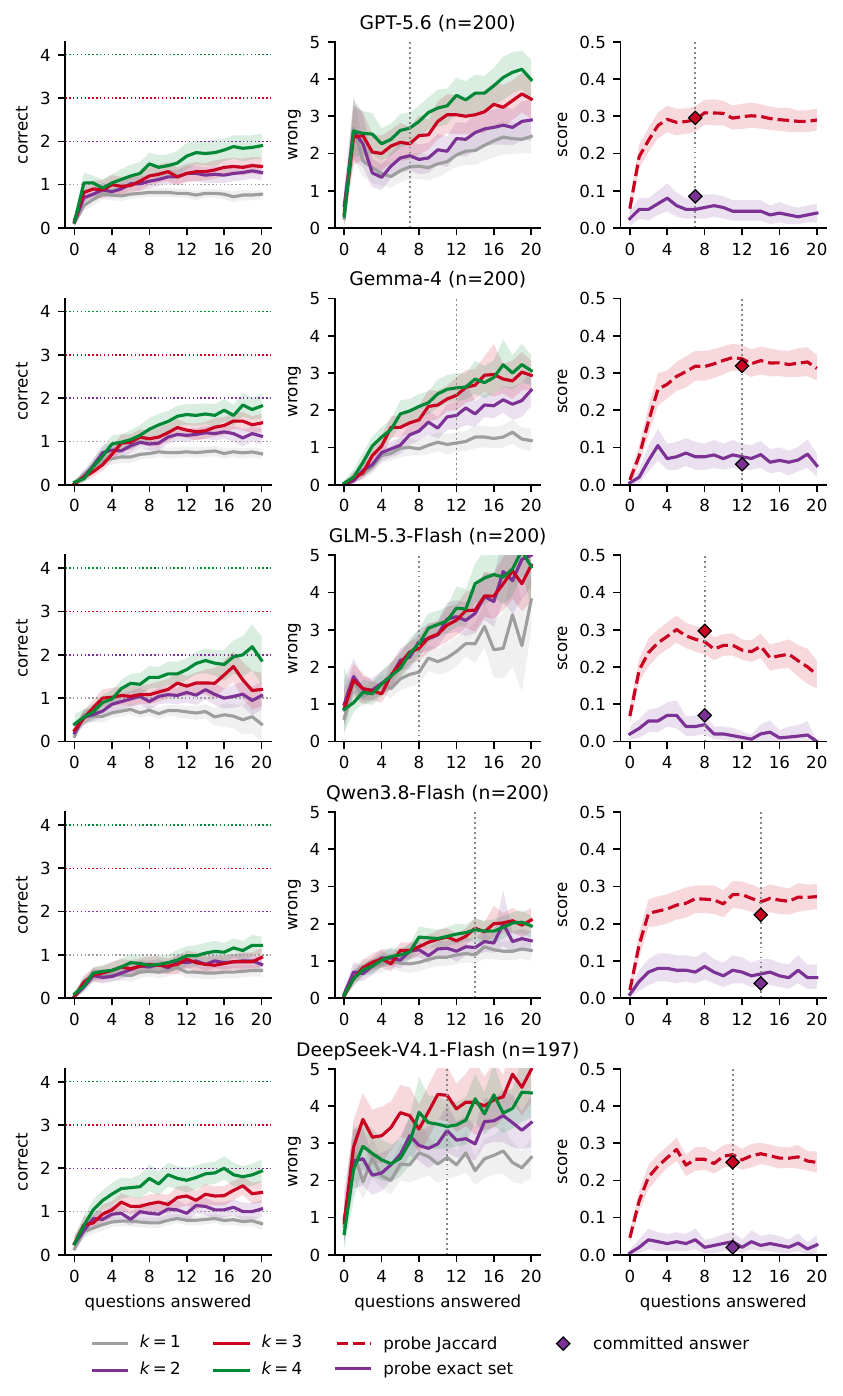}
\caption{Figure~\ref{fig:retention} per model (50 cases per $k$). Probes listing
40 or more labels are excluded.}
\label{fig:app_stop_models}
\end{figure}

%% file: appendix/theory.tex
\section{Formal Framework}
\label{app:theory}

This appendix states the task formally and contrasts two idealised
reference doctors, a \emph{set tracker}, which treats every diagnosis as
its own hypothesis, and a \emph{single-hypothesis tracker}, which carries
one working diagnosis at a time. It does not model how a language model
computes its answer. It states what each reference implies and which of
these implications our data can test.

\input{appendix/theory/setup.tex}

\input{appendix/theory/reference.tex}

\input{appendix/theory/results.tex}

%% file: appendix/theory/setup.tex
\subsection{Setup}
\label{app:theory:setup}

\begin{definition}[Cases and consultations]
\label{def:labels}
\label{def:process}
A case has a target $\mathcal{D} \subseteq \mathcal{L}$ with
$k = |\mathcal{D}| \ge 1$ and a record $x \subseteq \mathcal{F}$ of
findings. Each $d \in \mathcal{D}$ contributes the findings $v_d
\subseteq x$ of its path or source record (Section~\ref{sec:method}).
The \emph{informative set} $Q_d$ of a label contains the findings the
source lists for it, namely the variables on its ePOCT+ paths or the evidences
DDXPlus lists for the pathology. The doctor sees $\mathcal{L}$ and the
opening, whose findings form $O \subseteq x$. At each turn it asks for
findings and receives $\mathbf{1}\{q \in x\}$ for each finding $q$ asked,
or stops at $\tau \le T = 20$; it then names a set
$\widehat{\mathcal{D}}$. Under full information the whole record is
disclosed and $\tau = 0$.
\end{definition}

\begin{definition}[Scores]
\label{def:observables}
Per case, $u = |\mathcal{D} \cap \widehat{\mathcal{D}}|$ is the number of
correct diagnoses and $\widehat k = |\widehat{\mathcal{D}}|$ the
predicted count. Over cases with target $\mathcal{D}$, the per-label
recall is $a^{\mathcal{D}}_d = P(d \in \widehat{\mathcal{D}})$, and
$\alpha_d = a^{\{d\}}_d$ is the recall of $d$ when it occurs alone.
$P(u \ge 1)$ is the rate a single-answer evaluation credits and
$P(\widehat{\mathcal{D}} = \mathcal{D})$ the exact-set rate.
\end{definition}

\begin{definition}[Grounding]
\label{def:grounding}
A true label $d$ is \emph{grounded} at $t$ if $Q_d$ meets the findings
disclosed so far, which are the opening and the findings asked in interaction, or
the whole record under full information. $G_t \subseteq \mathcal{D}$ is
the set of grounded true labels and $m_0 = |G_0|$.
\end{definition}

\begin{assumption}[Grounded recovery]
\label{as:grounded}
A true label that is not grounded when the doctor stops is named with
probability at most $\beta_d$, its rate of being named without evidence,
$P(d \in \widehat{\mathcal{D}} \mid d \in \mathcal{D},\ d \notin G_\tau)
\le \beta_d$.
\end{assumption}

The opening counts as evidence, so the diagnosis that supplies it can be
named without any question. Under full information every true label is
grounded and the assumption is void.

%% file: appendix/theory/reference.tex
\subsection{Two Reference Doctors}
\label{app:theory:reference}

\begin{definition}[Set tracker]
\label{def:settracker}
A set tracker satisfies (N), \emph{no interference}, which means
$a^{\mathcal{D}}_d = \alpha_d$ for every target $\mathcal{D} \ni d$. A
label is named as often in company as alone.
\end{definition}

\begin{definition}[Single-hypothesis tracker]
\label{def:concentrated}
A single-hypothesis tracker carries a working hypothesis $W_t$, set by
the opening at $t = 0$, and satisfies
\begin{itemize}
\item[(X)] \emph{exclusive belief}. Its support is a distribution $s_t$
  on $\mathcal{L}$ with $W_t = \arg\max_d s_t(d)$, and it names the
  labels with $s_t(d) \ge \theta$;
\item[(C)] \emph{concentration}. It asks only findings in $Q_{W_t}$.
\end{itemize}
\end{definition}

%% file: appendix/theory/results.tex
\subsection{Results}
\label{app:theory:results}

\begin{lemma}[Single-answer gap]
\label{lem:sandwich}
For any doctor and target,
$P(\widehat{\mathcal{D}} = \mathcal{D})
\le \min_{d \in \mathcal{D}} a^{\mathcal{D}}_d
\le \max_{d \in \mathcal{D}} a^{\mathcal{D}}_d \le P(u \ge 1)$.
At $k = 1$ the two ends differ by
$P(d \in \widehat{\mathcal{D}},\ \widehat k > 1)$.
\end{lemma}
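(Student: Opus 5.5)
The argument is pure event inclusion followed by monotonicity of probability. Fix a doctor and a target $\mathcal{D}$, and work on the probability space of consultations with that target, so that every quantity in Definition~\ref{def:observables} is the probability of an event in this one space.

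For the left inequality, take any $d\in\mathcal{D}$. The event $\{\widehat{\mathcal{D}}=\mathcal{D}\}$ is contained in $\{d\in\widehat{\mathcal{D}}\}$, so
\[
P(\widehat{\mathcal{D}}=\mathcal{D})\le a^{\mathcal{D}}_d .
\]
This holds for every $d$, so it also holds for the minimum over $d\in\mathcal{D}$. The middle inequality is trivial because $\mathcal{D}$ is nonempty ($k\ge1$).

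For the right inequality, take any $d\in\mathcal{D}$. The event $\{d\in\widehat{\mathcal{D}}\}$ is contained in $\{u\ge1\}$, since $d$ is then a correct diagnosis and $u=|\mathcal{D}\cap\widehat{\mathcal{D}}|\ge1$. Hence $a^{\mathcal{D}}_d\le P(u\ge1)$, and taking the maximum over $d$ finishes the chain.

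For $k=1$, write $\mathcal{D}=\{d\}$. Then $\{u\ge1\}=\{d\in\widehat{\mathcal{D}}\}$ exactly. The exact-set event is
\[
\{\widehat{\mathcal{D}}=\{d\}\}=\{d\in\widehat{\mathcal{D}},\ \widehat k=1\}.
\]
So $\{u\ge1\}$ is the disjoint union of $\{\widehat{\mathcal{D}}=\mathcal{D}\}$ and $\{d\in\widehat{\mathcal{D}},\ \widehat k>1\}$. By additivity,
\[
P(u\ge1)-P(\widehat{\mathcal{D}}=\mathcal{D})=P(d\in\widehat{\mathcal{D}},\ \widehat k>1).
\]

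There is no real obstacle. The only care needed is to keep everything conditional on the same target $\mathcal{D}$ so that all probabilities live in one space, and to note that $\widehat k\ge1$ whenever $d\in\widehat{\mathcal{D}}$, which makes the split above exhaustive.
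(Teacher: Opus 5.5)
Your proof is correct and follows the paper's argument. The paper uses the per-case ordering $\mathbf{1}\{\widehat{\mathcal{D}}=\mathcal{D}\}\le\mathbf{1}\{d\in\widehat{\mathcal{D}}\}\le\mathbf{1}\{u\ge1\}$, which is your pair of event inclusions, and then takes expectations. It also describes the $k=1$ gap as the event that $d$ is named together with extra labels, which your disjoint-union and additivity step states explicitly.
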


\begin{proof}
Per case,
$\mathbf{1}\{\widehat{\mathcal{D}} = \mathcal{D}\} \le
\mathbf{1}\{d \in \widehat{\mathcal{D}}\} \le \mathbf{1}\{u \ge 1\}$ for
every $d \in \mathcal{D}$; take expectations. At $k = 1$, $u \ge 1$
without an exact match means $d$ is named with extra labels.
\end{proof}

A single-answer score thus hides extra labels at $k = 1$ and, at
$k \ge 2$, at least the spread of recall across the true labels.

\begin{lemma}[Count and identity]
\label{lem:count}
For any doctor, $u \le \min(k, \widehat k)$ and
$P(\widehat{\mathcal{D}} = \mathcal{D}) = P(\widehat k = k)\,
P(\widehat{\mathcal{D}} = \mathcal{D} \mid \widehat k = k)$.
\end{lemma}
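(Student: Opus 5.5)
The bound $u \le \min(k,\widehat k)$ is a per-case statement about sets, and the factorisation follows from the law of total probability, so I would prove the two parts separately.

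For the first part, fix a case. By Definition~\ref{def:observables}, $u = |\mathcal{D} \cap \widehat{\mathcal{D}}|$. The intersection is a subset of both $\mathcal{D}$ and $\widehat{\mathcal{D}}$, and cardinality is monotone under inclusion. Hence $u \le |\mathcal{D}| = k$ and $u \le |\widehat{\mathcal{D}}| = \widehat k$, which gives $u \le \min(k,\widehat k)$ on every case. No assumption about the doctor enters, consistent with ``for any doctor''.

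For the factorisation, I would first note the event inclusion $\{\widehat{\mathcal{D}} = \mathcal{D}\} \subseteq \{\widehat k = k\}$: equal sets have equal cardinalities. Therefore
\[
P(\widehat{\mathcal{D}} = \mathcal{D}) = P(\widehat{\mathcal{D}} = \mathcal{D},\ \widehat k = k) = P(\widehat k = k)\, P(\widehat{\mathcal{D}} = \mathcal{D} \mid \widehat k = k),
\]
where the second equality is the definition of conditional probability. The probabilities are taken over cases with the fixed target, as in Definition~\ref{def:observables}.

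The only point needing care, and the nearest thing to an obstacle, is the degenerate case $P(\widehat k = k) = 0$, where the conditional probability is undefined. In that case the inclusion forces $P(\widehat{\mathcal{D}} = \mathcal{D}) = 0$, so the identity holds under any convention for the conditional factor; I would state this explicitly. I would close by remarking that the factorisation separates exact-set recovery into a count term and an identity term, which is how Figure~\ref{fig:app_count}(b) splits its outcomes into exact set, right count with a wrong set, and wrong count.
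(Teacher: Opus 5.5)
Your proposal is correct and matches the paper's own argument: the paper's one-line proof likewise notes that $u$ counts elements lying in both sets, which gives the bound, and that an exact match has the right count, which gives the event inclusion behind the factorisation. Your explicit handling of the degenerate case $P(\widehat k = k) = 0$ is a small addition that the paper leaves implicit.
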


\begin{proof}
$u$ counts elements of both sets, and an exact match has the right
count.
\end{proof}

On DDXPlus with full information, the six models return the right count
in $41$, $28$, $19$ and $14\%$ of cases at $k = 1, \dots, 4$, and among
these the set is right in $84$, $47$, $22$ and $7\%$. Both factors fall
with $k$. A disclosed count can fix only the first, unless it also
changes which labels are named (Appendix~\ref{app:oracle}).

\begin{proposition}[Set tracker]
\label{prop:oneshot}
Under (N), $\mathbb{E}[u \mid \mathcal{D}] = \sum_{d \in \mathcal{D}}
\alpha_d$, and the recall of a label does not change when another label
is added to its target.
\end{proposition}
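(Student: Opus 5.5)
The plan is to write the number of correct diagnoses as a sum of indicators and apply linearity of expectation. For a fixed target $\mathcal{D}$ we have, case by case,
\[
u = |\mathcal{D} \cap \widehat{\mathcal{D}}| = \sum_{d \in \mathcal{D}} \mathbf{1}\{d \in \widehat{\mathcal{D}}\}.
\]
Taking expectations over the cases with target $\mathcal{D}$ gives $\mathbb{E}[u \mid \mathcal{D}] = \sum_{d \in \mathcal{D}} P(d \in \widehat{\mathcal{D}}) = \sum_{d \in \mathcal{D}} a^{\mathcal{D}}_d$ by Definition~\ref{def:observables}. This step holds for any doctor and needs no independence between labels, since linearity of expectation makes no such demand.

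Next we invoke (N) from Definition~\ref{def:settracker}. It sets $a^{\mathcal{D}}_d = \alpha_d$ for every $d \in \mathcal{D}$, which yields the identity $\mathbb{E}[u \mid \mathcal{D}] = \sum_{d \in \mathcal{D}} \alpha_d$.

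For the second claim, take a target $\mathcal{D} \ni d$ and a label $d' \notin \mathcal{D}$, and set $\mathcal{D}' = \mathcal{D} \cup \{d'\}$. Applying (N) to both targets gives $a^{\mathcal{D}'}_d = \alpha_d = a^{\mathcal{D}}_d$, so adding $d'$ leaves the recall of $d$ unchanged.

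No step here is a real obstacle; the proposition follows almost directly from the definition. The one point that needs care is to state that (N) is assumed for every target that is realised in the benchmark, so that both $\mathcal{D}$ and $\mathcal{D}'$ fall within its scope. I would also remark that, as a consequence, $\mathbb{E}[u \mid \mathcal{D}]$ grows additively with $k$ for a set tracker. This is the behaviour that Section~\ref{sec:multiplicity} tests against the observed numbers of correct diagnoses.
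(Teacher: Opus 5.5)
Your proof is correct and matches the paper's argument: write $u=\sum_{d\in\mathcal{D}}\mathbf{1}\{d\in\widehat{\mathcal{D}}\}$, take expectations by linearity, and substitute $a^{\mathcal{D}}_d=\alpha_d$ from (N), which you also apply to both $\mathcal{D}$ and $\mathcal{D}\cup\{d'\}$ for the second claim. Your remark that the linearity step holds for any doctor, with no independence between labels, is a useful point that the paper leaves implicit.
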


\begin{proof}
$u = \sum_{d \in \mathcal{D}} \mathbf{1}\{d \in
\widehat{\mathcal{D}}\}$; take expectations and apply (N).
\end{proof}

Exact-set recovery can fall with $k$ even under (N), since every label must
be named and no wrong one added; recall should not. On DDXPlus, (N)
predicts about $4 \times 0.885 = 3.5$ correct diagnoses at $k = 4$ with
full information and $4 \times 0.703 = 2.8$ in interaction; the models
find $2.2$ and $1.3$ (Appendix~\ref{app:interaction}). This pooled check
assumes that labels at $k = 4$ are as easy alone as those at $k = 1$.
Proposition~\ref{prop:separate} gives a paired test that does not.

\begin{proposition}[Separate hypotheses imply no interference]
\label{prop:separate}
\label{def:separate}
Let a doctor hold the diagnoses as \emph{separate hypotheses}, meaning that under full
information, whether it names $d$ depends only on the findings of the
record in $Q_d$. Let no other component meet $Q_d$. If $d$ is scored
alone and with other diagnoses, on records built from the same component
$v_d$ and without background, the doctor names $d$ equally often in
both cases.
\end{proposition}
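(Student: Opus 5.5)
The plan is to show that the part of each record the doctor consults in order to decide on $d$ is identical in the two scenarios, and then to apply the separate-hypotheses property. Write $x^{\mathrm{alone}}$ for the record in which $d$ is scored alone and $x^{\mathrm{comb}}$ for the record in which it is scored together with $\mathcal{D}\ni d$. Both records are built from the same component $v_d$ and neither carries background, so $x^{\mathrm{alone}} = v_d$ and $x^{\mathrm{comb}} = \bigcup_{d'\in\mathcal{D}} v_{d'}$. Under full information the whole record is disclosed, with $\tau=0$ (Definition~\ref{def:process}). By hypothesis, the event $d\in\widehat{\mathcal{D}}$ is then a function, possibly randomised, of $x\cap Q_d$ alone. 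The randomisation must not depend on the rest of $x$; we read ``depends only on'' as saying that the conditional law of $\mathbf{1}\{d\in\widehat{\mathcal{D}}\}$ given $x$ factors through $x\cap Q_d$.

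The key step is to compute this restriction in both cases:
\begin{equation*}
x^{\mathrm{comb}}\cap Q_d \;=\; (v_d\cap Q_d)\,\cup \bigcup_{d'\ne d}(v_{d'}\cap Q_d) \;=\; v_d\cap Q_d \;=\; x^{\mathrm{alone}}\cap Q_d .
\end{equation*}
The middle equality uses the assumption that no other component meets $Q_d$. It follows that the two records present the same input to the decision about $d$, so $P(d\in\widehat{\mathcal{D}})$ is the same in both scenarios.

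To obtain the recall statement of Definition~\ref{def:observables}, we condition on the realised component $v_d$. The equality holds pointwise in $v_d$, so it survives averaging over the distribution of $v_d$, provided that distribution is the same in the two settings. This is where ``records built from the same component'' matters: we take it to mean that the alone and combined records are paired on the same draw of $v_d$, as in the matched respiratory cohort. Summing over the paired cases then gives $a^{\mathcal{D}}_d=\alpha_d$, which is exactly (N) of Definition~\ref{def:settracker} restricted to such pairs.

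The main obstacle is the merging step, which is where the claimed equality could fail. The identity $x^{\mathrm{comb}} = \bigcup v_{d'}$ assumes that merging is a plain union. DDXPlus, however, merges ordered values by maximum and can therefore alter a shared variable's value. The ``no other component meets $Q_d$'' hypothesis must be read as ruling this out: no other component touches any variable in $Q_d$, so the merge leaves the $Q_d$-coordinates of $v_d$ unchanged. I would state this reading explicitly and note that it also excludes any alteration of $v_d\cap Q_d$ by the rejection rules, since accepted records retain $v_d$.
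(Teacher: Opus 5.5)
Your proposal is correct and follows the paper's argument: both records meet $Q_d$ in exactly $v_d \cap Q_d$ because no other component meets $Q_d$, and the separate-hypotheses property then forces $d$ to be named equally often. Your additional remarks are not a different route but make explicit what the paper's one-line proof leaves implicit: that a randomised decision must factor through $x \cap Q_d$, that alone and combined records are paired on the same draw of $v_d$, and that the DDXPlus maximum-merge must leave the $Q_d$-coordinates of $v_d$ unchanged.
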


\begin{proof}
Both records meet $Q_d$ in the same findings, $v_d \cap Q_d$, and the
decision on $d$ depends only on these.
\end{proof}

Read the other way, if adding a second disease lowers the recall of the
first, the decision on the first depends on findings of the second. The
matched respiratory cohort meets the conditions except for shared
findings. Each family scores the same components alone and combined,
without background, with the opening on the scored diagnosis
(Appendix~\ref{app:evidence_matched}). There a fixed per-label reader
loses $3.6$ points with the full record, so shared findings matter little,
while GPT-5.6 loses $8.3$ with the full record and $11.3$ with the tree's
findings (Table~\ref{tab:focal}). Its excess loss over the reader is
resolved with the tree's findings but not with the full record. The model
does not hold co-occurring diagnoses as separate hypotheses; with the
complete record the evidence for this is weaker.

\begin{proposition}[Exclusive belief]
\label{prop:exclusive}
Let (X) hold.
\begin{itemize}
\item[(i)] At most $\lfloor 1/\theta \rfloor$ labels are named. If a
  label $A$ has support at least $1 - \varepsilon$ and new evidence gives
  another label support $s'$, then $A$ loses at least
  $s' - \varepsilon$.
\item[(ii)] If $s_t(W_t) \ge 1 - \varepsilon$ with
  $\varepsilon \le \tfrac12$, every finding $q$ has expected information
  gain, under the doctor's own belief over a single label $Y$,
  $I(Y; \mathbf{1}\{q \in x\} \mid h_t) \le h_2(\varepsilon) +
  \varepsilon \log_2(|\mathcal{L}| - 1)$, where $h_2$ is binary entropy
  and $h_t$ the transcript at $t$.
\end{itemize}
\end{proposition}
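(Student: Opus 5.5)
Both parts use only that $s_t$ is a probability distribution on $\mathcal{L}$, as (X) requires, together with elementary entropy bounds. For the first claim of (i), let $N$ be the set of named labels. Each $d \in N$ has $s_t(d) \ge \theta$, so $1 = \sum_{d} s_t(d) \ge \sum_{d \in N} s_t(d) \ge |N|\,\theta$. Hence $|N| \le 1/\theta$, and since $|N|$ is an integer, $|N| \le \lfloor 1/\theta \rfloor$.

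For the second claim of (i), write $s$ for the support before the new evidence and $s^{+}$ for the support after it. By hypothesis $s(A) \ge 1-\varepsilon$, and new evidence gives a label $B \ne A$ support $s^{+}(B) = s'$. Normalisation gives $s^{+}(A) \le 1 - s^{+}(B) = 1 - s'$. The loss is therefore $s(A) - s^{+}(A) \ge (1-\varepsilon) - (1-s') = s' - \varepsilon$. This is the formal sense of the main-text remark that under (X) a second diagnosis can only gain support at the expense of the first.

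For (ii), let $Z = \mathbf{1}\{q \in x\}$, and condition throughout on $h_t$, with $Y \sim s_t$ under the doctor's own belief. I use $I(Y;Z \mid h_t) \le H(Y \mid h_t)$, since conditional entropy is nonnegative. It then suffices to bound the entropy of a distribution on $\mathcal{L}$ whose maximal atom $W_t$ has mass $p = s_t(W_t) \ge 1-\varepsilon$. Apply the grouping rule with the indicator $E = \mathbf{1}\{Y \ne W_t\}$:
\begin{equation*}
H(Y) = h_2(1-p) + (1-p)\,H(Y \mid Y \ne W_t).
\end{equation*}
The conditional entropy is at most $\log_2(|\mathcal{L}|-1)$, because it is supported on $|\mathcal{L}|-1$ labels. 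So $H(Y) \le h_2(1-p) + (1-p)\log_2(|\mathcal{L}|-1)$. It remains to replace $1-p$ by $\varepsilon$.

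That replacement is the one step needing care. The map $\delta \mapsto h_2(\delta) + \delta \log_2(|\mathcal{L}|-1)$ has derivative $\log_2\frac{1-\delta}{\delta} + \log_2(|\mathcal{L}|-1)$. This derivative is nonnegative whenever $\delta \le \frac12$ and $|\mathcal{L}| \ge 2$, so the map is nondecreasing on $[0,\tfrac12]$. Since $1-p \le \varepsilon \le \tfrac12$, substituting $\varepsilon$ for $1-p$ gives the stated bound $h_2(\varepsilon) + \varepsilon\log_2(|\mathcal{L}|-1)$. This monotonicity is exactly where the assumption $\varepsilon \le \tfrac12$ enters. The argument is the standard Fano-type bound. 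Under (C) the doctor would additionally restrict $q$ to $Q_{W_t}$, but the bound holds for every finding regardless.
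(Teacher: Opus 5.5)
Your proposal is correct and follows the paper's proof essentially step for step: normalisation of $s_t$ gives both claims of (i), and for (ii) you bound the information gain by $H(Y\mid h_t)$, apply the grouping decomposition around $W_t$, and use monotonicity on $[0,\tfrac12]$ to replace $1-s_t(W_t)$ by $\varepsilon$. Your derivative check of the full map $\delta \mapsto h_2(\delta)+\delta\log_2(|\mathcal{L}|-1)$ is just a more explicit version of the paper's remark that $h_2$ increases on $[0,\tfrac12]$.
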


\begin{proof}
(i) Support sums to one, so at most $1/\theta$ labels hold $\theta$
each, and $A$ keeps at most $1 - s'$. (ii) By the grouping property of
entropy, $H(Y \mid h_t) \le h_2(1 - s_t(W_t)) + (1 - s_t(W_t))
\log_2(|\mathcal{L}| - 1)$, which is at most the bound since $h_2$
increases on $[0, \tfrac12]$; the gain of any finding is at most
$H(Y \mid h_t)$.
\end{proof}

Under (X) a second diagnosis gains support only at the expense of the
first. By (ii), once the working diagnosis is settled no finding is worth
much to the doctor, so it has no reason to look for a second one.

\begin{proposition}[Grounding in interaction]
\label{prop:concentration}
Under Assumption~\ref{as:grounded},
$\mathbb{E}[u] \le \mathbb{E}|G_\tau| + \sum_{d \in \mathcal{D}}
\beta_d$. If no finding asked lies in $Q_d$ for a true label
$d \notin G_0$, then $G_\tau = G_0$ and
$\mathbb{E}[u] \le m_0 + \sum_{d \in \mathcal{D} \setminus G_0} \beta_d$
for every $\tau \le T$.
\end{proposition}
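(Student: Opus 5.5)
The plan is to decompose $u$ label by label and split each true label according to whether it is grounded when the doctor stops. Write
\[
u=\sum_{d\in\mathcal D}\mathbf 1\{d\in\widehat{\mathcal D}\}
=\sum_{d\in\mathcal D}\mathbf 1\{d\in\widehat{\mathcal D},\ d\in G_\tau\}
+\sum_{d\in\mathcal D}\mathbf 1\{d\in\widehat{\mathcal D},\ d\notin G_\tau\}.
\]

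The first sum is bounded pointwise by $|G_\tau|$, since $G_\tau\subseteq\mathcal D$ by Definition~\ref{def:grounding}. For the second sum, I take expectations term by term. Each term equals $P(d\in\widehat{\mathcal D},\, d\notin G_\tau)=P(d\notin G_\tau)\,P(d\in\widehat{\mathcal D}\mid d\notin G_\tau)$, which is at most $\beta_d$ by Assumption~\ref{as:grounded}. The target is fixed, so $d\in\mathcal D$ is given. When $P(d\notin G_\tau)=0$ the term is zero and the conditioning issue disappears. Summing the two parts gives $\mathbb E[u]\le\mathbb E|G_\tau|+\sum_{d\in\mathcal D}\beta_d$.

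The second claim needs one extra observation. The disclosed findings only grow with $t$, so $G_t$ is nondecreasing. A true label can enter $G_\tau\setminus G_0$ only if some asked finding lies in $Q_d$. The hypothesis rules this out for every $d\notin G_0$, so $G_\tau=G_0$ almost surely and $|G_\tau|=m_0$.

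I would then sharpen the $\beta$ sum rather than reuse the first bound directly. Labels in $G_0$ are always grounded, so their second-sum terms vanish and only $d\in\mathcal D\setminus G_0$ contribute. This gives $\mathbb E[u]\le m_0+\sum_{d\in\mathcal D\setminus G_0}\beta_d$. The argument never uses the value of $\tau$, so the bound holds for every $\tau\le T$, including forced continuation up to the cap.

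The only delicate step is the conditioning in Assumption~\ref{as:grounded}. That assumption is stated with conditioning on the event $d\notin G_\tau$, and $\tau$ and $G_\tau$ are random and depend on the doctor's own choices. The product form $P(d\notin G_\tau)\,P(\cdot\mid\cdot)\le\beta_d$ handles this without any stopping-time argument, so I expect no real obstacle.
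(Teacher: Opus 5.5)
Your proposal is correct and follows the paper's proof. Both split $u$ into grounded and ungrounded true labels, bound each ungrounded term by $\beta_d$ via Assumption~\ref{as:grounded}, and use the fact that grounding can only grow through asked findings to get $G_\tau = G_0$. Your remark that labels in $G_0$ never contribute an ungrounded term justifies restricting the $\beta$-sum to $\mathcal{D}\setminus G_0$, a step the paper leaves implicit.
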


\begin{proof}
$u \le |G_\tau| + \sum_{d \in \mathcal{D}}
\mathbf{1}\{d \in \widehat{\mathcal{D}},\ d \notin G_\tau\}$, and each
term of the sum has expectation at most $\beta_d$. If no finding asked
meets $Q_d$, $d$ is grounded at $\tau$ only if it was at $0$.
\end{proof}

More questions thus raise the correct count only if they reach the
informative set of a missing diagnosis, which under (C) requires the
working hypothesis to move. In the main consultations an equally severe
second diagnosis never enters a probe set before the stop in 46\%
of consultations (Appendix~\ref{app:probe_anchoring}).